\documentclass[10pt]{article}

\usepackage[utf8]{inputenc} % allow utf-8 input
\usepackage[T1]{fontenc}    % use 8-bit T1 fonts
\usepackage{hyperref}       % hyperlinks
\usepackage{url}            % simple URL typesetting
\usepackage{booktabs}       % professional-quality tables
\usepackage{amsfonts}       % blackboard math symbols
\usepackage{nicefrac}       % compact symbols for 1/2, etc.
\usepackage{microtype}      % microtypography
\usepackage{xcolor}         % colors

\usepackage{amsmath}
\usepackage{amssymb}
\usepackage{mathtools}
\usepackage{amsthm}

\usepackage{algorithm}
\usepackage{algorithmic}
\usepackage{subcaption}
\usepackage{tabularx}

\theoremstyle{plain}
\newtheorem{theorem}{Theorem}[section]

\newtheorem{lemma}[theorem]{Lemma}
\newtheorem{corollary}[theorem]{Corollary}
\theoremstyle{definition}

\newtheorem{assumption}[theorem]{Assumption}
\theoremstyle{remark}

\usepackage{multirow}
\usepackage[table]{xcolor}
\usepackage{bm}
\usepackage{pifont}
\usepackage{adjustbox}

\usepackage{placeins}

\usepackage[textwidth=0.7in, textsize=small]{todonotes}
\setuptodonotes{inline}

\definecolor{lightgray}{gray}{0.92}
\definecolor{antiquewhite}{rgb}{0.98, 0.92, 0.84} 
\definecolor{blizzardblue}{rgb}{0.67, 0.9, 0.93}

\DeclareMathOperator*{\argmin}{arg\,min}

\newcommand{\roracle}{\text{\sffamily{RO}}}
\newcommand{\aoracle}{\text{\sffamily{AO}}}
\newcommand{\rsolver}{\text{\sffamily{RS}}}

\usepackage{enumitem}

\title{\bf Correlation-Aware Contextual Bandits with\\ Surrogate Rewards for LLM Routing}

\author{Ajay Narayanan Sridhar, Ronak Singh, Mehrdad Mahdavi, Vijaykrishnan Narayanan\vspace*{.2em} \\ 
The Pennsylvania State University \vspace*{.2em} \\ 
\texttt{ \{afs6372,rjs7006,mzm616,vxn9\}@psu.edu}
}
\date{}

\begin{document}

\maketitle

\begin{abstract}
    We study contextual bandit problems with correlated arms and access to surrogate reward signals produced by a machine learning model, motivated by applications such as large language model (LLM) routing. Unlike classical contextual bandits that rely solely on bandit feedback and assume conditional independence across arms, our setting allows context-dependent inter-arm correlations and auxiliary reward information that may be noisy or misspecified. We propose algorithms that leverage such surrogate rewards through two complementary designs. A coupled reward-mixing approach pools true and surrogate rewards to accelerate learning when surrogate signals are reliable, while a decoupled prediction-mixing approach maintains separate estimators for bandit feedback and surrogate rewards and adaptively combines their predictions. This decoupling yields robustness to surrogate misspecification, recovering regret guarantees comparable to reward-only bandit methods in the worst case, while achieving improved regret when surrogate predictions are sufficiently informative. We provide theoretical regret analyses for both approaches and evaluate them on LLM routing benchmarks under varying accuracy versus cost trade-offs. The results demonstrate improved sample efficiency and consistently better accuracy–cost trade-offs compared to standard contextual bandit baselines and strong static routing methods.
\end{abstract} %%%%%%%%%

%%%%%%%%%%%%%%%%%%%%%%%%%%%%%%%%%%%%%%%%%%%%%%%%%%%%%%%%%%%%%
\section{Introduction}

The rapid proliferation of Large Language Models (LLMs) has made it increasingly challenging for end-users to keep track of advancements and optimally select models for their specific needs. Presently, numerous proprietary LLM providers exist~\cite{chatgpt2025, gemini15pro, grok1_mla}, and thousands more are openly available through repositories such as Hugging Face~\cite{huggingface_models}. This abundance creates a practical systems problem: for a given query, which model should be called? The answer is rarely the most accurate model, since inference incurs nontrivial cost (e.g., pricing, latency, compute). In many applications, the objective is an \emph{accuracy--cost trade-off}: a slightly less accurate model may be preferable if it is significantly cheaper or faster, and the optimal choice varies with context.

Existing unified interfaces address this challenge through model cascading~\cite{chen2023frugalgptuselargelanguage, huang2025thriftllm}, ensembling~\cite{jiang2023llm}, and routing~\cite{nguyen2024metallm, ong2024routellm, zhao2024eagle, stripelis2024tensoropera, li2025llmbanditcostefficientllm}. We focus on routing: selecting one model per query to maximize user-defined utility. A common reduction models LLM routing as a \emph{contextual multi-armed bandit}, where each LLM is treated as a conditionally independent arm given the incoming query context, and only the selected arm's reward is observed, requiring costly exploration. In the $d$-dimensional linear setting, where the context is represented by a $d$-dimensional embedding and each arm's expected reward is assumed to be a linear function of this embedding, standard methods achieve regret $R_S=~O(\sqrt{dKT\log(T/d)})$~\cite{foster2020beyond} given $K$ arms (LLMs) for $T$ stream of queries, where regret is the cumulative loss relative to the best context-dependent action. However, this conditional-independence abstraction is often too coarse: models exhibit query-dependent correlations due to shared pretraining data, alignment pipelines, and architectures. Exploiting such correlations can reduce uncertainty about unplayed arms and lower the effective exploration burden.

LLM routing also provides a source of side information largely absent from standard bandit settings: offline performance data. Benchmarks or historical logs can train machine learning (ML) predictors that map query contexts to per-arm reward estimates. During online routing, these estimates can serve as \emph{surrogate rewards} for unplayed arms. Although surrogate rewards may be biased or noisy, they can still reveal relative arm quality. The key challenge is to exploit surrogate rewards when helpful without over-trusting them when misspecified. This motivates our central research question:
\begin{quote}
\emph{Can we design contextual bandit algorithms that systematically exploit both context-dependent inter-model correlations and auxiliary ML-predicted surrogate rewards to accelerate learning, while remaining robust to surrogate misspecification?}
\end{quote}

To address this question, we study contextual bandits with \emph{correlation-aware} graph feedback and auxiliary \emph{surrogate rewards}. Each round, the learner observes the query context and context-dependent side information about inter-arm relationships, which are used to form a feedback graph that identifies a small set of additional arms whose feedback is most informative given the chosen arm. After selecting an arm and observing its realized reward, the learner additionally receives ML-predicted surrogate rewards for the arms in this graph-selected set, providing partial multi-arm side information beyond the chosen arm. We first propose a coupled \emph{reward-mixing} approach, \textit{Correlation-Aware Bandits with Surrogates Coupled} (CABS-C), which builds on graph-feedback variants of SquareCB~\cite{zhang2023practical,zhang2024efficient} by pooling de-biased true rewards with surrogate rewards to fit a single contextual model. This coupling effectively increases the number of informative observations per round, improving the exploration factor from $K$ to $K/(m+1)$, where $m$ is the number of additional arms for which surrogate rewards are revealed each round. The price of coupling these observations is an added sensitivity to surrogate error. For linear bandits, we show the regret scales as $R_C=\widetilde O\!\left(\sqrt{dKT\log(T/d)/(m+1)}+\varepsilon_n \sqrt{TK^2/(m+1)}\right)$, where $\varepsilon_n$ quantifies the worst-case magnitude of surrogate noise/misspecification (see Assumption~\ref{asm:ml-side-obs-model}). The preceding regret bound makes the trade-off explicit: accurate surrogates reduce the effective exploration burden, while large surrogate error can dominate and make tight coupling brittle.

Motivated by this robustness issue, we propose a decoupled \emph{prediction-mixing} strategy, \textit{Correlation-Aware Bandits with Surrogates Decoupled} (CABS-D), that treats (i) a reward-only contextual bandit; and (ii) a correlation-aware (surrogates induced graph feedback) bandit (CABS-C) as two experts and combines them using an adaptive expert-aggregation master, in the spirit of adaptive Hedge methods~\cite{agarwal2017corralling,erven2011adaptive}. The resulting meta-bandit promises a best-of-both-worlds guarantee,
$R_{D} = O\!\left(\min \left\{R_{S}, R_{C}\right\}\right)$,
allowing us to match the standard contextual bandit rate (plus mild overhead) in the worst case while inheriting correlation-driven gains when surrogate feedback is reliable. We evaluate our LLM routing approaches, observing improved sample efficiency and better accuracy–cost trade-offs compared to online bandit and static baselines.

\vspace{-1mm}\paragraph{Contributions}In summary, our contributions are:
\begin{itemize}
    \item We introduce \emph{correlation-aware contextual bandits with surrogate rewards}, where context-dependent inter-arm relationship signals induce a graph-feedback structure and an auxiliary predictor provides surrogate rewards for graph-specified arms beyond the true reward observed for the chosen arm.  
    \item We propose CABS-C (coupled \emph{reward mixing}) and CABS-D (decoupled \emph{prediction mixing}), clarifying when tight coupling is beneficial and when decoupling is necessary for robustness.
    \item We establish regret bounds that explicitly separate correlation-driven gains (reduced effective exploration) from surrogate noise/misspecification, including a best-of-both-worlds guarantee for CABS-D that matches standard contextual bandits in the worst case, while simultaneously achieving an improved regret that scales inversely with the number of surrogate rewards under mild conditions.
    \item We evaluate our methods on LLM routing benchmarks and demonstrate improved sample efficiency and stronger accuracy--cost trade-offs relative to online bandit baselines and static routing policies.
\end{itemize}
Ultimately, these results demonstrate that leveraging predicted surrogate rewards offers a powerful framework for LLM routing, yielding substantial regret reductions. Specifically, our approach effectively interpolates between the full-information and bandit settings, with the performance gains smoothly scaling based on the quality of the surrogate rewards and correlation among LLMs. This finding may be of independent interest beyond the specific setting considered here.

%%%%%%%%%%%%%%%%%%%%%%%%%%%%%%%%%%%%%%%%%%%%%%%%%%%%%%%%%%%%%
\section{Related Work}
In this section, we discuss the works most directly related to our approach and highlight the key distinctions from our method. A more comprehensive overview of the multi-armed bandit and LLM routing literature, including detailed comparisons of regret bounds and theoretical guarantees, is deferred to Appendix~\ref{app:sec:related}.
\paragraph{Contextual Bandits, Graph Feedback, and Expert Aggregation.}
Our work builds on contextual bandits such as SquareCB~\cite{foster2020beyond}, LinearUCB~\cite{chu2011contextual}, and NeuralUCB~\cite{zhou2020neural}, and is closest to contextual bandits with graph feedback~\cite{zhang2024efficient,zhang2023practical}. Unlike SquareCB-G~\cite{zhang2023practical}, which observes \emph{true} rewards for graph neighbors, and SquareCB-UG~\cite{zhang2024efficient}, which learns an unknown graph after action selection, we observe true reward only for the selected arm and receive \emph{surrogate} rewards from a context-dependent graph known before action selection. CABS-D is related to Hedge/AdaHedge-style expert aggregation~\cite{freund1997decision,erven2011adaptive} and Corral-style bandit masters~\cite{agarwal2017corralling}, but adapts specifically between standard bandit feedback and surrogate graph feedback. Auxiliary-feedback methods~\cite{cheung2024leveraging,verma2023exploiting,ji2025multi} use external signals mainly to improve estimates for the \emph{played} arm, whereas we propagate surrogate feedback across multiple arms. See~\cite{lattimore2020bandit} and Appendix~\ref{app:sec:related} for a broader overview.

\paragraph{LLM routing.}
LLM routing selects a model for each query under quality, cost, or latency constraints. Prior work includes offline routers trained from preference or performance data~\cite{ong2024routellm,zhao2024eagle,stripelis2024tensoropera, chen2024routerdc,feng2024graphrouter,somerstep2025carrot}, benchmarks for cost--performance routing~\cite{hu2024routerbenchbenchmarkmultillmrouting, somerstep2025carrot}, and online routing methods based on contextual bandits or dueling bandits~\cite{nguyen2024metallm,li2025llmbanditcostefficientllm, wang2025mixllmdynamicroutingmixed,wei2025learning,dai2024cost,chiang2025llm}. Existing online routing strategies largely do not \emph{explicitly} model intrinsic correlations among LLMs (beyond implicit representation sharing), which can limit sample-efficiency when model behaviors are strongly related. Addressing this gap, our correlation-aware routing framework exploits context-dependent inter-arm relationships and integrates auxiliary surrogate signals while remaining robust to surrogate misspecification, yielding improved routing utility across cost regimes.

\begin{figure*}[!t]
    \centering
     \includegraphics[width=\linewidth]{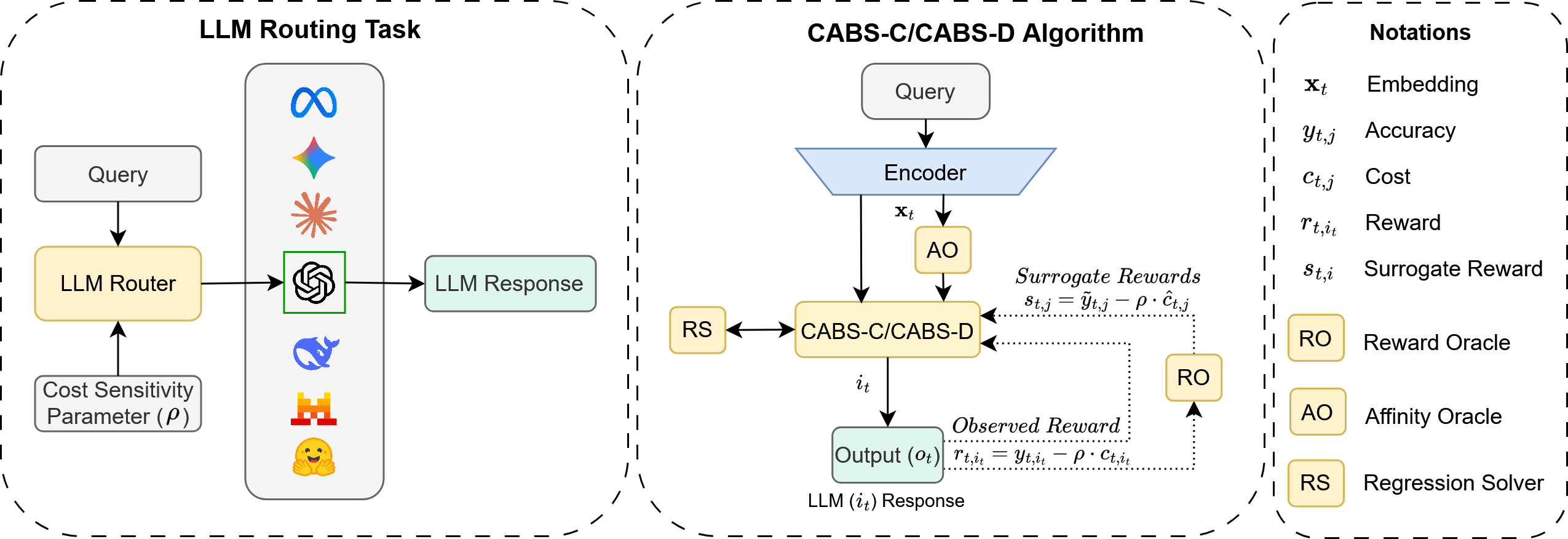}
    \caption{Overview of the LLM routing task and the proposed CABS-C/CABS-D online learning pipeline. The router selects an LLM under cost sensitivity parameter $\rho$, while \aoracle, \roracle, and \rsolver\, denote the affinity oracle, reward oracle, and regression solver used by the CABS-C/CABS-D algorithms.}
    \label{fig:task_overview}
\end{figure*}

%%%%%%%%%%%%%%%%%%%%%%%%%%%%%%%%%%%%%%%%%%%%%%%%%%%%%%%%%%%%%
\section{Problem Setting}
%%%%%%%%%%%%%%%%%%%%%%%%%%%%%%%%%%

We consider an online LLM routing problem setting operating over a finite horizon of $T$ discrete rounds, indexed by $t \in[T]:=\{1,2, \ldots, T\}$. The router (learner) has access to a fixed pool of $K$ distinct LLMs (arms), denoted by the set $\mathcal{K}=[K]:=\{1,2, \ldots, K\}$. At each round $t\in[T]$, the learner observes a prompt and its context embedding vector  $\boldsymbol{x}_t\in\mathcal{X}\subseteq\mathbb{R}^d$, where $\mathcal{X}$ is a bounded context space, and must select an LLM $i_t\in[K]$ to process the prompt. The selected model, $i_t$, produces an outcome $\boldsymbol{o}_t$ and incurs a cost-aware reward $r_{t,i_t}(\boldsymbol{x}_t)$; when convenient, we write $r_{t,i_t}$ and define the corresponding loss $\ell_{t,i_t}=1-r_{t,i_t}$. The reward typically captures  generation quality (e.g., accuracy, helpfulness) and trades off it against operational constraints (e.g., latency, token cost). We explicitly bound the reward and loss to the $[0,1]$ interval, which is standard and necessary for regret analysis. Figure~\ref{fig:task_overview} provides an overview of this LLM routing task and the proposed CABS-C/CABS-D online learning pipeline.

To exploit query-dependent correlations among LLMs and auxiliary reward predictions from offline data, we formulate LLM routing as a correlation-aware contextual bandit with surrogate rewards. At each round $t$, in addition to $\boldsymbol{x}_t$, the learner observes inter-arm side information from an affinity oracle (\aoracle), represented either as a correlation matrix $\boldsymbol{R}_t\in\mathbb{R}^{K\times K}$ or as a time-varying, context-dependent  directed feedback graph $G_t$ over $[K]$ with adjacency matrix $\boldsymbol{A}_t\in\{0,1\}^{K\times K}$, where $A_{t,(i,j)}=1$ indicates an edge from arm $i$ to arm $j$. By default, $A_{t,(i,i)} = 1$ for all $i \in [K]$. The side information among LLMs quantifies how performance scales across different models for a specific context.

Upon selecting  arm $i_t$, the learner observes the true realized reward only for the chosen arm $i_t$. However, for other arms $j \neq i_t$ specified by the feedback graph, the learner receives surrogate rewards $s_{t,j}$ from a reward oracle (\roracle). The surrogate rewards predicted by the {\sffamily{RO}} act as a proxy estimate derived from offline data or cross-model evaluation functions. The conditional bias or variance of the surrogate reward $s_{t,j}$ relative to the true underlying reward $r_{t,j}$ is assumed to be bounded. 

Similar to~\cite{foster2020beyond,zhang2023practical}, we assume that the learner  has access to a regression solver (\rsolver) that is invoked at each round $t$ to update arm models from historical input--output data. The  $\mathsf{RS}$ serves as an abstract optimization primitive that handles the algorithmic complexity of empirical risk minimization and can efficiently compute a predictor from a predefined function class  $\mathcal{F}$ (such as linear models or deep neural networks).  Formally,  given a sequence of context-dependent importance weights $w_{\tau, j} \ge 0$, the $\mathsf{RS}$  processes the accumulated feedback history up to the previous round to solve a weighted empirical risk minimization problem for each arm $i \in [K]$ to update its model:
$$\hat{f}_{t,i} = \arg\min_{f \in \mathcal{F}} \sum_{\tau=1}^{t-1}  w_{\tau, i}  \left( f(\boldsymbol{x}_\tau, i) - \hat{r}_{\tau,i} \right)^2  + \mathcal{R}(f)$$
% $$\hat f_{t,i}= \arg\min_{f \in \mathcal{F}} \,\boldsymbol{x}_t^\top \left( \sum_{\tau \in \mathcal{T}_t(i)} \boldsymbol{x}_\tau \boldsymbol{x}_\tau^\top + \lambda \boldsymbol{x} \right)^{-1} \sum_{\tau \in \mathcal{T}_t(i)} \boldsymbol{x}_\tau \hat{r}_{\tau,i}$$

where $\hat{r}_{\tau,i}$ represents the importance-weighted estimate at round $\tau$ computed based on true or surrogate rewards (equal to $\hat r_{\tau,i_\tau}=r_{\tau,i_\tau}/p_{\tau,i_\tau}$ if $i = i_\tau$, and $\hat r_{\tau,j}=(s_{\tau,j} - \hat{b}_{\tau,j})/(\boldsymbol{A}_{\tau,(:,j)}^\top\boldsymbol{p}_t)$ if $i$ is revealed by the feedback graph $G_\tau$), with $r_{\tau,i_\tau}$ being the true observed reward, $s_{\tau, i}$ being the proxy surrogate reward with estimated bias $\hat{b}_{\tau,j}$, and $p_{\tau,i_\tau}$ being the probability of selecting arm $i$ at round $\tau$. Here, $\mathcal{R}(f)$ is an optional regularization term (e.g., $\ell_2$ penalty) used to prevent overfitting in high-dimensional context spaces.
% where $y_{\tau,i}$ represents the feedback  available at round $\tau$ (equal to the true observed reward $r_{\tau, i}$ if $i = i_\tau$, and the proxy surrogate reward $s_{\tau, i}$ if revealed by the feedback graph $G_\tau$). Here, $\mathcal{R}(f)$ is an optional regularization term (e.g., $\ell_2$ penalty) used to prevent overfitting in high-dimensional context spaces.
The learner's objective is to minimize the  cumulative regret over $T$ rounds. The optimal arm at round $t$ is defined as the arm that maximizes the true  reward for the given context: $i_t^\star = \arg\max_{i \in [K]} \mathbb{E}\left[r_{t,i}(\boldsymbol{x}_t)\right]$. The cumulative regret $\mathrm{Reg}(T)$ is given by $\mathrm{Reg}(T) = \sum_{t=1}^T \mathbb{E}\left[ r_{t,i_t^\star}(\boldsymbol{x}_t) \right]- \mathbb{E}\left[r_{t,i_t}(\boldsymbol{x}_t) \right]$, which we use to measure the performance of online routing algorithm.

%%%%%%%%%%%%%%%%%%%%%%%%%%%%%%%%%%
\section{The Proposed Algorithms}

In this section, we propose two algorithms for contextual bandits with surrogate rewards and correlated arms. We first introduce a reward-mixing method, CABS-C, that learns from both true rewards and de-biased surrogate feedback. We then introduce a prediction-mixing method, CABS-D, that dynamically integrates predictions from CABS-C with those from a standard bandit-feedback method (SquareCB~\cite{foster2020beyond} or LinUCB~\cite{li2010contextual}) to achieve best-of-both-worlds regret guarantees. 

\begin{algorithm*}[!ht]
\caption{Correlation-Aware Bandits with Surrogates Coupled (CABS-C)}
\label{alg:CABS-C}
\begin{algorithmic}[1]
    \STATE \textbf{Input:} Number of rounds $T$, regularization parameter $\gamma$.
    \STATE \textbf{Oracles:} Reward oracle $\roracle: \mathcal{X} \times [K] \times \mathbb{R}_{+} \rightarrow \mathbb{R}_+^{K}$, affinity oracle $\aoracle: \mathcal{X} \rightarrow \mathbb{R}^{K \times K}$, regression solver $\rsolver: \mathcal{X} \rightarrow \mathbb{R}^{K \times d}$ 

    \STATE \textbf{Initialize:} $n_{0,i} = 0$ and $\hat b_{0,j}=0$ for all $i \in [K]$
    \FOR{$t = 1$ to $T$}
        \STATE Receive context  $\boldsymbol{x}_t$ and corresponding correlation matrix $\boldsymbol{R}_t$ from $\aoracle(\boldsymbol{x}_t)$. \label{line:cabsc-predict-start}
        \STATE Generate affinity graph $G_t$ with adjacency matrix $\boldsymbol{A}_t$: $A_{t,(i,j)}=1$ w.p. $\sigma((1-R_{t,(i,j)})/2)$

        \STATE Obtain an estimator $\hat{\boldsymbol{\theta}}_{t,i}: \mathbb{R}^d$ for all $i \in [K]$ from $\rsolver(\boldsymbol{x}_t)$

        \STATE Set $\boldsymbol{p}_t = \argmin_{\boldsymbol{p} \in \Delta([K])} \phi(\boldsymbol{p}; \hat{\boldsymbol{\theta}}_{t}, \boldsymbol{x}_t, \boldsymbol{A}_t)$, where $\phi(\boldsymbol{p}; \hat{\boldsymbol{\theta}}_{t}, \boldsymbol{x}_t, \boldsymbol{A}_t)$ is defined as follows,

        \begin{align*}
             &:= \sup_{\substack{ i^* \in [K] \\ \boldsymbol{\theta}^* \in \mathbb{R}^{K \times d}}} \hspace{0.5em} \mathbb{E}_{i \sim \boldsymbol{p}} \left[ \boldsymbol{x}_t^{\top} \boldsymbol{\theta}^*_{i^*} - \boldsymbol{x}_t^{\top} \boldsymbol{\theta}^*_{i} - \frac{\gamma}{4} \sum_{k: A_{t,(i,k)} = 1} \mathbb{E}[A_{t,(i,k)}]( \boldsymbol{x}_t^{\top}\boldsymbol{\theta}^*_{k} - \boldsymbol{x}_t^{\top} \hat{\boldsymbol{\theta}}_{t,k})^2 \right]
        \end{align*}  \label{line:cabsc-predict-end}

        \STATE Sample action $i_t\sim \boldsymbol{p}_t$ and receive true reward $r_{t,i_t}$.

        \STATE Observe surrogate rewards $s_{t,j}$ from $\roracle(\boldsymbol{x}_t,i_t,r_{t,i_t})$ for all $j \in \mathcal{N}_{t}(i_t)$, where $\mathcal{N}_{t}(i_t)$ is all arms in the neighborhood of arm $i_t$ (including $i_t$) according to $G_t$ 

        \STATE Construct de-biased surrogate rewards $\tilde s_{t,j} = s_{t,j} - \hat b_{t,j}$ for all $j \in \mathcal{N}_{t}(i_t)$ \label{line:cabsc-update-start}

        \STATE Update the bias estimator for all $j \in \mathcal{N}_{t}(i_t)$ by setting
        $n_{t+1,j} \leftarrow n_{t,j}+1$ and
        \[
        \hat b_{t+1,j} \leftarrow \left(1-\frac{1}{n_{t+1,j}}\right)\hat b_{t,j}
        + \frac{1}{n_{t+1,j}}\bigl(s_{t,j}-\boldsymbol{x}_t^\top \hat{\boldsymbol{\theta}}_{t,j}\bigr).
        \]
        \STATE Keep parameters the same for unobserved arms: $n_{t+1, k} = n_{t, k}, \hat{b}_{t+1, k} = \hat{b}_{t, k}$ for $k \notin \mathcal{N}_t(i_t)$.
        
        \STATE Set importance-weighted estimate for chosen arm: 
        $\hat r_{t,i_t}=r_{t,i_t}/p_{t,i_t}$.
        \STATE Set importance-weighted estimates for surrogates: 
        $\hat r_{t,j}=\tilde s_{t,j}/(\boldsymbol{A}_{t,(:,j)}^\top\boldsymbol{p}_t)$  $\forall j\in\mathcal N_t(i_t) \setminus \{i_t\}.$

        \STATE Feed the tuples $\{(\boldsymbol{x}_t, \hat r_{t,j})\}_{j \in \mathcal{N}_t(i_t)}$ to the oracle $\rsolver$  \label{line:cabsc-update-end}
        
    \ENDFOR
\end{algorithmic}
\end{algorithm*}

%%%%%%%%%%%%%%%%%%%%%%%%%%%%%%%%%%%%%%%%%%%%%%%%%%%%%
\subsection{A coupled algorithm with provable regret bound}
\label{sec:coupled-reg}

We first propose \emph{Correlation-Aware Bandits with Surrogates Coupled} (CABS-C), detailed in Algorithm~\ref{alg:CABS-C}. CABS-C builds on graph-feedback variants~\cite{zhang2023practical, zhang2024efficient} of SquareCB by using the feedback graph to guide exploration, but adapts this framework to the surrogate-feedback setting. Specifically, CABS-C de-biases surrogate rewards from correlated arms and pools them with the selected arm's true reward to train a single reward model. Each arm is modeled as a function $f$ which could be linearly parametrized by $\hat{\boldsymbol{\theta}}_i, i \in[K]$ or implemented as a neural net. At round $t$, the algorithm observes context $\boldsymbol{x}_t$ and an affinity matrix $\boldsymbol{R}_t$, samples a stochastic feedback graph $G_t$ with adjacency matrix $\boldsymbol{A}_t$, and obtains reward estimates from the regression solver {\sffamily{RS}}. The action distribution $\boldsymbol{p}_t \in \Delta([K])$, where $\Delta([K])$ is $(K-1)$-dimensional simplex,  is then computed by solving a optimization problem (Line~\ref{line:cabsc-predict-end}), which chooses an arm by balancing low immediate regret with high information gain from the feedback graph. Unlike a standard contextual bandit rule, which only trades off reward and uncertainty for the selected arm, this step explicitly accounts for additional side observations revealed through graph neighbors; thus, it may select an arm that is slightly suboptimal in immediate reward if it yields more informative feedback overall. The parameter $\gamma$ controls this trade-off.

After sampling an arm $i_t$ according to $\boldsymbol{p}_t$, the learner observes its true reward and receives surrogate rewards $\{s_{t,j}\}$ for graph neighbors $\mathcal{N}_t(i_t):=\{j\in[K]:A_{t,(i_t,j)}=1\}$, where each $s_{t,j}\in[0,1]$ is produced by the reward oracle \textsf{RO}. CABS-C then de-biases the surrogate feedback, constructs importance-weighted reward estimates, and updates \textsf{RS}. Thus, CABS-C interpolates between bandit and full-information learning by using context-dependent graph structure to propagate imperfect surrogate feedback beyond the selected arm. While graph-based side observations and surrogate feedback have each been studied independently in prior work, CABS-C jointly addresses both in a single correlation-aware bandit framework. 

We now turn to establishing the regret bound for the algorithm. Before, we state the assumptions in our setting below.

\begin{assumption}[Realizability]\label{assumption:realizability}
Following~\cite{foster2020beyond}, we assume the expected \emph{true rewards} are realizable, i.e. there exists $f^\star\in\mathcal F$ such that, 
$
\mathbb{E}[r_{t,i}\mid \boldsymbol{x}_t]=f^\star(\boldsymbol{x}_t,i),
\;\; \forall t\in[T],\, i\in[K].
$

When we work with losses, we use the convention $\ell_{t,i}=1-r_{t,i}$, in which case the same realizability condition holds for losses as well (up to an affine transformation of the function class). In our work, we consider parametric function classes of the form $f_\theta(\boldsymbol{x},i)$, where $\theta$ denotes model parameters. In the linear contextual bandit special case, $f_\theta(\boldsymbol{x},i)=\boldsymbol{x}^\top \boldsymbol{\theta}_i$, whereas more generally $f_\theta$ may be a neural network or other nonlinear predictor.
\end{assumption}

\begin{assumption}[Regression Oracle]
\label{assumption:regression_oracle}
Following \cite{zhang2023practical}, we assume the regression oracle \rsolver\, guarantees that for any sequence $\{(\boldsymbol{x}_t, \hat r_{t,j})\}_{j \in \mathcal{O}_t, t \in [T]}$ in which $\mathcal O_t \subseteq [K]$, the oracle produces a function $\hat f_t$ such that
% \[
$
\sum_{t=1}^T\sum_{i\in \mathcal O_t}
\big(
f^\star(\boldsymbol{x}_t,i) - \hat f_t(\boldsymbol{x}_t,i)
\big)^2
\le \mathrm{Reg}_{\mathrm{Sq}}(T)
$.
% \]
\end{assumption}
Note that this assumption holds for any function class. In our analysis, we stick to linear functions, i.e, $\hat f_t(\boldsymbol{x}_t,i) = \boldsymbol{x}_t^{\top} \hat{\boldsymbol{\theta}}_{t,i}$ and $f^\star(\boldsymbol{x}_t,i) = \boldsymbol{x}_t^{\top}\boldsymbol{\theta}^*_{i}$.

\begin{assumption}[Strong Observability]
\label{asm:strong-observability}
For a directed graph $G_t = ([K], E)$, a node $i$ is observable if $\{ j \in [K] ;\;  {A}_{t,(j,i)} = 1 \} \neq \emptyset$. An observable node, $i$ is strongly observable if either $i \in \{ j \in [K] ;\; {A}_{t,(j,i)} = 1 \}$ or $\{ j \in [K] ;\; {A}_{t,(j,i)} = 1 \} = [K]\setminus \{i\}$. If all nodes $i \in [K]$ are strongly observable, then the graph $G_t$ is strongly observable and has independence number $\alpha$.  
\end{assumption}
Following \cite{alon2014nonstochasticmultiarmedbanditsgraphstructured}, since we assume that pulling an arm always reveals its own loss, we satisfy $i \in \{ j \in [K] ;\; {A}_{t,(j,i)} = 1 \}$, making the graph strongly observable. 

\begin{assumption}[ML Side Observation Model]\label{asm:ml-side-obs-model}
For every observed arm $i\in \mathcal{N}_t(i_t)$, the surrogate reward is
$s_{t,i}=f^\star(\boldsymbol{x}_t,i)+b_i+\xi_{t,i}$, where $b_i\in\mathbb{R}$
is a fixed arm-dependent bias and $\xi_{t,i}$ is a zero-mean noise term
satisfying $\mathbb{E}[\xi_{t,i}\mid\mathcal F_{t-1},\boldsymbol{x}_t]=0$
and $|\xi_{t,i}|\le\varepsilon_n$. 
\end{assumption}
Here filtration $\mathcal{F}_t$ denotes the information
available up to time $t$, and $\varepsilon_n$ bounds the surrogate noise
 across rounds and observed arms. Intuitively, $b_i$ captures
systematic surrogate over- or under-estimation for arm $i$, while larger
$\varepsilon_n$ corresponds to noisier and less informative side observations.

\begin{assumption}[Minimum Probability]
\label{asm:min-probability}
Following \cite{alon2015online}, we make the assumption that $p_{t,i}\ge \epsilon$ for all $(t,i)$, where $p_{t,i}$ indicates the probability of selecting arm $i$ at round $t$. This can be achieved by mixing $\boldsymbol p_t$ with a uniform mass. Note that this $\epsilon$ is distinct from the $\varepsilon_n$ quantity from Assumption~\ref{asm:ml-side-obs-model}

\end{assumption}

Using the above assumptions, we introduce a lemma for concentrating the noise term of surrogate rewards, and then we introduce a theorem and accompanying corollary for bounding the expected regret of Algorithm~\ref{alg:CABS-C}. The proofs are deferred to Appendix~\ref{appendix:CABS-C}.

\begin{lemma}\label{lem:noise_conc}
Suppose the feedback graph $G_t$ is deterministic with independence number no more than $\alpha$. Then, for all $\delta > 0$, with probability $1-\delta$ Algorithm~\ref{alg:CABS-C} guarantees that
\begin{equation}
\sum_{t=1}^T \sum_{j=1}^K
\left|
\frac{w_{t,j}}{n_{t,j}}
\sum_{\tau \in \mathcal T_t(j)} \xi_{\tau,j}
\right|
\;\le\; \varepsilon_n\, \sqrt{
2 \alpha KT \ln\left(\frac{2KT}{\delta}\right)
\ln\!\left(\frac{4K}{\alpha\epsilon}\right)
(1+\ln T)} .
\end{equation}
\end{lemma}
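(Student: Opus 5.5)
The plan is to handle the inner noise averages by a uniform martingale concentration bound, and then to control the weighted sum over $(t,j)$ with Cauchy--Schwarz. The two resulting factors are bounded by the graph-feedback lemma of \cite{alon2015online} and by a harmonic-series argument.

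\textbf{Step 1 (uniform concentration).} Fix an arm $j$ and enumerate its observation rounds $\mathcal T_t(j)$ in chronological order. By Assumption~\ref{asm:ml-side-obs-model}, the noise terms $\xi_{\tau,j}$ form a martingale difference sequence with respect to $\mathcal F_{\tau-1}$, and each satisfies $|\xi_{\tau,j}|\le\varepsilon_n$. Whether $\tau\in\mathcal T_t(j)$ depends only on $G_\tau$, which is deterministic, and on $i_\tau$, which is drawn before $s_{\tau,j}$ is revealed. The partial sum over the first $n$ observations of arm $j$ is therefore a martingale with bounded increments. Azuma--Hoeffding for each fixed pair $(j,n)$, combined with a union bound over the at most $KT$ pairs $(j,n)$, gives, with probability $1-\delta$, simultaneously for all $t,j$:
\[
\Bigl|\sum_{\tau\in\mathcal T_t(j)}\xi_{\tau,j}\Bigr|\le \varepsilon_n\sqrt{2n_{t,j}\ln(2KT/\delta)} .
\]
Union-bounding over counts $n$ rather than over the random times $t$ avoids any optional-stopping subtlety. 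On this event, each summand in the lemma is at most $\varepsilon_n\sqrt{2\ln(2KT/\delta)}\,w_{t,j}/\sqrt{n_{t,j}}$, with the convention that the term is $0$ when $n_{t,j}=0$, since then $\hat b_{t,j}=0$ and no noise enters.

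\textbf{Step 2 (Cauchy--Schwarz split).} It remains to show
\[
\sum_{t,j}\frac{w_{t,j}}{\sqrt{n_{t,j}}}\le\sqrt{\alpha KT\ln\!\bigl(\tfrac{4K}{\alpha\epsilon}\bigr)(1+\ln T)} .
\]
Writing $w/\sqrt n=\sqrt{w}\cdot\sqrt{w/n}$ and applying Cauchy--Schwarz gives
\[
\sum_{t,j}\frac{w_{t,j}}{\sqrt{n_{t,j}}}\le\Bigl(\sum_{t}\sum_j w_{t,j}\Bigr)^{1/2}\Bigl(\sum_j\sum_t \frac{w_{t,j}}{n_{t,j}}\Bigr)^{1/2}.
\]
I take the weight to be the exploration-normalized quantity $w_{t,j}=p_{t,j}/(\boldsymbol A_{t,(:,j)}^\top\boldsymbol p_t)\le 1$, which is $\le1$ because $A_{t,(j,j)}=1$. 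This is the weight with which the de-biased surrogate enters the importance-weighted estimate in expectation.
\begin{itemize}
\item \emph{First factor.} By strong observability (Assumption~\ref{asm:strong-observability}), independence number $\alpha$, and $p_{t,i}\ge\epsilon$ (Assumption~\ref{asm:min-probability}), the graph lemma of \cite{alon2015online} gives $\sum_j w_{t,j}\le 4\alpha\ln\!\bigl(4K/(\alpha\epsilon)\bigr)$ for every $t$. Summing over $t$ yields the factor $\alpha T\ln(4K/(\alpha\epsilon))$, up to the constant.
\item \emph{Second factor.} Since $w_{t,j}\le1$, it suffices to bound $\sum_t 1/n_{t,j}$ along the rounds in which arm $j$'s counter is active, that is, the rounds in which $j$ is observed. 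There $n_{t,j}$ runs through $1,2,\dots$, so the harmonic sum is at most $1+\ln T$ per arm, giving $K(1+\ln T)$ in total.
\end{itemize}
Combining the two factors and the prefactor $\varepsilon_n\sqrt{2\ln(2KT/\delta)}$ reproduces the stated bound, after absorbing constants.

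\textbf{Main obstacle.} The delicate point is the second factor. In rounds where $j$ is not observed, $n_{t,j}$ does not increase, yet $w_{t,j}/n_{t,j}$ may still be positive, so the plain harmonic bound does not directly apply. I would first try to argue that only observed rounds contribute, because the de-biased estimate for arm $j$ is fed to \rsolver\ only when $j\in\mathcal N_t(i_t)$, so the effective weight carries the indicator $\mathbb 1\{j\in\mathcal N_t(i_t)\}$. If the expectation-weighted version is needed instead, I would replace the indicator by its conditional expectation $\boldsymbol A_{t,(:,j)}^\top\boldsymbol p_t$ and use a Freedman-type comparison between $\sum_t \mathbb 1\{j\text{ observed}\}/n_{t,j}$ and its compensator. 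This costs at most another logarithmic factor, which is absorbed into the $\ln(2KT/\delta)$ term.
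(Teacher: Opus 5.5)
Your Steps 1--2 follow the paper's proof: Azuma--Hoeffding with a union bound, Cauchy--Schwarz, Lemma~\ref{lem:alon-graph-alpha} for $\sum_j w_{t,j}$, and a harmonic sum for $\sum_{t,j}w_{t,j}/n_{t,j}$. Your union bound over counts $n$ is cleaner than the paper's union bound over $(t,j)$ with the random index $m=n_{t,j}$. However, the step you flag is a genuine gap, and neither remedy closes it.

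\emph{Remedy (1)} bounds a different quantity. The lemma's weight $w_{t,j}=p_{t,j}/(\boldsymbol A_{t,(:,j)}^\top\boldsymbol p_t)$ carries no indicator, so you may not discard unobserved rounds.

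\emph{Remedy (2)} fails because of a weight mismatch. The compensator of $\mathbb{I}\{j\text{ observed at }t\}/n_{t,j}$ carries the weight $P_{t,j}:=\boldsymbol A_{t,(:,j)}^\top\boldsymbol p_t$, whereas your sum carries $w_{t,j}=p_{t,j}/P_{t,j}$. The ratio $w_{t,j}/P_{t,j}=p_{t,j}/P_{t,j}^2$ can be as large as $1/\epsilon$, so the comparison costs a factor $1/\epsilon$, not a logarithm.

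In fact the inequality $\sum_{t,j}w_{t,j}/n_{t,j}\le K(1+\ln T)$ is false. Take only self-loops ($\alpha=K$), identical arms, and an oracle that returns $f^\star$, so $\mathrm{Reg}_{\mathrm{Sq}}(T)=0$. Then Line~8 returns the uniform $\boldsymbol p_t$, so $\epsilon=1/K$, $w_{t,j}=1$ and $n_{t,j}\approx t/K$. Consequently $\sum_{t,j}w_{t,j}/n_{t,j}$ is of order $K^2\ln(T/K)$. Likewise $\sum_{t,j}w_{t,j}/\sqrt{n_{t,j}}\approx 2K^{3/2}\sqrt T$, which exceeds the required $K\sqrt{T\ln(4K)(1+\ln T)}$. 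With noise $\xi_{\tau,j}=\pm\varepsilon_n$ (Rademacher), given the pull sequence the arms contribute independently, each with mean of order $\varepsilon_n\sqrt{KT}$. So the left-hand side of the lemma is itself of order $\varepsilon_n K^{3/2}\sqrt T$, and the stated bound fails once $K\gg\ln(KT/\delta)\ln(4K)\ln T$. No sharper argument can rescue it as written.

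The paper's proof does not supply the missing idea. At exactly this point it invokes ``the same fact about harmonic sums,'' so the hole you identified is in the published argument too. What your route does yield is the following. Since $P_{t,j}\ge p_{t,j}\ge\epsilon$, a Freedman-type bound gives, with high probability, $n_{t,j}\gtrsim\epsilon t$ for $t\gtrsim\epsilon^{-1}\ln(KT/\delta)$. Hence $\sum_{t,j}w_{t,j}/n_{t,j}=O(K\epsilon^{-1}\ln(KT/\delta))$, and the final bound is larger than the stated one by a factor of order $\epsilon^{-1/2}$ up to logarithms, matching the example above.

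Separately, in Step 1 the timing remark (``$i_\tau$ is drawn before $s_{\tau,j}$ is revealed'') does not by itself make the observed subsequence a martingale difference sequence. \textsf{RO} takes $i_\tau$ and $r_{\tau,i_\tau}$ as inputs, so you need $\mathbb E[\xi_{\tau,j}\mid\mathcal F_{\tau-1},\boldsymbol x_\tau,i_\tau]=0$. That is stronger than Assumption~\ref{asm:ml-side-obs-model} as written; the paper tacitly uses it as well.
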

%%%%%%%%%%%%%%%%%%%%%%%%%%%%%%%%%%%%%%%%%%%%%%
\begin{theorem}\label{theorem:CABS-C}
Suppose that the feedback graph $G_t$ is strongly observable, and has independence number no greater than $\alpha$. Under Assumptions~\ref{assumption:realizability}--\ref{assumption:regression_oracle}, with probability at least $1-\delta$, Algorithm~\ref{alg:CABS-C} guarantees that, $\mathbb{E}[\mathrm{Reg}_T]
\le
\widetilde O\!\left(
\sqrt{\alpha T\,\mathrm{Reg}_{\mathrm{Sq}}(T)}
+
\varepsilon_n\,\sqrt{\alpha K T}\
\right).$
\end{theorem}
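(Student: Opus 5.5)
The proof follows the SquareCB-G template of \cite{zhang2023practical}. The only new ingredient is that the regression targets for graph neighbours are de-biased surrogates rather than true rewards, and the extra error this introduces is controlled by Lemma~\ref{lem:noise_conc}.

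\textbf{Step 1: per-round decomposition.} Fix round $t$ and write $\hat f_t(\boldsymbol{x}_t,i)=\boldsymbol{x}_t^\top\hat{\boldsymbol{\theta}}_{t,i}$. Taking $\boldsymbol{\theta}^\star$ and $i^\star_t$ in the supremum defining $\phi$ gives
\[
\mathbb{E}_{i\sim \boldsymbol{p}_t}\bigl[f^\star(\boldsymbol{x}_t,i^\star_t)-f^\star(\boldsymbol{x}_t,i)\bigr]\le \phi(\boldsymbol{p}_t;\hat{\boldsymbol{\theta}}_t,\boldsymbol{x}_t,\boldsymbol{A}_t)+\frac{\gamma}{4}\,\mathbb{E}_{i\sim\boldsymbol{p}_t}\sum_{k\in\mathcal N_t(i)}\bigl(f^\star(\boldsymbol{x}_t,k)-\hat f_t(\boldsymbol{x}_t,k)\bigr)^2 .
\]
Because $\boldsymbol{p}_t$ minimizes $\phi$, I can bound $\phi(\boldsymbol{p}_t)$ by the minimax value. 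Here I invoke the strongly observable graph result of \cite{zhang2023practical}: the minimax value is $O(\alpha\log(K\gamma)/\gamma)$ (up to logarithms). The uniform-mass mixing in Assumption~\ref{asm:min-probability} costs an additional $\epsilon K T$, which I will tune to be lower order.

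\textbf{Step 2: summing the squared-error term.} Summing over $t$, the second term equals $\frac{\gamma}{4}\sum_t\sum_{k\in\mathcal N_t(i_t)}(f^\star-\hat f_t)^2$ plus a martingale difference. Freedman's inequality controls that martingale at confidence $\delta/2$. The sum is exactly the quantity the oracle bounds in Assumption~\ref{assumption:regression_oracle}, with $\mathcal O_t=\mathcal N_t(i_t)$.

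The difficulty is that the oracle is fit to targets $\hat r_{t,j}$ built from surrogates, whose conditional mean is not exactly $f^\star$. Expanding $\tilde s_{t,j}=f^\star(\boldsymbol{x}_t,j)+(b_j-\hat b_{t,j})+\xi_{t,j}$, the running-average bias estimator gives $b_j-\hat b_{t,j}$ as an average of past $f^\star-\hat f$ residuals plus $\frac{1}{n_{t,j}}\sum_{\tau\in\mathcal T_t(j)}\xi_{\tau,j}$. I will therefore split the excess into three pieces:
\begin{itemize}
\item an oracle-regret piece, giving $\mathrm{Reg}_{\mathrm{Sq}}(T)$;
\item a bias-tracking piece, absorbed into $\mathrm{Reg}_{\mathrm{Sq}}(T)$ up to constants and logarithms;
\item a noise piece weighted by the importance weights $w_{t,j}$.
\end{itemize}
By Lemma~\ref{lem:noise_conc} (at confidence $\delta/2$), the noise piece is at most $\varepsilon_n\sqrt{\alpha KT}\cdot\mathrm{polylog}$. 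This gives
\[
\mathrm{Reg}_T\lesssim \frac{\alpha T}{\gamma}+\gamma\,\mathrm{Reg}_{\mathrm{Sq}}(T)+\varepsilon_n\sqrt{\alpha KT}\,\mathrm{polylog}.
\]

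\textbf{Step 3: tuning and main obstacle.} Setting $\gamma=\sqrt{\alpha T/\mathrm{Reg}_{\mathrm{Sq}}(T)}$ and taking $\epsilon=1/T$ (which enters only logarithmically through Lemma~\ref{lem:noise_conc}) yields the claimed $\widetilde O(\sqrt{\alpha T\,\mathrm{Reg}_{\mathrm{Sq}}(T)}+\varepsilon_n\sqrt{\alpha KT})$.

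The main obstacle is Step 2. The noise term must enter additively rather than multiplied by $\gamma$; otherwise tuning $\gamma$ would produce a cross term. To achieve this, I would bound the surrogate-noise contribution directly in the regret, via a linear (not squared) deviation bound on the perturbed predictions. That is, I would treat $\hat f_t$ as an oracle fit to clean targets plus a prediction shift of size $\frac{w_{t,j}}{n_{t,j}}|\sum\xi|$ per arm, and charge that shift to regret at most once per round through the graph-weighted probabilities. This linear form is precisely the quantity Lemma~\ref{lem:noise_conc} controls. The two $\delta/2$ events are then combined by a union bound.
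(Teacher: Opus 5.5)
Your outline has the same skeleton as the paper's proof: the DEC bound of \cite{zhang2023practical}, an oracle term $\frac{\gamma}{4}\mathrm{Reg}_{\mathrm{Sq}}(T)$, an additive surrogate-error term split into a propagation part and a noise part (the noise part handled by Lemma~\ref{lem:noise_conc}), and then $\gamma=\sqrt{\alpha T/\mathrm{Reg}_{\mathrm{Sq}}(T)}$. The gap is in the step you yourself call the main obstacle: it is only announced, and as sketched it does not produce the sum that Lemma~\ref{lem:noise_conc} controls.

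\textbf{No shift decomposition is available.} Nothing in the assumptions gives $\hat f_t=\bar f_t+\delta_t$, a clean-target fit plus a shift with $|\delta_{t,j}|\approx\frac{w_{t,j}}{n_{t,j}}\bigl|\sum_{\tau\in\mathcal T_t(j)}\xi_{\tau,j}\bigr|$. How an oracle's output moves when its targets are perturbed is oracle-specific; for a ridge/VAW oracle it is $\boldsymbol x_t^\top V_{t,j}^{-1}\sum_\tau\boldsymbol x_\tau e_{\tau,j}$, with $V_{t,j}$ the regularized Gram matrix of arm $j$'s data. The targets are also importance-weighted, so a surrogate error $e$ enters as $e/(\boldsymbol A_{\tau,(:,j)}^\top\boldsymbol p_\tau)$. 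Finally, $w_{t,j}=p_{t,j}/(\boldsymbol A_{t,(:,j)}^\top\boldsymbol p_t)$ depends on the current $\boldsymbol p_t$, not on the data the oracle was fit on, so it cannot be the size of a prediction shift.

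\textbf{The linear charge does not carry weights $w_{t,j}$.} Grant a per-arm shift of size $\frac{1}{n_{t,j}}|\sum_\tau\xi_{\tau,j}|$. Since $\phi(\boldsymbol p;\hat f)=\max_{i^\star}\{\hat f(\boldsymbol x_t,i^\star)-\mathbb E_{i\sim\boldsymbol p}\hat f(\boldsymbol x_t,i)+\frac1\gamma\|\boldsymbol p-\boldsymbol e_{i^\star}\|^2_{\mathrm{diag}(\boldsymbol W_t)^{-1}}\}$, replacing $\bar f_t$ by $\hat f_t$ costs $|\delta_{t,i^\star_t}|+\mathbb E_{i\sim\boldsymbol p_t}|\delta_{t,i}|$ per round. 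The second piece is fine, because $p_{t,i}\le w_{t,i}$. The comparator arm, however, enters with weight one rather than $w_{t,i^\star_t}$. The best control of $\sum_t\varepsilon_n/\sqrt{n_{t,i^\star_t}}$ that $p_{t,i}\ge\epsilon$ gives is of order $\varepsilon_n\sqrt{T/\epsilon}$, which is linear in $T$ for your choice $\epsilon=1/T$.

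\textbf{The remaining two pieces are treated inconsistently.} The bias-tracking and noise pieces are two parts of the same error $b_j-\hat b_{t,j}$, yet you send one through the squared term and the other through the linear term. Under your clean-fit-plus-shift reading, the residuals averaged in $\hat b_{t,j}$ are those of the shifted predictor, so bounding their squares by $\mathrm{Reg}_{\mathrm{Sq}}(T)$ is circular.

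The paper does not derive this step either. Lemma~\ref{lem:cabsc-regret-decomp} simply appends $\mathbb E[\mathcal B_T]$ to the decomposition of \cite{foster2021statistical}. There $\mathcal B_T$ is the per-round charge $\sum_{j\in\mathcal N_t(i_t)}|b_j-\hat b_{t,j}|$, rewritten as $\sum_j w_{t,j}|b_j-\hat b_{t,j}|$; note that the exact expectation of that charge has weights $\boldsymbol A_{t,(:,j)}^\top\boldsymbol p_t$, not $w_{t,j}$. The paper then bounds the propagation part linearly, via Cauchy--Schwarz, Lemma~\ref{lem:alon-graph-alpha} and a harmonic sum (Lemma~\ref{lem:cabsc-reg-prop-bound}), rather than absorbing it into $\mathrm{Reg}_{\mathrm{Sq}}(T)$. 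The noise part is bounded by Azuma plus a union bound (Lemma~\ref{lem:cabsc-obs-noise-bound}).

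What makes the theorem true as stated is Assumption~\ref{assumption:regression_oracle} as literally written. It bounds $\sum_t\sum_{i\in\mathcal O_t}(f^\star-\hat f_t)^2$ by $\mathrm{Reg}_{\mathrm{Sq}}(T)$ for \emph{any} target sequence, surrogate-based ones included. Under that reading your proof is already complete at the sentence \emph{the sum is exactly the quantity the oracle bounds}. Steps 1--2 then give $\widetilde O(\alpha T/\gamma+\gamma\,\mathrm{Reg}_{\mathrm{Sq}}(T))$, tuning gives $\widetilde O(\sqrt{\alpha T\,\mathrm{Reg}_{\mathrm{Sq}}(T)})$, and $\varepsilon_n\sqrt{\alpha KT}$ is nonnegative slack.

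If you instead want the oracle guarantee only for targets whose conditional mean is $f^\star$, which is the reading under which surrogate error genuinely matters, you need two further ingredients. One is an explicit stability assumption on how the oracle responds to perturbed targets. The other is a separate treatment of the comparator arm. Neither your sketch nor the paper supplies them.
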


 For an $m$-feedback graph, where each round reveals $m$ surrogate rewards, Theorem~\ref{theorem:CABS-C} yields the following linear-oracle specialization.

\begin{corollary}[Linear oracle with $m$-feedback]
\label{cor:CABS-C_linear_mfeedback}
Assume that we have a linear oracle and we observe surrogate rewards for $m$ arms per round. Then with probability at least $1-\delta$ CABS-C guarantees
\begin{align*}
    \mathbb{E}[\mathrm{Reg}_T]
\le 
\widetilde O\!\left(
\sqrt{d\frac{K}{m+1}T\log \left(\frac{T}{d}\right)}
\;+\;
\varepsilon_n \sqrt{\frac{K^2}{m+1}T}
\right)
\end{align*}

\end{corollary}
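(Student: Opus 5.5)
The corollary follows from Theorem~\ref{theorem:CABS-C} by plugging in two quantities: the square-loss regret of a linear oracle, and the independence number $\alpha$ of an $m$-feedback graph.

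\textbf{Independence number.} In an $m$-feedback graph every arm reveals itself plus $m$ others, so each closed out-neighbourhood has size $m+1$. For the graphs we consider, where neighbourhoods are spread evenly (e.g.\ a circulant or random regular graph on $[K]$ with out-degree $m$), we will show $\alpha \le \lceil K/(m+1)\rceil$.

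We do this directly. Fix an independent set $S$. Its members have pairwise disjoint closed neighbourhoods in the symmetric/regular construction, so $|S|(m+1)\le K$. Strong observability holds automatically because self-loops are present.

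This is the step we expect to be the main obstacle. A general directed graph of out-degree $m$ need not satisfy $\alpha\le K/(m+1)$; one only has Turán-type bounds of order $K/(m/2+1)$. We would therefore state the bound either as holding up to a constant factor, which $\widetilde O$ absorbs, or under the implicit assumption that the revealed sets form a regular, undirected-like structure.

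\textbf{Oracle regret.} For linear regression with $K$ arms in dimension $d$, we use the standard online ridge/Vovk--Azoury--Warmuth bound $\mathrm{Reg}_{\mathrm{Sq}}(T)=O(d\log(T/d))$ per arm. Summing over arms, or equivalently treating the stacked parameter $\boldsymbol{\theta}\in\mathbb{R}^{Kd}$ with only observed coordinates updated, gives $\mathrm{Reg}_{\mathrm{Sq}}(T)=O(Kd\log(T/d))$.

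Assumption~\ref{assumption:regression_oracle} needs this bound to hold on the importance-weighted targets $\hat r_{t,j}$. Their magnitude is controlled by the minimum-probability Assumption~\ref{asm:min-probability}, which adds only logarithmic factors absorbed into $\widetilde O$. If we follow the SquareCB convention instead, the $K$ factor is already built into the graph term, and we take $\mathrm{Reg}_{\mathrm{Sq}}=O(d\log(T/d))$ with $\alpha$ supplying the $K/(m+1)$. We will choose whichever normalization matches Theorem~\ref{theorem:CABS-C}, so that the first term of the theorem becomes $\sqrt{\alpha T\,\mathrm{Reg}_{\mathrm{Sq}}(T)}=\widetilde O\bigl(\sqrt{d\tfrac{K}{m+1}T\log(T/d)}\bigr)$.

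\textbf{Noise term.} The second term of the theorem is $\varepsilon_n\sqrt{\alpha KT}$. Substituting $\alpha\le K/(m+1)$ gives $\varepsilon_n\sqrt{K^2T/(m+1)}$ directly. The logarithmic factors from Lemma~\ref{lem:noise_conc}, namely $\ln(2KT/\delta)$, $\ln(4K/(\alpha\epsilon))$ and $(1+\ln T)$, are hidden in $\widetilde O$.

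Combining the two terms, with the same probability $1-\delta$ as in Theorem~\ref{theorem:CABS-C}, yields the claimed bound.
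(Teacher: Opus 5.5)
Your overall route matches the paper's: substitute $\alpha=K/(m+1)$ and $\mathrm{Reg}_{\mathrm{Sq}}(T)=O(d\log(T/d))$ into Theorem~\ref{theorem:CABS-C}. The gap is in how you justify $\alpha\le K/(m+1)$.

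You claim that members of an independent set have pairwise disjoint closed neighbourhoods. This is false for the graphs you name, because independence forbids edges between members of $S$, not common neighbours. Take the circulant on $K=6$ arms where each arm reveals itself and its two cyclic neighbours, so $m=2$. The set $\{0,2,4\}$ is independent, so $\alpha=3>K/(m+1)=2$, and the closed neighbourhoods of $0$ and $2$ share arm $1$.

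Your counting in fact runs the other way. The closed neighbourhoods of a \emph{maximal} independent set cover $[K]$, which gives the Caro--Wei/Tur\'an \emph{lower} bound $\alpha\ge K/(m+1)$ for symmetric $m$-regular graphs, never an upper bound.

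The fallback does not work either, because regularity alone gives no upper bound of order $K/m$, even up to constants. The complete bipartite graph on two halves of size $K/2$ is symmetric and $m$-regular with $m=K/2$, yet $\alpha=K/2$ while $K/(m+1)<2$. That is a gap of order $K$, which $\widetilde O$ cannot absorb. For particular families such as circulants or random regular graphs the loss may be only a constant or logarithmic factor, but that would need its own proof.

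The missing ingredient is the structural assumption the paper uses in Theorem~\ref{thm:m-trans-feedback}: the feedback graph is transitive and $m$-regular with symmetric edges, so it splits into $K/(m+1)$ disjoint cliques of size $m+1$. An independent set then contains at most one arm per clique, giving $\alpha\le K/(m+1)$. In this setting your disjoint-neighbourhood argument becomes correct, because the closed neighbourhoods are exactly the cliques.

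Separately, you should not leave the oracle normalization open. With your first choice, $\mathrm{Reg}_{\mathrm{Sq}}(T)=O(Kd\log(T/d))$, the leading term becomes $\sqrt{dK^2T\log(T/d)/(m+1)}$, which is not the stated bound. The corollary relies on the convention of Corollary~\ref{cor:squarecb_linear}, namely $\mathrm{Reg}_{\mathrm{Sq}}(T)=O(d\log(T/d))$, taken as given through Assumption~\ref{assumption:regression_oracle}. Your remark about the magnitude of importance-weighted targets is therefore unnecessary, since that assumption is posited for arbitrary target sequences.

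With these two inputs, both terms follow by direct substitution exactly as in your final step, and the logarithmic factors from Lemma~\ref{lem:noise_conc} are hidden in $\widetilde O$.
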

\noindent \textbf{Brief discussion of bias and noise.} If we ensure that \(\varepsilon_n = O(1/\sqrt{K})\), we recover the same asymptotic regret (up to polylogs) as the standard contextual bandit with graph feedback setting, $\widetilde O\!\left(\sqrt{\alpha T \, \mathrm{Reg_{Sq}}(T)}\right)$. If the bias and noise are well-behaved (each arm's respective bias $b_{i}$ remains relatively constant or changes gradually and noise is small and truly subgaussian), then CABS-C can utilize surrogate rewards to significantly improve the rate of learning (the additional information from the surrogate rewards reduces variance). However, if bias is difficult to estimate or if the magnitude of the subgaussian noise is large, then the surrogate rewards available to CABS-C may hinder its ability to learn (inconsistent surrogate rewards can potentially increase variance). \\

\noindent \textbf{On bounding $\boldsymbol{\varepsilon_n}$.} 
We note that the ML reward predictor (\roracle), trained offline from a fixed mixture of LLMs, enjoys standard ERM generalization guarantees only with respect to the training distribution~\cite{vapnik2013nature, shalev2014understanding}. Generalization to unseen contexts reduces to a covariate shift problem~\cite{shimodaira2000improving, sugiyama2012machine}, and fundamentally depends on support overlap or structural invariance assumptions on the underlying data-generating process. In the next section, we propose an algorithm that is robust to such noise, while retaining the aforementioned regret bound when $\varepsilon_n$ is small.\\

\noindent\textbf{Cost-informed utility.} To capture the accuracy--cost trade-off inherent in practical LLM deployment, we define a cost-aware utility: $r_{t,i_t} = y_{t, i_t} - \rho\, c_{t, i_t}$, where $y_{t, i_t} \in [0,1]$ denotes the accuracy score obtained by using LLM $i_t$ to answer query $\boldsymbol{x}_t$, and $c_{t, i_t} \ge 0$ is the associated inference cost. The parameter $\rho \ge 0$ controls cost sensitivity: larger values favor cheaper models, while smaller values prioritize accuracy. We additionally clip the utility to $[0,1]$ to enforce bounded rewards.\\

\noindent\textbf{From linear to neural bandits.} While we focus on linear models for clarity of exposition and analysis, our approach extends naturally to nonlinear reward models. In particular, following the machinery (Neural Tangent Kernel matrix analysis) adapted in NeuralUCB~\cite{zhou2020neural}, the linear predictor $\boldsymbol{x}_t^\top  \boldsymbol{\theta^*}_{i}$ can be replaced by a neural network with uncertainty estimated via its last-layer features. At this point, we empirically adapt our method to neural bandits based estimators and show their performance in Section~\ref{sec:experiments}, but note that our regret analysis can be extend to this setting be utilizing the machinery developed in~\cite{zhou2020neural}.

\subsection{A decoupled method with optimal regret}

Theorem~\ref{theorem:CABS-C} shows that surrogate rewards can substantially improve regret for CABS-C algorithm, but can also degrade performance when their bias or noise is poorly controlled. This is a consequence of \emph{reward mixing}: CABS-C combines true and surrogate rewards at the observation level and trains a single model on the pooled feedback. To obtain robustness to unreliable surrogate rewards, we instead consider a \emph{prediction-mixing} strategy, where different feedback mechanisms are handled by separate learners and combined only at the decision level.

We introduce \emph{Correlation-Aware Bandits with Surrogates Decoupled} (CABS-D), shown in Algorithm~\ref{alg:CABS-D}. CABS-D maintains two stateful experts: a standard bandit-feedback expert and the surrogate-aware graph-feedback expert CABS-C from Algorithm~\ref{alg:CABS-C}. In our theoretical analysis, the standard expert is SquareCB, described in Appendix~\ref{appendix:standard-contextual-bandit}. At each round, the experts output policies $\boldsymbol{p}_{t,1}$ and $\boldsymbol{p}_{t,2}$, which are mixed into $\boldsymbol{q}_t$ using adaptive exponential weights over meta-copies $(m,g)$, in the spirit of Hedge/AdaHedge-style aggregation~\cite{erven2011adaptive}. After sampling $i_t\sim\boldsymbol{q}_t$, SquareCB receives only the selected arm's true reward, while CABS-C also receives surrogate rewards from the sampled feedback graph. The meta-weights are updated using importance-weighted loss estimates and an adaptive second-order learning rate, allowing CABS-D to track the better feedback mechanism without knowing whether surrogates are reliable. Thus, CABS-D retains standard bandit robustness while exploiting informative surrogate feedback. We now state its regret bound.

\begin{algorithm}[t]
\caption{Correlation-Aware Bandits with Surrogates Decoupled (\sffamily{CABS-D})}
\label{alg:CABS-D}
\begin{algorithmic}[1]
\STATE \textbf{Input:} $T$ Rounds; geometric grid 
$\Gamma=\{\gamma^{(1)},\dots,\gamma^{(L)}\}$ with $\gamma^{(g+1)}=2\gamma^{(g)}$; set $M=2L$

\STATE \textbf{Oracles:} Reward oracle $\roracle: \mathcal{X} \times [K] \times \mathbb{R}_{+} \rightarrow \mathbb{R}_+^{K}$, affinity oracle $\aoracle: \mathcal{X} \rightarrow \mathbb{R}^{K \times K}$
\STATE \textbf{Experts:} $m\in\{1,2\}$: SquareCB bandit-feedback $(m=1)$; CABS-C graph-feedback $(m=2)$
\STATE \textbf{Initialize:}  For every meta-copy $(m,g)$ set weight $w_{1,(m,g)} = 1$, and $\Delta_0 = 0$

\FOR{$t=1$ \textbf{to} $T$}
    \STATE Receive context  $\boldsymbol{x}_t$ and corresponding correlation matrix $\boldsymbol{R}_t$ from $\aoracle(\boldsymbol{x}_t)$. 

    \STATE Generate affinity graph $G_t$ with adjacency matrix $\boldsymbol{A}_t$: $A_{t,(i,j)}=1$ w.p. $\sigma((1-R_{t,(i,j)})/2)$

    \STATE Receive policy $\boldsymbol{p}_{t,1}\leftarrow\mathrm{SquareCB}(\boldsymbol{x}_t)$, policy $\boldsymbol{p}_{t,2}\leftarrow\mathrm{CABS\text{-}C}(\boldsymbol{x}_t)$ 
    
    \STATE Compute meta-probabilities and the global mixture policy:
    \[
    \mu_{t,(m,g)}
    =
    \frac{w_{t,(m,g)}}{\sum_{m'=1}^{2}\sum_{g'=1}^{L} w_{t,(m',g')}},
    \qquad
    q_{t,i}
    =
    \sum_{m=1}^{2}\sum_{g=1}^{L}
    \mu_{t,(m,g)}p_{t,m,i},
    \quad i\in[K].
    \]
    
    \STATE Sample action $i_t\sim \boldsymbol{q}_t$ and receive true reward $r_{t,i_t}$.

    \STATE Observe surrogate rewards, $\{s_{t,i}\}$ $\leftarrow \roracle(\boldsymbol{x}_t,i_t,r_{t,i_t})$   $\forall i \in \mathcal{N}_t(i_t)= \{j:A_{t,(i_t,j)}=1\}$.

    \STATE Send $(\boldsymbol{x}_t,i_t,r_{t,i_t})$ to SquareCB and $\left(\boldsymbol{x}_t,i_t,r_{t,i_t}, \{s_{t,i}\}\right)$ to CABS-C for policy updates.
    
    \STATE Define losses
    $
    \ell_{t,i_t}=1-r_{t,i_t}
    $
    and
    $
    \ell_{t,i}=1-s_{t,i}
    $
    for all $i\in \mathcal{N}_t(i_t)$

    \STATE For each copy $(m,g)$ with parameter $\gamma^{(g)}$, compute
    \[
    \hat\ell_{t,(1,g)}
    =
    \frac{\ell_{t,i_t}p_{t,1,i_t}}
    {q_{t,i_t}+\gamma^{(g)}},
    \qquad
    \hat\ell_{t,(2,g)}
    =
    \sum_{i\in \mathcal{N}_t(i_t)}
    \frac{\ell_{t,i}p_{t,2,i}}
    {\sum_{j:A_{t,(j,i)}=1}q_{t,j}+\gamma^{(g)}} .
    \]

    \STATE Set $ 
    \gamma_{\min}=\min_{g\in[L]}\gamma^{(g)},
    \qquad
    \eta_t
    =
    \min\left\{
    \gamma_{\min},
    \sqrt{\frac{\ln M}{1+\Delta_{t-1}}}
    \right\}.
    $

    \STATE  Compute the second-order term $V_t$ and update $\Delta_t$:
    \[
    V_t
    =
    \sum_{m=1}^{2}\sum_{g=1}^{L}
    \mu_{t,(m,g)}
    \left(\hat\ell_{t,(m,g)}\right)^2,
    \qquad
    \Delta_t=\Delta_{t-1}+V_t .
    \]

    \STATE Update each meta-copy weight:
    $
    w_{t+1,(m,g)}
    =
    w_{t,(m,g)}
    \exp\!\left(-\eta_t\hat\ell_{t,(m,g)}\right).
    $
\ENDFOR
\end{algorithmic}
\end{algorithm}

\begin{theorem}[Meta guarantee, two-expert; constants explicit]
\label{theorem:CABS-D}
Let there be $M=2$ experts: one bandit-feedback expert, SquareCB and one graph-feedback expert, CABS-C (tuned for known independence number $\alpha$). Assume losses lie in $[0,1]$. Then Algorithm~\ref{alg:CABS-D} satisfies, 
$
\mathbb{E}\!\left[\mathrm{Reg}_T\right] \le O\!\left(\min\left\{R_1 + \Gamma_1,\; R_2 + \Gamma_2\right\}\right),
$
where $\Gamma_1 = \ln\!\left(KT\ln T\right)\sqrt{KT\ln M \ln T}$ and 
$\Gamma_2 = \ln\!\left( \alpha T \ln(T/\alpha\epsilon)\ln T\right)\sqrt{\alpha T \ln(T/\alpha\epsilon) \ln M \ln T}$.
Here $R_1$ is the regret of the bandit expert and $R_2$ the regret of the graph expert. Using $R_1 = \widetilde O(\sqrt{KT \, \mathrm{Reg_{Sq}}(T)})$ and $R_2 = \widetilde O(\sqrt{\alpha T \, \mathrm{Reg_{Sq}}(T)} + \sqrt{\alpha KT}\,\varepsilon_n)$, we obtain the regret bound (absorbing polylogs)
\[
\mathbb{E}\!\left[\mathrm{Reg}_T\right] = \widetilde O\Big( \min \{\sqrt{KT \, \mathrm{Reg_{Sq}}(T)},\; \sqrt{\alpha T \, \mathrm{Reg_{Sq}}(T)} + \varepsilon_n\sqrt{\alpha KT}\,\}\Big).
\]
\end{theorem}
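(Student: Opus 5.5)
We would follow the standard Corral/EXP3-IX style reduction: decompose the regret of the master into (i) the regret of the exponential-weights master against the best meta-copy $(m,g)$, measured in terms of the mixed policies $\boldsymbol{p}_{t,m}$, and (ii) the regret of the chosen expert $m$ against the optimal arm $i_t^\star$. Concretely, for any fixed $m$,
\[
\mathrm{Reg}_T=\sum_{t}\Big(\langle \boldsymbol{q}_t,\boldsymbol{\ell}_t\rangle-\langle \boldsymbol{p}_{t,m},\boldsymbol{\ell}_t\rangle\Big)+\sum_t\Big(\langle \boldsymbol{p}_{t,m},\boldsymbol{\ell}_t\rangle-\ell_{t,i_t^\star}\Big),
\]
where $\langle\boldsymbol{q}_t,\boldsymbol{\ell}_t\rangle=\sum_{m',g}\mu_{t,(m',g)}\langle\boldsymbol{p}_{t,m'},\boldsymbol{\ell}_t\rangle$. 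The second sum is the expert's own regret, which we bound by $R_1$ (SquareCB, Appendix~\ref{appendix:standard-contextual-bandit}) or $R_2$ (Theorem~\ref{theorem:CABS-C}). A subtlety is that each expert runs on data sampled from $\boldsymbol{q}_t$ rather than its own $\boldsymbol{p}_{t,m}$. We would handle this by noting that the experts' analyses only use the realized importance weights through the regression oracle (Assumption~\ref{assumption:regression_oracle}) and the minimum-probability floor (Assumption~\ref{asm:min-probability}). We would therefore feed them importance weights w.r.t. $\boldsymbol{q}_t$, and argue that their bounds hold up to constants because $q_{t,i}\ge\mu_{t,(m,g)}p_{t,m,i}$ and $\eta_t\le\gamma_{\min}$ keeps $\mu$ from collapsing.

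For the master term, we would apply the AdaHedge-type second-order bound for exponential weights with the adaptive rate $\eta_t=\min\{\gamma_{\min},\sqrt{\ln M/(1+\Delta_{t-1})}\}$. This bound is applied to the implicit-exploration (IX) estimates $\hat\ell_{t,(m,g)}$, giving
\[
\sum_t\langle\mu_t,\hat\ell_t\rangle-\sum_t\hat\ell_{t,(m,g)}\lesssim\sqrt{(1+\Delta_T)\ln M}+\ln M/\gamma_{\min}.
\]
Two further ingredients are needed:
\begin{itemize}
\item \emph{Bias control.} The IX estimates are optimistically biased by the $+\gamma^{(g)}$ in the denominator. The standard Neu-style concentration lemma gives $\sum_t(\hat\ell_{t,(m,g)}-\langle\boldsymbol{p}_{t,m},\boldsymbol{\ell}_t\rangle)\le\ln(L/\delta)/(2\gamma^{(g)})$ with high probability. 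For the bandit copy, the cost of this bias is $\gamma^{(g)}K T$, since $\sum_i p_{t,1,i}/(q_{t,i}+\gamma)$ is at most $K$ in the worst case.
\item \emph{Variance control.} We bound $\mathbb{E}V_t$. For $m=1$ the usual bandit argument gives $\mathbb{E}\hat\ell_{t,(1,g)}^2\lesssim K/\mu$-type terms. We would instead bound it directly by $\sum_i p_{t,1,i}/(q_{t,i}+\gamma)\le K$ up to logs. For $m=2$, the graph lemma of Alon et al.\ for strongly observable graphs gives $\sum_i p_{t,2,i}/(\sum_{j\to i}q_{t,j}+\gamma)\le 4\alpha\ln(4K/(\alpha\epsilon))$. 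The same lemma was invoked in Lemma~\ref{lem:noise_conc}, which is why $\ln(T/\alpha\epsilon)$ appears in $\Gamma_2$.
\end{itemize}
Taking the best grid copy $\gamma^{(g)}\approx\sqrt{\ln M/(KT)}$ (respectively $\sqrt{\ln M/(\alpha T\ln(\cdot))}$) then balances bias against variance and yields master overhead $\Gamma_1$ (respectively $\Gamma_2$). The geometric grid of size $L=O(\ln T)$ costs only the extra $\ln T$ factors and the outer $\ln(\cdot)$ from the union bound.

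Finally, we take the minimum over $m\in\{1,2\}$, since the decomposition holds simultaneously for both experts, and convert the high-probability bounds into expectation by choosing $\delta=1/T$. Plugging in $R_1$ and $R_2$ gives the displayed rate. The main obstacle is step (ii) combined with the bias term: the master's regret against $\langle\boldsymbol{p}_{t,m},\boldsymbol{\ell}_t\rangle$ must be compared with the experts' guarantees, which were proved for playing their own distributions. Here we would try the Corral-style stability argument, the lower bound on $\mu$ induced by $\eta_t\le\gamma_{\min}$, and absorb the resulting inflation into the $O(\cdot)$.

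A second issue is the loss estimate itself. Expert 2's estimator uses surrogate losses $1-s_{t,i}$ rather than true losses, so its estimate is biased by the surrogate error. We would bound this bias by $\sum_t\sum_i p_{t,2,i}(|b_i|+|\xi_{t,i}|)$. Using the de-biasing and Lemma~\ref{lem:noise_conc}, this is of order $\varepsilon_n\sqrt{\alpha KT}$, so it is absorbed into $R_2$ and leaves the minimum intact.
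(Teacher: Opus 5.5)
\textbf{There is a genuine gap.} Your outline follows the paper's skeleton, but the steps you add to make it rigorous do not go through, and the paper does not supply them either. The shared skeleton is: exponential weights with the second-order rate over the $2L$ copies, IX estimates, and a decomposition into the best expert's regret, $\ln M/\eta_T+\frac{1}{2\eta_T}\sum_t\eta_t^2V_t$, and the IX bias of the mixture. Then $\mathbb{E}V_t$ is bounded by $K$ or by the Alon et al.\ lemma, a grid point is chosen to balance bias and variance, and one takes a minimum over $m$. Your use of Neu's high-probability lemma instead of optimistic bias in expectation is harmless. The paper gets past the delicate points, including the two you flag, without resolving them:
\begin{itemize}
\item It identifies $R_m$, which in Lemma~\ref{lem:cabsd-regret-decomp} is the regret of $\boldsymbol p_{t,m}$ along the master's trajectory, with the stand-alone SquareCB/CABS-C bounds.
\item It treats $\ell_{t,i}=1-s_{t,i}$ as the true loss in Lemma~\ref{lem:ix-properties}.
\item It replaces $1/\eta_T$ by $\sqrt{(1+\Delta_{T-1})/\ln M}$, dropping the $\gamma_{\min}$ cap.
\item It bounds the bias by a multiple of $T\sum_g\gamma^{(g)}$, then sets every $\gamma^{(g)}$ to one value.
\end{itemize}

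Your proposed repairs fail as follows.

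(i) The cap only gives $\eta_t\hat\ell_{t,(m,g)}\le 1$, which limits the per-round change of the log-weights. Nothing prevents $\bar\mu_{t,m}:=\sum_g\mu_{t,(m,g)}$ from becoming arbitrarily small. SquareCB's and CABS-C's analyses need the squared error of $\hat f_t$ under $\boldsymbol p_{t,m}$, while the oracle only controls it on arms drawn from $\boldsymbol q_t$. Transferring $R_m$ therefore costs a factor $\max_i p_{t,m,i}/q_{t,i}\le 1/\bar\mu_{t,m}$, which is unbounded. Corral handles this only with a log-barrier master, increasing rates and a stability condition on the base learners.

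(ii) $\sum_t\sum_i p_{t,2,i}(|b_i|+|\xi_{t,i}|)$ is $\Theta(T)$ in general. The master uses the raw $1-s_{t,i}$; de-biasing happens only inside CABS-C's regression. Also, $b_i$ is not controlled by $\varepsilon_n$. Lemma~\ref{lem:noise_conc} bounds the averaged past noise $\frac{1}{n_{t,j}}\sum_{\tau}\xi_{\tau,j}$, not the current $|\xi_{t,i}|$. A fixed $b_i$ therefore shifts expert 2's estimated loss every round, can mislead the master itself, and cannot be folded into $\varepsilon_n\sqrt{\alpha KT}$.

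(iii) Your own master bound contains $\ln M/\gamma_{\min}$. The grid must contain $\approx\sqrt{\ln M/(KT)}$ to produce $\Gamma_1$, so this term is at least $\sqrt{KT\ln M}$. That swamps $\Gamma_2$ whenever $\alpha\ll K$, up to logarithms.

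(iv) You cannot select the best grid copy inside the bias. The mixture bias $\sum_t\sum_{m,g}\mu_{t,(m,g)}\gamma^{(g)}\sum_i p_{t,m,i}\ell_{t,i}/(Q_{t,m,i}+\gamma^{(g)})$ is a $\mu_t$-average over all copies, with $Q_{t,m,i}$ as in Lemma~\ref{lem:cabsd-second-order}. Copies with the same $m$ share $\boldsymbol p_{t,m}$, and $\hat\ell_{t,(m,g)}$ is pointwise nonincreasing in $\gamma^{(g)}$. Hence $w_{t,(m,L)}\ge w_{t,(m,g)}$ for all $t,g$, so $\mu_{t,(m,L)}\ge\bar\mu_{t,m}/L$, and the average is driven by the largest value $\gamma^{(L)}$.

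Finally, because there are only two distinct policies, the unbiased true-reward estimate $\hat\ell_{t,m}=\ell_{t,i_t}p_{t,m,i_t}/q_{t,i_t}$ already gives $\sum_m\bar\mu_{t,m}\mathbb{E}[\hat\ell_{t,m}^2\mid\mathcal F_{t-1}]\le\sum_i\max_m p_{t,m,i}\le 2$. That yields an $O(\sqrt{T})$ master overhead with no IX, no grid and no surrogates, which removes (ii)--(iv). What remains is exactly (i), the step you rightly singled out as the main obstacle, and neither your sketch nor the paper gives an argument for it.
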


The proof can be found in Appendix~\ref{appendix:CABS-D}. While the theorem establishes a general regret guarantee for the  algorithm, the following corollary shows that surrogate rewards strictly improve the regret bound. 

\begin{corollary}[Linear oracle with $m$-feedback]
\label{cor:CABS-D_linear_mfeedback}
Assume that we have a linear oracle and we observe surrogate rewards for $m$ arms per round. Then CABS-D guarantees
\[
\mathbb{E}[\mathrm{Reg}_T] \le \widetilde O\!\left(\min\left\{\sqrt{d\frac{K}{m+1}T\log\!\left(\frac{T}{d}\right)}
\;+\;\varepsilon_n\allowbreak\sqrt{\frac{K^2}{m+1}T},\;  \sqrt{dKT \log\!\left(\frac{T}{d}\right)}\right\}\right).
\]
\end{corollary}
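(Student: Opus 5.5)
The corollary is a specialization of Theorem~\ref{theorem:CABS-D}, obtained by plugging in the linear-oracle values of the two expert regrets. Theorem~\ref{theorem:CABS-D} gives
\[
\mathbb{E}[\mathrm{Reg}_T]\le O\bigl(\min\{R_1+\Gamma_1,\,R_2+\Gamma_2\}\bigr).
\]
It therefore suffices to (i) bound $R_1$ for SquareCB with a linear oracle, (ii) bound $R_2$ via Corollary~\ref{cor:CABS-C_linear_mfeedback}, and (iii) show that each meta-overhead $\Gamma_m$ is dominated, up to polylogarithmic factors, by the corresponding $R_m$. Under $\widetilde O$, the minimum then passes through.

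\textbf{Step (i): the bandit expert.} For linear $f_\theta(\boldsymbol{x},i)=\boldsymbol{x}^\top\boldsymbol{\theta}_i$ with $K$ arms, I will use the standard online least-squares (Vovk--Azoury--Warmuth) bound
\[
\mathrm{Reg}_{\mathrm{Sq}}(T)=O\bigl(dK\log(T/d)\bigr)\quad\text{summed over arms},
\]
or $O(d\log(T/d))$ per arm. With the SquareCB analysis of \cite{foster2020beyond} this gives
\[
R_1=\widetilde O\bigl(\sqrt{KT\,\mathrm{Reg}_{\mathrm{Sq}}(T)}\bigr)=\widetilde O\bigl(\sqrt{dKT\log(T/d)}\bigr),
\]
which is the rate $R_S$ quoted in the introduction. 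I will keep the normalization of $\mathrm{Reg}_{\mathrm{Sq}}$ consistent with the one Corollary~\ref{cor:CABS-C_linear_mfeedback} already used, so that both branches carry the same $d$-dependence.

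\textbf{Step (ii): the graph expert.} For an $m$-feedback graph every arm reveals itself plus $m$ others, so the independence number satisfies $\alpha\le K/(m+1)$; for example, a disjoint union of cliques of size $m+1$ attains this value. Substituting into Theorem~\ref{theorem:CABS-C}, exactly as in Corollary~\ref{cor:CABS-C_linear_mfeedback}, gives
\[
R_2=\widetilde O\Bigl(\sqrt{d\tfrac{K}{m+1}T\log(T/d)}+\varepsilon_n\sqrt{\tfrac{K^2}{m+1}T}\Bigr).
\]
I will simply cite that corollary here.

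\textbf{Step (iii): absorbing the meta-overheads.} With $M$ constant,
\[
\Gamma_1=\widetilde O\bigl(\sqrt{KT}\bigr)\le\widetilde O(R_1)\quad\text{since } d\log(T/d)\ge 1,
\]
and
\[
\Gamma_2=\widetilde O\bigl(\sqrt{\alpha T\log(1/\epsilon)}\bigr)=\widetilde O\bigl(\sqrt{KT/(m+1)}\bigr),
\]
which is dominated by the first term of $R_2$. The one subtle point is the factor $\ln(T/\alpha\epsilon)$ involving the minimum-probability parameter $\epsilon$ from Assumption~\ref{asm:min-probability}. I will choose $\epsilon=1/\mathrm{poly}(T)$, for instance $\epsilon=1/(KT)$, so that this factor is polylogarithmic. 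I also need to check that the uniform mixing required to enforce $p_{t,i}\ge\epsilon$ costs at most $\epsilon KT=O(1)$ extra regret.

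\textbf{Main obstacle.} The main difficulty is justifying $\alpha\le K/(m+1)$ for a general $m$-feedback graph, since a directed graph with out-degree $m$ can have a larger independence number. I would first handle this by interpreting ``$m$-feedback'' as the symmetric or clique-structured case, so that out-neighborhoods cover the arms in blocks of size $m+1$, matching how Corollary~\ref{cor:CABS-C_linear_mfeedback} was stated. Otherwise I would replace $\alpha$ with the domination-based quantity used in the CABS-C analysis. Once this is settled, combining steps (i)--(iii) inside the $\min$ and absorbing logarithmic factors yields the corollary.
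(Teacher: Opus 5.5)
Your argument is essentially the paper's own. The paper derives the corollary by plugging $\mathrm{Reg}_{\mathrm{Sq}}(T)=O(d\log(T/d))$ (the per-arm convention you settle on) and $\alpha=K/(m+1)$ into Theorem~\ref{theorem:CABS-D}, and it justifies $\alpha=K/(m+1)$ exactly through the clique-partition structure you fall back on (Theorem~\ref{thm:m-trans-feedback}: a transitive $m$-regular graph is a disjoint union of $(m+1)$-cliques); mere symmetry would not suffice, since an undirected cycle with $m=2$ has $\alpha=\lfloor K/2\rfloor$, and your explicit absorption of $\Gamma_1,\Gamma_2$ together with the choice $\epsilon=1/(KT)$ and its $O(1)$ mixing cost is extra care the paper leaves implicit inside the $\widetilde O$.
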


In a standard linear contextual bandit setting (which corresponds to having no extra surrogate feedback, or $m = 0$), the typical optimal expected regret bound over $T$ rounds is $\mathbb{E}[\text{Reg}_T] \le \widetilde{O}\left(\sqrt{dKT }\right)$. This baseline matches the second term inside the $\min\{\cdot\}$ operator of the regret in corollary. Because the overall bound is a minimum between two strategies, the algorithm guarantees performance that is at least as good as the vanilla baseline, but can be significantly better depending on the value of $m$ and the noise $\varepsilon_n$. Specifically, for small enough $\varepsilon_n$ and as long as $m > 0$,  the first term inside the minimum takes effect  where the second term in the sum vanishes. We note that the first term in summand remains the dominant factor, as long as  $\varepsilon_n$  satisfies the condition $\varepsilon_n < \tilde{O}(\sqrt{{d}/{K}})$, which can be easily satisfied by training the offline surrogate model with enough samples. In this case,  the regret is entirely dominated by $\widetilde{O}\left(\sqrt{d\left(\frac{K}{m+1}\right)T }\right)$, yielding a significant improvement over the vanilla $\widetilde{O}\left(\sqrt{dKT}\right)$ bound  as the effective action-space dependence drops from $K$ to ${K}/({m+1})$. This significantly shrinks both the primary exploration term and the noise propagation term, allowing the algorithm to learn the underlying linear reward structure much faster than standard bandit feedback would allow.

\section{Experiments}
\label{sec:experiments}

In this section, we conduct experiments to answer four research questions: 

\begin{itemize}[leftmargin=4.2em, labelwidth=3.7em, labelsep=0.5em, itemsep=0.3em]
    \item[\textbf{(RQ1):}] How do coupled reward mixing (CABS-C) and decoupled prediction mixing (CABS-D) compare across cost sensitivities? 
    \item[\textbf{(RQ2):}] Which routing method achieves the highest utility across cost regimes and datasets? 
    \item[\textbf{(RQ3):}] Do correlation-aware surrogate-reward methods improve the accuracy--cost Pareto frontier relative to standard online bandit baselines?
    \item[\textbf{(RQ4):}] Do surrogate rewards accelerate online learning, as measured by reduced cumulative regret over time?    
\end{itemize}

\begin{table}[t]
\centering
\caption{
Average utility (higher is better) and average regret (lower is better) across cost regimes on the SPROUT and RouterBench datasets.
}
\label{tab:cabs_ablation}
\renewcommand{\arraystretch}{1.12}
\setlength{\tabcolsep}{6pt}
\footnotesize
\begin{tabular}{l c c c c c c c c}
\toprule
& \multicolumn{4}{c}{\textbf{SPROUT}}
& \multicolumn{4}{c}{\textbf{RouterBench}} \\
\cmidrule(lr){2-5} \cmidrule(lr){6-9}
\textbf{Regime (Metric)}
& \multicolumn{2}{c}{CABS-C}
& \multicolumn{2}{c}{CABS-D}
& \multicolumn{2}{c}{CABS-C}
& \multicolumn{2}{c}{CABS-D} \\
\cmidrule(lr){2-3} \cmidrule(lr){4-5}
\cmidrule(lr){6-7} \cmidrule(lr){8-9}
& Linear & Neural & Linear & Neural
& Linear & Neural & Linear & Neural \\
\midrule

Low (Utility)
& 0.7233 & 0.7558 & \textbf{0.7715} & 0.7432
& 0.6292 & 0.6421 & \textbf{0.6754} & 0.6638 \\
Medium (Utility)
& 0.6015 & 0.6439 & \textbf{0.6632} & 0.6467
& 0.5308 & 0.5551 & \textbf{0.5817} & 0.5759 \\
High (Utility)
& 0.5407 & 0.4007 & \textbf{0.6265} & 0.5876
& 0.4961 & 0.5110 & \textbf{0.5466} & 0.5372 \\

\midrule

Low (Regret)
& 0.2266 & 0.1941 & \textbf{0.1693} & 0.1887
& 0.3022 & 0.2779 & \textbf{0.2136} & 0.2264 \\
Medium (Regret)
& 0.2982 & 0.2558 & \textbf{0.2139} & 0.2316
& 0.2985 & 0.2759 & \textbf{0.2512} & 0.2571 \\
High (Regret)
& 0.3552 & 0.2807 & \textbf{0.2240} & 0.2431
& 0.2947 & 0.2733 & \textbf{0.2429} & 0.2537 \\
\bottomrule
\end{tabular}
\end{table}

\begin{figure*}[t]
    \centering

    \subfloat[\centering Open LLM Leaderboard v2]{
        \includegraphics[width=0.31\linewidth]{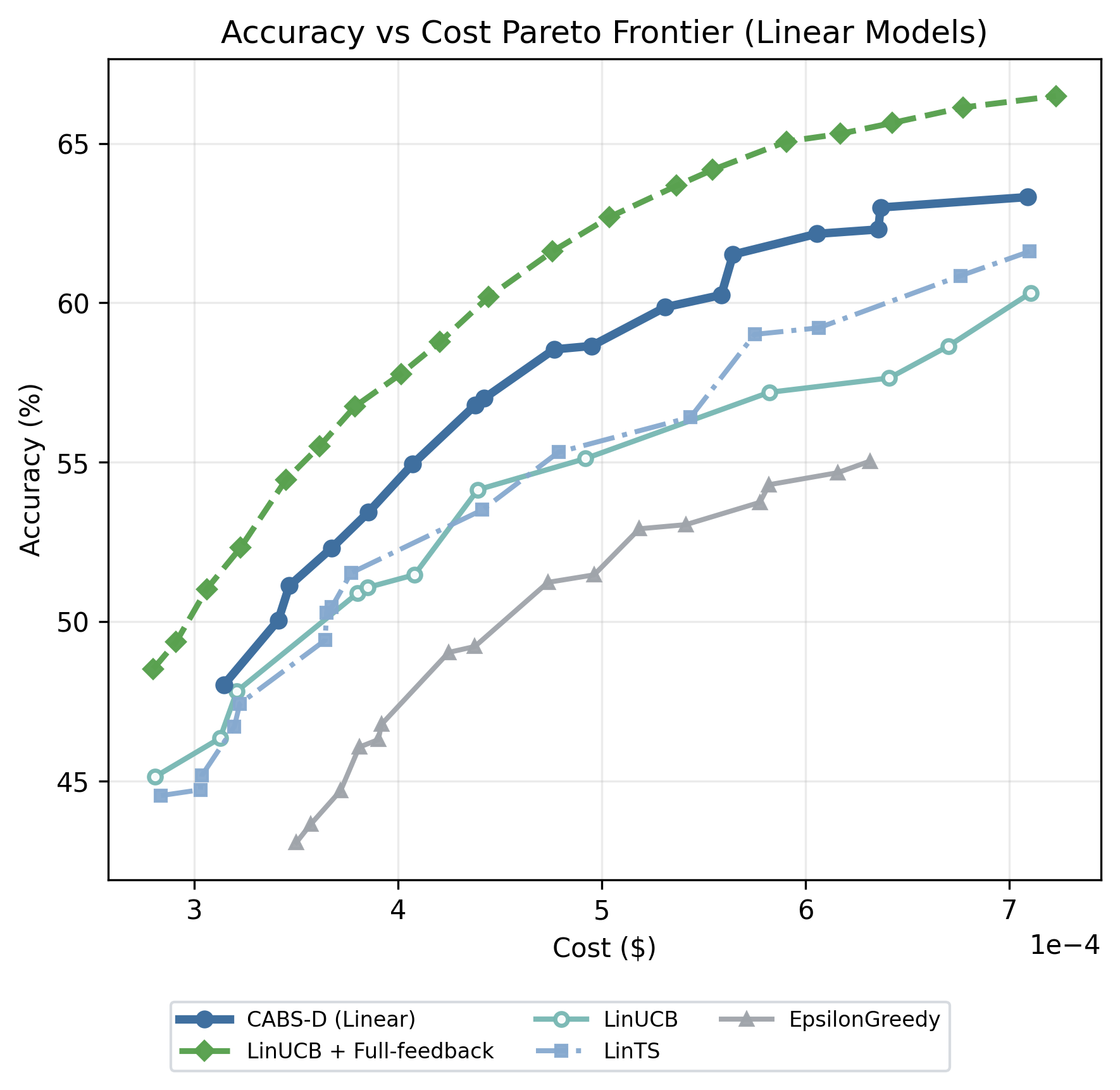}
    }\hfill
    \subfloat[\centering RouterBench]{
        \includegraphics[width=0.31\linewidth]{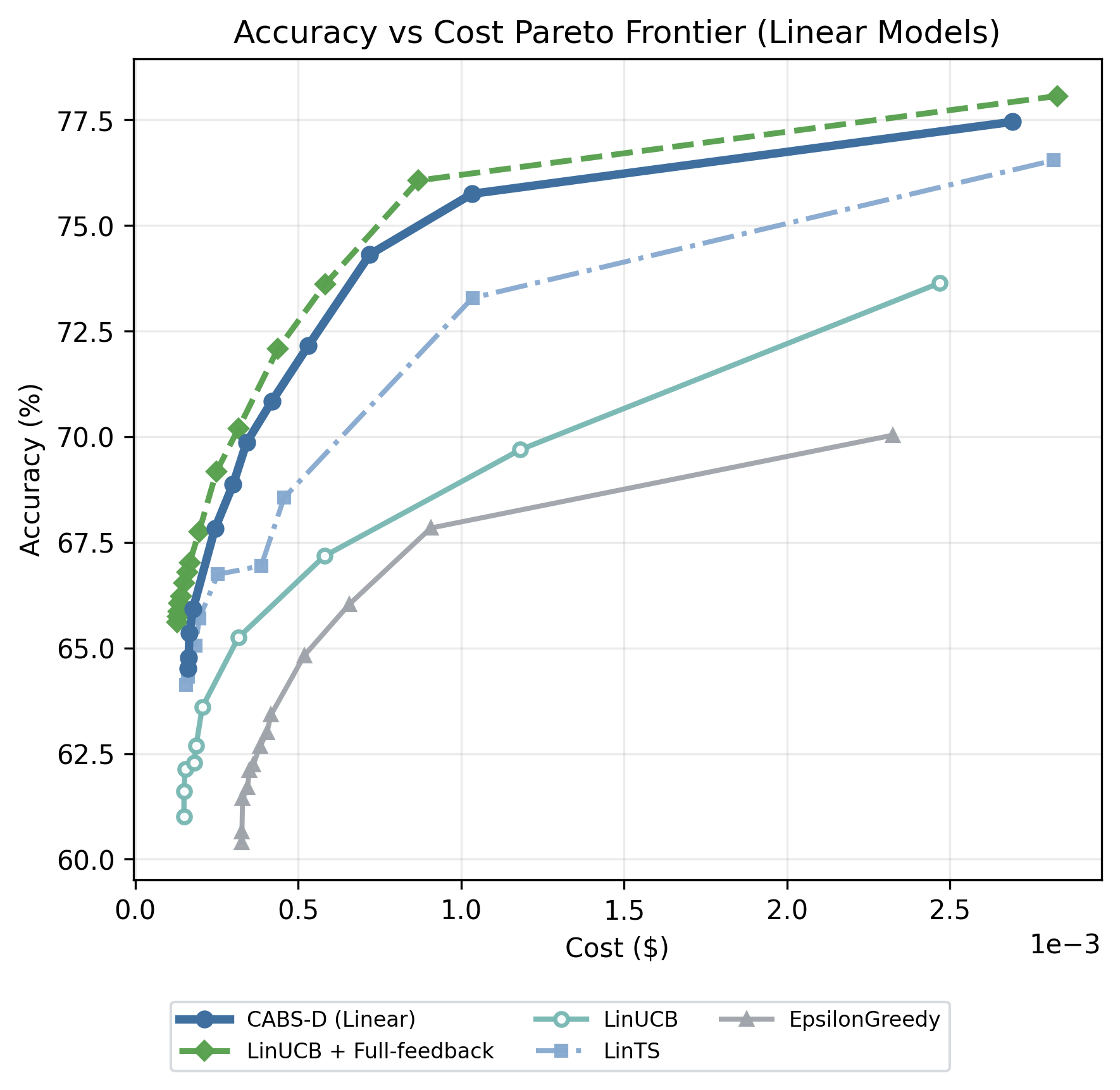}
    }\hfill
    \subfloat[\centering SPROUT]{
        \includegraphics[width=0.31\linewidth]{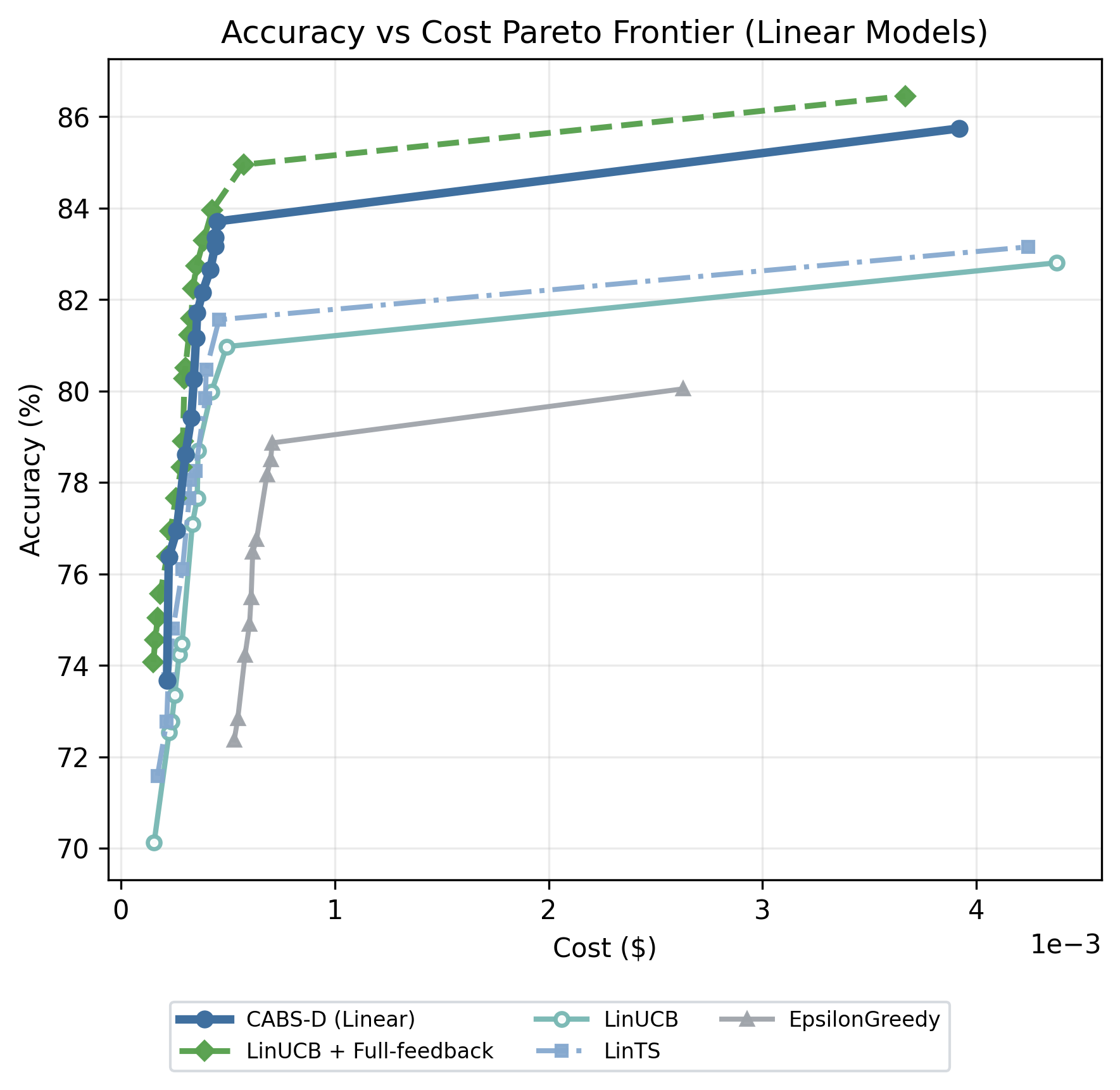}
    }
    \caption{
    Accuracy vs cost Pareto Frontier curves across datasets on Linear Online models. 
    }
    \label{fig:acc_vs_cost_linear_results}
\end{figure*}

\subsection{Experimental Setup}

\paragraph{Datasets.}
We evaluate on three LLM routing benchmarks: RouterBench~\cite{hu2024routerbenchbenchmarkmultillmrouting}, SPROUT~\cite{somerstep2025carrot}, and Open LLM Leaderboard v2~\cite{open-llm-leaderboard-v2}. Dataset details and splits are deferred to Appendix~\ref{app:datasets}.

\paragraph{Baselines.}
We compare against standard online contextual bandit algorithms, including LinUCB~\cite{chu2011contextual}, LinTS~\cite{agrawal2013thompson}, and $\epsilon$-greedy. In the LLM routing setting, observing rewards for unplayed arms is generally unrealistic, as only the selected model is executed. Accordingly, we primarily evaluate methods under bandit feedback (single observed reward per round). For reference, we additionally report hypothetical full-feedback variants (LinUCB/NeuralUCB with full feedback) as information upper bounds. We also include strong supervised routing baselines, such as RoBERTa/CARROT~\cite{somerstep2025carrot} and kNN-based routers~\cite{hu2024routerbenchbenchmarkmultillmrouting}, as well as random selection and single-model baselines. 

\paragraph{Metrics.}
We report relative cumulative regret and average utility over time. Specifically, the relative cumulative regret of an online learner `$\textit{alg}$' is defined with respect to `$olg$' as $R_{\textit{alg}}(t) - R_{\text{olg}}(t)$, where $R_{\textit{alg}}(t)$ denotes the cumulative regret at time $t$. In addition, we summarize test accuracy and average per-query cost, and characterize accuracy–cost trade-offs via the accuracy–cost Pareto frontier by sweeping the cost-sensitivity parameter for each method.

\paragraph{Setup.}
We study LLM routing under varying accuracy and cost trade-offs by sweeping the cost-sensitivity parameter over $\rho\in\{0,50,\ldots,1000\}$, which we group into three cost regimes: low cost sensitivity corresponds to small values of $\rho$ ($\le 300$) where accuracy dominates, medium cost sensitivity ($300 < \rho < 800$) represents a balanced trade-off, and high cost sensitivity ($\rho \ge 800$) corresponds to large values of $\rho$ where cost dominates. For correlation-aware methods, the Reward Oracle (\roracle) imputes surrogate utilities for graph-neighbor arms, while the Affinity Oracle (\aoracle) constructs the context-dependent feedback graph from predicted inter-arm similarity. Both are lightweight MLP heads on top of mDeBERTaV3-base query embeddings~\cite{he2021deberta}. Refer to Appendix~\ref{appendix:implementation} for implementation details.

\subsection{Reward Mixing vs Prediction Mixing (RQ1)}
\label{sec:reward_mixing_vs_prediction_mixing}

Table~\ref{tab:cabs_ablation} compares the coupled (CABS-C) and decoupled (CABS-D) variants by reporting average utility (higher is better) and average regret (lower is better) across three cost regimes in SPROUT and RouterBench dataset (see Appendix~\ref{appendix:experiments} for Open LLM LB V2 dataset). Across all regimes, CABS-D, i.e, model mixing, consistently improves utility and reduces regret relative to CABS-C, indicating that \emph{decoupling} reward learning from surrogate learning yields a more reliable use of surrogate signals. This is aligned with our theory: while coupling can amplify surrogate noise or bias, the decoupled construction can exploit informative surrogate predictions early while falling back to reward-based learning as data accumulates. Based on these results, we use CABS-D as the default instantiation in the remainder of our experiments.

\begin{figure*}[h]
    \centering
 \includegraphics[width=\linewidth]{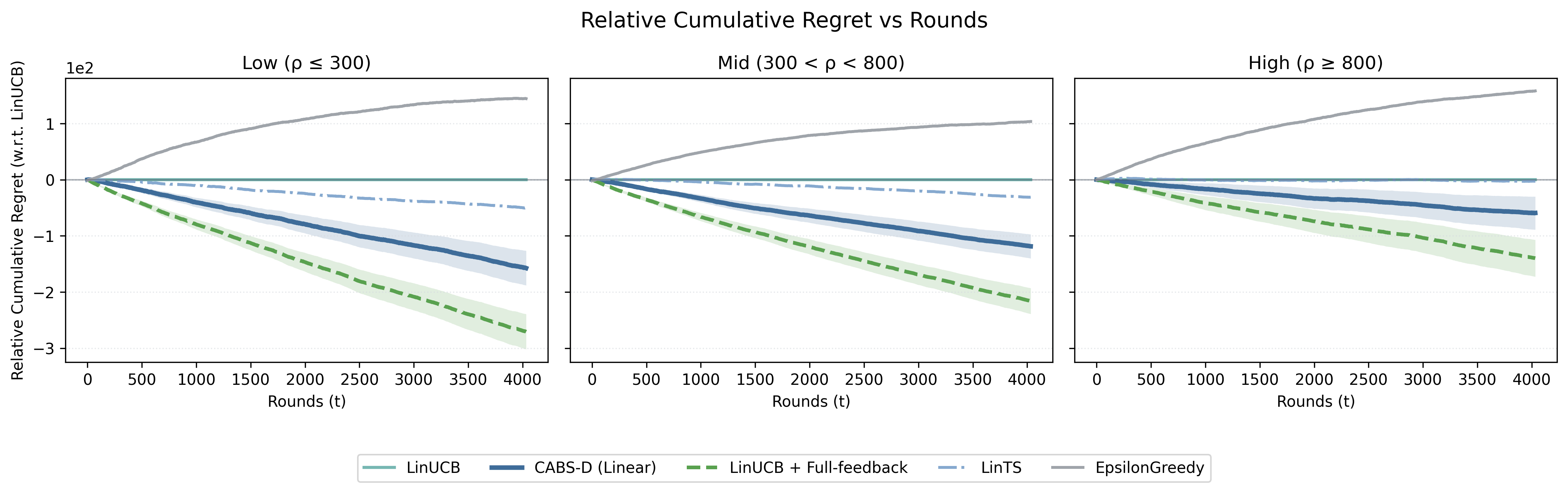}

    \caption{
        Relative Cumulative regrets with respect to LinUCB for other linear online learning methods on Open LLM Leaderboard V2 dataset across low, medium and high cost regimes.
    }
    \label{fig:regret_curves}
\end{figure*}

\begin{table*}[ht]
\centering
\caption{Utilities across cost regimes and datasets 
}
\label{tab:utility_results}
\renewcommand{\arraystretch}{1.12}
\setlength{\tabcolsep}{6pt}

\resizebox{\textwidth}{!}{%
\begin{tabular}{ l | c c c | c c c | c c c }
\hline
& \multicolumn{3}{c|}{\textbf{Open LLM LB v2}} 
& \multicolumn{3}{c|}{\textbf{RouterBench}} 
& \multicolumn{3}{c}{\textbf{SPROUT}} \\
\cline{2-4} \cline{5-7} \cline{8-10}
\textbf{Method}
& Low & Medium & High
& Low & Medium & High
& Low & Medium & High \\
\hline

CARROT~\cite{somerstep2025carrot}
& 0.5118 & 0.3101 & 0.1948
& 0.6457 & 0.4820 & 0.3960
& 0.7615 & 0.5692 & 0.4560 \\

RouterBench~\cite{hu2024routerbenchbenchmarkmultillmrouting}
&  0.4045 & 0.2192 & 0.1268
& 0.5920 & 0.5103 & 0.4704 
& 0.7196 & 0.6034 & 0.5017 \\

\hline

Random Selection
& 0.3822 & 0.2926 & 0.1889	
& 0.4509 & 0.3431 & 0.2899
& 0.5872 & 0.4472 & 0.3863 \\

Epsilon-Greedy
& 0.4544 & 0.2543 & 0.1153
& 0.6057 & 0.5169 & 0.4751
& 0.7194 & 0.6138 & 0.5655 \\

LinTS~\cite{agrawal2013thompson}
& 0.5017 & 0.2973 & 0.1822
& 0.6514 & 0.5725 & 0.5381
& 0.7628 & 0.6548 & 0.6200 \\

LinUCB~\cite{chu2011contextual}
& 0.4876 & 0.2846 & 0.1817
& 0.6276 & 0.5437 & 0.5082
& 0.7579 & 0.6561 & 0.6110 \\

NeuralUCB~\cite{zhou2020neural}
& 0.4493 & 0.2471 & 0.1558 
& 0.6344 & 0.5418 & 0.4950
 & 0.7086 & 0.6266 & 0.5501 \\

\rowcolor{lightgray}
CABS-D (Linear)
& \textbf{0.5328} & 0.3229 & 0.1951
& \textbf{0.6754} & \textbf{0.5817} & \textbf{0.5466}
& \textbf{0.7715} & \textbf{0.6632} & \textbf{0.6265}  \\

\rowcolor{lightgray}
CABS-D (Neural)
& 0.5284 & \textbf{0.3289} & \textbf{0.1956} 
& 0.6638 & 0.5759 & 0.5372
& 0.7432 & 0.6467 & 0.5876 \\

\hline
LinUCB + Full-Feedback
& 0.5630 & 0.3600 & 0.2345
& 0.6838 & 0.6024 & 0.5660
& 0.7957 & 0.7070 & 0.6643 \\

NeuralUCB + Full-Feedback
& 0.5454 & 0.3620 & 0.2388
& 0.6704 & 0.6050 & 0.5588
& 0.7732 & 0.6842 & 0.6246 \\

\hline
\end{tabular}%
}
\end{table*}

%%%%%%%%%%%%%%%%%%%%%%%%%%%%%%%%%%

\subsection{Utility across cost regimes (RQ2)}

Table~\ref{tab:utility_results} reports average utility (higher is better) across three routing benchmarks under Low/Mid/High cost-sensitivity regimes. Across all datasets and regimes, CABS-D is the strongest \emph{implementable} method: either its linear or neural instantiation attains the best non-oracle utility across cost regimes. Relative to standard online bandit baselines, CABS-D yields consistent gains ($\approx 8\%$ on Open LLM Leaderboard v2, $\approx 3\%$ on RouterBench, and $\approx 1\%$ on SPROUT), with the largest improvements on the more heterogeneous Open LLM Leaderboard v2 ($\approx 15\%, 9\%, 7\%$ in the Low/Mid/High regimes). On RouterBench and SPROUT, CABS-D continues to dominate online baselines and remains competitive with strong static routers (CARROT/KNN), showing that online correlation-aware learning can match or exceed fixed routers while adapting to the target cost regime. Full-feedback variants provide an information upper bound, but CABS-D closes a substantial portion of this gap under realistic bandit feedback.

\subsection{Accuracy vs Cost Trade-off (RQ3)}

Figure~\ref{fig:acc_vs_cost_linear_results} shows the trade-off between accuracy and average per-query cost on Open LLM Leaderboard v2, RouterBench, and SPROUT. Each curve traces a routing policy as the cost-sensitivity parameter $\rho$ varies $ \in [0,1000]$. Across all datasets, CABS-D consistently improve the accuracy versus cost Pareto frontier compared to standard online bandit baselines. In particular, CABS-D achieves higher accuracy at comparable cost levels (or equivalently lower cost for a given accuracy) relative to LinUCB, LinTS, and $\epsilon$-greedy. While full-feedback variants provide an optimistic upper bound, CABS-D substantially narrows this gap under realistic bandit feedback, answering RQ3 affirmatively.

\subsection{Learning dynamics (RQ4)}
\label{sec:learning_dynamics}

Figure~\ref{fig:regret_curves} reports relative cumulative regret over time normalized with respect to LinUCB, across low, mid, and high cost-sensitivity regimes for Open LLM Leaderboard V2. Across all regimes, CABS-D consistently achieves lower cumulative regret than standard bandit baselines including LinUCB, LinTS, and $\epsilon$-greedy. The improvement is most pronounced in the mid-cost regime, where the routing objective requires balancing accuracy and cost, and surrogate rewards provide the greatest benefit. Notably, the regret gap emerges early and persists throughout the horizon, indicating that surrogate rewards accelerate online learning rather than merely improving asymptotic performance.

\section{Discussions}

We discuss the main trade-offs between coupled and decoupled use of surrogate rewards, the computational cost of the proposed algorithms, and practical deployment considerations for LLM routing.

\paragraph{Comparison of Coupled and Decoupled Approaches:} CABS-C and CABS-D differ primarily in how tightly they integrate surrogate feedback with true bandit feedback. CABS-C follows a coupled strategy: after debiasing surrogate rewards and applying importance weighting, it feeds both true and surrogate observations into a common regression oracle. This coupling can improve sample efficiency when the surrogate rewards are informative, since each round provides feedback for multiple correlated arms. However, because the two feedback sources update the same predictor, persistent surrogate bias or high surrogate noise can still affect the learned model, even when importance weighting partially controls the contribution of each observation. CABS-D addresses this limitation by maintaining separate learners for the standard bandit-feedback signal and the surrogate-aware feedback signal, and then adaptively mixing their induced policies. As a result, when surrogate rewards are useful, CABS-D can exploit them for faster learning; when they are uninformative or too noisy, it can fall back on the standard contextual bandit expert and retain the corresponding regret guarantee $\widetilde O(\sqrt{KT,\mathrm{Reg}_{\mathrm{Sq}}(T)})$.

\paragraph{Computational Complexity:} This robustness comes with only a modest additional computational cost. CABS-C has complexity \(O(K^3 + \text{\sffamily RS})\), while CABS-D has \(O(K^3 + K\ln T + \text{\sffamily RS})\). The \(K^3\) term comes from the optimization in line 8 of Algorithm~\ref{alg:CABS-C}, and the extra \(K\ln T\) term in CABS-D comes from the dyadic grid in Algorithm~\ref{alg:CABS-D}. For a linear oracle, \(O(\text{\sffamily RS})=O(Kd+d^2)\). In contrast, standard linear bandit methods such as LinUCB/LinTS are typically dominated by matrix operations cubic in the feature dimension \(d\), $O(Kd^3)$. Therefore, CABS-C and CABS-D may be preferable in settings with large \(d\) or more general nonlinear function classes, whereas their stronger dependence on \(K\) can make them less attractive when the number of actions is very large.

\paragraph{Surrogate Reward Prediction Overheads:} Beyond the learner update, the additional cost of producing surrogate rewards is small in our implementation. The reward oracle is a 2-layer MLP that reuses the context embeddings already computed for the bandit, together with the chosen arm embedding and observed reward. In batch-size-1 measurements, embedding generation takes 12.32 ms on average, while surrogate prediction itself takes only 0.22 ms on average. Thus, the incremental overhead from surrogate prediction beyond the encoder is negligible.

\paragraph{Extending Utility:}
Our current utility objective can already capture latency when latency is embedded in the cost term, for example through a deployment-specific cost that combines monetary price, compute, and response time. More explicitly, one can extend the utility as $ r_{t,a} = s(\mathbf{x}_t,a) - \rho\, c(\mathbf{x}_t,a) - \rho_{1}\, l(\mathbf{x}_t,a),$
% \[
%     r_{t,a} = s(\mathbf{x}_t,a) - \rho\, c(\mathbf{x}_t,a) - \rho_{1}\, l(\mathbf{x}_t,a),
% \]
where $s(\mathbf{x}_t,a)$ is the accuracy score, $c(\mathbf{x}_t,a)$ is the monetary cost, and $l(\mathbf{x}_t,a)$ is the latency of arm $a$ for context $\mathbf{x}_t$. Another promising direction is to combine our framework with delayed-feedback bandits~\cite{zhang2025contextuallinearbanditsdelay}, which could model latency more directly.

%%%%%%%%%%%%%%%%%%%%%%%
\section{Conclusion}

We introduce a framework for contextual bandits with correlated arms and noisy surrogate rewards, motivated by LLM routing settings where true rewards are expensive or stochastic but cheaper proxy signals are available. We propose two complementary algorithms for leveraging these signals: a coupled reward-mixing method that pools true and surrogate feedback, and a decoupled prediction-mixing method that maintains separate learners for bandit-only feedback and bandit-plus-surrogate feedback before adaptively combining their policies. Our regret analysis clarifies when tight coupling is beneficial and when decoupling is needed for robustness. Experiments on LLM routing benchmarks demonstrate that correlation-aware surrogate integration significantly improves sample efficiency and achieves superior accuracy–cost trade-offs compared to existing baselines, validating both the theoretical and practical benefits of the proposed framework.

\section*{Acknowledgment}

This material is based upon work supported by the National Science Foundation (NSF) under Grant Number 2318101. Any opinions, findings, and conclusions or recommendations expressed in this material are those of the author(s) and do not necessarily reflect the views of the National Science Foundation. 

\bibliographystyle{plain}
\bibliography{references}   

@article{chiang2025llm,
  title={LLM Routing with Dueling Feedback},
  author={Chiang, Chao-Kai and Ishida, Takashi and Sugiyama, Masashi},
  journal={arXiv preprint arXiv:2510.00841},
  year={2025}
}

@book{lattimore2020bandit,
  title={Bandit algorithms},
  author={Lattimore, Tor and Szepesv{\'a}ri, Csaba},
  year={2020},
  publisher={Cambridge University Press}
}

@inproceedings{zhou2020neural,
  title={Neural contextual bandits with ucb-based exploration},
  author={Zhou, Dongruo and Li, Lihong and Gu, Quanquan},
  booktitle={International conference on machine learning},
  pages={11492--11502},
  year={2020},
  organization={PMLR}
}

@inproceedings{foster2020beyond,
  title={Beyond ucb: Optimal and efficient contextual bandits with regression oracles},
  author={Foster, Dylan and Rakhlin, Alexander},
  booktitle={International conference on machine learning},
  pages={3199--3210},
  year={2020},
  organization={PMLR}
}

@article{auer2002nonstochastic,
  title={The nonstochastic multiarmed bandit problem},
  author={Auer, Peter and Cesa-Bianchi, Nicolo and Freund, Yoav and Schapire, Robert E},
  journal={SIAM journal on computing},
  volume={32},
  number={1},
  pages={48--77},
  year={2002},
  publisher={SIAM}
}

@misc{nguyen2024metallm,
      title={MetaLLM: A High-performant and Cost-efficient Dynamic Framework for Wrapping LLMs}, 
      author={Quang H. Nguyen and Thinh Dao and Duy C. Hoang and Juliette Decugis and Saurav Manchanda and Nitesh V. Chawla and Khoa D. Doan},
      year={2025},
      eprint={2407.10834},
      archivePrefix={arXiv},
      primaryClass={cs.LG},
      url={https://arxiv.org/abs/2407.10834}, 
}

@article{zhang2023practical,
  title={Practical contextual bandits with feedback graphs},
  author={Zhang, Mengxiao and Zhang, Yuheng and Vrousgou, Olga and Luo, Haipeng and Mineiro, Paul},
  journal={Advances in Neural Information Processing Systems},
  volume={36},
  pages={30592--30617},
  year={2023}
}

@article{chen2024routerdc,
  title={Routerdc: Query-based router by dual contrastive learning for assembling large language models},
  author={Chen, Shuhao and Jiang, Weisen and Lin, Baijiong and Kwok, James and Zhang, Yu},
  journal={Advances in Neural Information Processing Systems},
  volume={37},
  pages={66305--66328},
  year={2024}
}

@misc{hu2024routerbenchbenchmarkmultillmrouting,
      title={RouterBench: A Benchmark for Multi-LLM Routing System}, 
      author={Qitian Jason Hu and Jacob Bieker and Xiuyu Li and Nan Jiang and Benjamin Keigwin and Gaurav Ranganath and Kurt Keutzer and Shriyash Kaustubh Upadhyay},
      year={2024},
      eprint={2403.12031},
      archivePrefix={arXiv},
      primaryClass={cs.LG},
      url={https://arxiv.org/abs/2403.12031}, 
}

@inproceedings{chu2011contextual,
  title={Contextual bandits with linear payoff functions},
  author={Chu, Wei and Li, Lihong and Reyzin, Lev and Schapire, Robert},
  booktitle={Proceedings of the fourteenth international conference on artificial intelligence and statistics},
  pages={208--214},
  year={2011},
  organization={JMLR Workshop and Conference Proceedings}
}

@inproceedings{ong2024routellm,
  title={RouteLLM: Learning to Route LLMs from Preference Data},
  author={Ong, Isaac and Almahairi, Amjad and Wu, Vincent and Chiang, Wei-Lin and Wu, Tianhao and Gonzalez, Joseph E and Kadous, M Waleed and Stoica, Ion},
  booktitle={The Thirteenth International Conference on Learning Representations},
  year={2025}
}

@misc{huggingface_models,
  author = {{Hugging Face}},
  title = {Hugging Face models},
  year = {2025},
  howpublished = "\url{https://huggingface.co/models}",
  note = "Accessed: 2025-05-10"
}

@misc{chatgpt2025,
  author = {{OpenAI}},
  title = {ChatGPT},
  year = {2025},
  howpublished = "\url{https://chat.openai.com/chat}",
  note = "Accessed: 2025-05-10"
}

@misc{gemini15pro,
  author = {{Google AI}},
  title = {Gemini 1.5 Pro Model Overview},
  year = {2024},
  howpublished = "\url{https://ai.google.dev/gemini-api/docs/models/gemini}",
  note = "Accessed: 2025-05-10"
}

@misc{grok1_mla,
  author = {{xAI}},
  title = {Open Release of Grok-1},
  year = {2024},
  howpublished = "\url{https://x.ai/blog/grok-os}",
  note = "Accessed: 2025-05-10"
}

@article{chen2023frugalgptuselargelanguage,
  title={FrugalGPT: How to Use Large Language Models While Reducing Cost and Improving Performance},
  author={Chen, Lingjiao and Zaharia, Matei and Zou, James},
  journal={Transactions on Machine Learning Research},
  year={2024}
}

@misc{huang2025thriftllm,
      title={ThriftLLM: On Cost-Effective Selection of Large Language Models for Classification Queries}, 
      author={Keke Huang and Yimin Shi and Dujian Ding and Yifei Li and Yang Fei and Laks Lakshmanan and Xiaokui Xiao},
      year={2025},
      eprint={2501.04901},
      archivePrefix={arXiv},
      primaryClass={cs.DB},
      url={https://arxiv.org/abs/2501.04901}, 
}

@misc{zhao2024eagle,
      title={Eagle: Efficient Training-Free Router for Multi-LLM Inference}, 
      author={Zesen Zhao and Shuowei Jin and Z. Morley Mao},
      year={2024},
      eprint={2409.15518},
      archivePrefix={arXiv},
      primaryClass={cs.LG},
      url={https://arxiv.org/abs/2409.15518}, 
}

@misc{jiang2023llm,
      title={LLM-Blender: Ensembling Large Language Models with Pairwise Ranking and Generative Fusion}, 
      author={Dongfu Jiang and Xiang Ren and Bill Yuchen Lin},
      year={2023},
      eprint={2306.02561},
      archivePrefix={arXiv},
      primaryClass={cs.CL},
      url={https://arxiv.org/abs/2306.02561}, 
}

@misc{stripelis2024tensoropera,
      title={TensorOpera Router: A Multi-Model Router for Efficient LLM Inference}, 
      author={Dimitris Stripelis and Zijian Hu and Jipeng Zhang and Zhaozhuo Xu and Alay Dilipbhai Shah and Han Jin and Yuhang Yao and Salman Avestimehr and Chaoyang He},
      year={2024},
      eprint={2408.12320},
      archivePrefix={arXiv},
      primaryClass={cs.AI},
      url={https://arxiv.org/abs/2408.12320}, 
}

@misc{li2025llmbanditcostefficientllm,
      title={LLM Bandit: Cost-Efficient LLM Generation via Preference-Conditioned Dynamic Routing}, 
      author={Yang Li},
      year={2025},
      eprint={2502.02743},
      archivePrefix={arXiv},
      primaryClass={cs.LG},
      url={https://arxiv.org/abs/2502.02743}, 
}

@misc{wang2025mixllmdynamicroutingmixed,
      title={MixLLM: Dynamic Routing in Mixed Large Language Models}, 
      author={Xinyuan Wang and Yanchi Liu and Wei Cheng and Xujiang Zhao and Zhengzhang Chen and Wenchao Yu and Yanjie Fu and Haifeng Chen},
      year={2025},
      eprint={2502.18482},
      archivePrefix={arXiv},
      primaryClass={cs.CL},
      url={https://arxiv.org/abs/2502.18482}, 
}

@inproceedings{li2010contextual,
  title={A contextual-bandit approach to personalized news article recommendation},
  author={Li, Lihong and Chu, Wei and Langford, John and Schapire, Robert E},
  booktitle={Proceedings of the 19th international conference on World wide web},
  pages={661--670},
  year={2010}
}

@inproceedings{
he2021deberta,
title={DEBERTA: DECODING-ENHANCED BERT WITH DISENTANGLED ATTENTION},
author={Pengcheng He and Xiaodong Liu and Jianfeng Gao and Weizhu Chen},
booktitle={International Conference on Learning Representations},
year={2021},
url={https://openreview.net/forum?id=XPZIaotutsD}
}

@article{auer2002finite,
  title={Finite-time analysis of the multiarmed bandit problem},
  author={Auer, Peter and Cesa-Bianchi, Nicolo and Fischer, Paul},
  journal={Machine learning},
  volume={47},
  number={2},
  pages={235--256},
  year={2002},
  publisher={Springer}
}

@inproceedings{agrawal2012analysis,
  title={Analysis of thompson sampling for the multi-armed bandit problem},
  author={Agrawal, Shipra and Goyal, Navin},
  booktitle={Conference on learning theory},
  pages={39--1},
  year={2012},
  organization={JMLR Workshop and Conference Proceedings}
}

@article{valko2013finite,
  title={Finite-time analysis of kernelised contextual bandits},
  author={Valko, Michal and Korda, Nathaniel and Munos, R{\'e}mi and Flaounas, Ilias and Cristianini, Nelo},
  journal={arXiv preprint arXiv:1309.6869},
  year={2013}
}

@inproceedings{neu2020efficient,
  title={Efficient and robust algorithms for adversarial linear contextual bandits},
  author={Neu, Gergely and Olkhovskaya, Julia},
  booktitle={Conference on Learning Theory},
  pages={3049--3068},
  year={2020},
  organization={PMLR}
}

@article{foster2021statistical,
  title={The statistical complexity of interactive decision making},
  author={Foster, Dylan J and Kakade, Sham M and Qian, Jian and Rakhlin, Alexander},
  journal={arXiv preprint arXiv:2112.13487},
  year={2021}
}

@article{zhang2024efficient,
  title={Efficient contextual bandits with uninformed feedback graphs},
  author={Zhang, Mengxiao and Zhang, Yuheng and Luo, Haipeng and Mineiro, Paul},
  journal={arXiv preprint arXiv:2402.08127},
  year={2024}
}

@article{mannor2011bandits,
  title={From bandits to experts: On the value of side-observations},
  author={Mannor, Shie and Shamir, Ohad},
  journal={Advances in Neural Information Processing Systems},
  volume={24},
  year={2011}
}

@article{chen2021understanding,
  title={Understanding bandits with graph feedback},
  author={Chen, Houshuang and Li, Shuai and Zhang, Chihao and others},
  journal={Advances in Neural Information Processing Systems},
  volume={34},
  pages={24659--24669},
  year={2021}
}

@article{somerstep2025carrot,
  title={Carrot: A cost aware rate optimal router},
  author={Somerstep, Seamus and Polo, Felipe Maia and de Oliveira, Allysson Flavio Melo and Mangal, Prattyush and Silva, M{\'\i}rian and Bhardwaj, Onkar and Yurochkin, Mikhail and Maity, Subha},
  journal={arXiv preprint arXiv:2502.03261},
  year={2025}
}

@inproceedings{agrawal2013thompson,
  title={Thompson Sampling For Contextual Bandits With Linear Payoffs},
  author={Agrawal, Shipra and Goyal, Navin},
  booktitle={30th International Conference on Machine Learning (ICML)},
  year={2013}
}

@misc{open-llm-leaderboard-v2,
  author = {Clémentine Fourrier and Nathan Habib and Alina Lozovskaya and Konrad Szafer and Thomas Wolf},
  title = {Open LLM Leaderboard v2},
  year = {2024},
  publisher = {Hugging Face},
  howpublished = "\url{https://huggingface.co/spaces/open-llm-leaderboard/open_llm_leaderboard}",
}

@inproceedings{alon2015online,
  title={Online learning with feedback graphs: Beyond bandits},
  author={Alon, Noga and Cesa-Bianchi, Nicolo and Dekel, Ofer and Koren, Tomer},
  booktitle={Conference on Learning Theory},
  pages={23--35},
  year={2015},
  organization={PMLR}
}

@misc{alon2014nonstochasticmultiarmedbanditsgraphstructured,
      title={Nonstochastic Multi-Armed Bandits with Graph-Structured Feedback}, 
      author={Noga Alon and Nicolò Cesa-Bianchi and Claudio Gentile and Shie Mannor and Yishay Mansour and Ohad Shamir},
      year={2014},
      eprint={1409.8428},
      archivePrefix={arXiv},
      primaryClass={cs.LG},
      url={https://arxiv.org/abs/1409.8428}, 
}

@techreport{caro1979new,
  title={New results on the independence number},
  author={Caro, Yair},
  year={1979},
  institution={Technical Report, Tel-Aviv University}
}

@misc{wei1981lower,
  title={A lower bound on the stability number of a simple graph},
  author={Wei, Victor K},
  year={1981},
  publisher={Bell Laboratories Technical Memorandum Murray Hill, NJ, USA}
}

@article{wei2025learning,
  title={Learning to Route LLMs from Bandit Feedback: One Policy, Many Trade-offs},
  author={Wei, Wang and Yang, Tiankai and Chen, Hongjie and Zhao, Yue and Dernoncourt, Franck and Rossi, Ryan A and Eldardiry, Hoda},
  journal={arXiv preprint arXiv:2510.07429},
  year={2025}
}

@article{dai2024cost,
  title={Cost-effective online multi-llm selection with versatile reward models},
  author={Dai, Xiangxiang and Li, Jin and Liu, Xutong and Yu, Anqi and Lui, John},
  journal={arXiv preprint arXiv:2405.16587},
  year={2024}
}

@article{feng2024graphrouter,
  title={Graphrouter: A graph-based router for llm selections},
  author={Feng, Tao and Shen, Yanzhen and You, Jiaxuan},
  journal={arXiv preprint arXiv:2410.03834},
  year={2024}
}

@misc{allMiniLM_L6_v2_hf,
  title        = {{all-MiniLM-L6-v2}: Sentence Transformer Embedding Model},
  author       = {{Hugging Face}},
  year         = {2025},
  howpublished = {\url{https://huggingface.co/sentence-transformers/all-MiniLM-L6-v2}},
  note         = {Accessed: 2026-02-15},
}

@article{vovk1997competitive,
  title={Competitive on-line linear regression},
  author={Vovk, Volodya},
  journal={Advances in Neural Information Processing Systems},
  volume={10},
  year={1997}
}

@article{azoury2001relative,
  title={Relative loss bounds for on-line density estimation with the exponential family of distributions},
  author={Azoury, Katy S and Warmuth, Manfred K},
  journal={Machine learning},
  volume={43},
  number={3},
  pages={211--246},
  year={2001},
  publisher={Springer}
}

@misc{zhang2025contextuallinearbanditsdelay,
      title={Contextual Linear Bandits with Delay as Payoff}, 
      author={Mengxiao Zhang and Yingfei Wang and Haipeng Luo},
      year={2025},
      eprint={2502.12528},
      archivePrefix={arXiv},
      primaryClass={cs.LG},
      url={https://arxiv.org/abs/2502.12528}, 
}

@inproceedings{cheung2024leveraging,
  title={Leveraging (biased) information: Multi-armed bandits with offline data},
  author={Cheung, Wang Chi and Lyu, Lixing},
  booktitle={Forty-first International Conference on Machine Learning},
  year={2024}
}

@article{verma2023exploiting,
  title={Exploiting correlated auxiliary feedback in parameterized bandits},
  author={Verma, Arun and Dai, Zhongxiang and Shu, Yao and Low, Bryan Kian Hsiang},
  journal={Advances in Neural Information Processing Systems},
  volume={36},
  pages={4430--4451},
  year={2023}
}

@article{ji2025multi,
  title={Multi-armed bandits with machine learning-generated surrogate rewards},
  author={Ji, Wenlong and Pan, Yihan and Zhu, Ruihao and Lei, Lihua},
  journal={arXiv preprint arXiv:2506.16658},
  year={2025}
}

@book{sugiyama2012machine,
  title={Machine learning in non-stationary environments: Introduction to covariate shift adaptation},
  author={Sugiyama, Masashi and Kawanabe, Motoaki},
  year={2012},
  publisher={MIT press}
}

@article{shimodaira2000improving,
  title={Improving predictive inference under covariate shift by weighting the log-likelihood function},
  author={Shimodaira, Hidetoshi},
  journal={Journal of statistical planning and inference},
  volume={90},
  number={2},
  pages={227--244},
  year={2000},
  publisher={Elsevier}
}

@book{shalev2014understanding,
  title={Understanding machine learning: From theory to algorithms},
  author={Shalev-Shwartz, Shai and Ben-David, Shai},
  year={2014},
  publisher={Cambridge university press}
}

@book{vapnik2013nature,
  title={The nature of statistical learning theory},
  author={Vapnik, Vladimir},
  year={2013},
  publisher={Springer science \& business media}
}

@article{erven2011adaptive,
  title={Adaptive hedge},
  author={Erven, Tim and Koolen, Wouter M and Rooij, Steven and Gr{\"u}nwald, Peter},
  journal={Advances in Neural Information Processing Systems},
  volume={24},
  year={2011}
}

@article{freund1997decision,
  title={A decision-theoretic generalization of on-line learning and an application to boosting},
  author={Freund, Yoav and Schapire, Robert E},
  journal={Journal of computer and system sciences},
  volume={55},
  number={1},
  pages={119--139},
  year={1997},
  publisher={Elsevier}
}

@inproceedings{agarwal2017corralling,
  title={Corralling a band of bandit algorithms},
  author={Agarwal, Alekh and Luo, Haipeng and Neyshabur, Behnam and Schapire, Robert E},
  booktitle={Conference on Learning Theory},
  pages={12--38},
  year={2017},
  organization={PMLR}
}
\clearpage
\appendix

\section*{Appendix}
The appendix is organized as follows,

\begin{itemize}
    \item In Appendix~\ref{app:sec:related} we discuss additional related works
    \item In Appendix~\ref{app:baselines} we expand on the baselines that are used in our experiments.
    \item In Appendix~\ref{app:datasets}, we expand on the LLM routing datasets. 
    \item In Appendix~\ref{appendix:implementation}, implementation details are discussed. 
    \item In Appendix~\ref{appendix:experiments}, we present additional experiments and results.
    \item In Appendix~\ref{appendix:proofs}, we present the proofs for CABS-C and CABS-D algorithms.
\end{itemize}
\section{Additional Related Works}
\label{app:sec:related}

\subsection{Multi-Armed Bandits (MAB)}

Classical stochastic bandits study decision-making with partial feedback and no
context. UCB-type methods~\cite{auer2002finite} use optimism under uncertainty,
while Thompson sampling~\cite{agrawal2012analysis} samples actions from a
posterior distribution. Adversarial algorithms such as Exp3~\cite{auer2002nonstochastic}
provide guarantees without stochastic reward assumptions. These methods form the
basis for exploration--exploitation trade-offs but do not capture query-dependent
model selection.

Contextual bandits incorporate side information into the decision rule. Linear
methods such as LinUCB~\cite{li2010contextual,chu2011contextual} assume rewards
are linear in context features, while KernelUCB~\cite{valko2013finite} and
NeuralUCB~\cite{zhou2020neural} model nonlinear reward functions. More generally,
SquareCB~\cite{foster2020beyond} reduces contextual bandit learning to online
regression by using a reward predictor to construct exploration probabilities,
providing a flexible oracle-based template beyond linear models. These methods
are natural baselines for LLM routing, but typically observe feedback only for
the selected arm and do not explicitly exploit inter-arm relationships.

Bandits with graph feedback study settings where playing one arm reveals
additional observations from neighboring arms. Earlier work considers
non-contextual graph-feedback models~\cite{mannor2011bandits,chen2021understanding},
while SquareCB-G~\cite{zhang2023practical} and SquareCB-UG~\cite{zhang2024efficient}
extend this idea to contextual settings. SquareCB-G assumes graph-structured
observations of true rewards, whereas SquareCB-UG handles graphs that are not
known before action selection. Our setting differs because the graph is
context-dependent and available before action selection, while additional
observations are potentially noisy/misspecified surrogate rewards rather than true rewards.

Auxiliary-feedback bandits use external predictors or side observations to
improve learning. MIN-UCB~\cite{cheung2024leveraging}, OFUL-AF~\cite{verma2023exploiting},
and MLA-UCB~\cite{ji2025multi} are representative examples. These methods
typically use auxiliary information to improve reward estimation for the played
arm or to regularize learning. Our setting instead uses context-dependent graph
structure to decide which unplayed arms receive surrogate feedback, requiring
robustness to surrogate noise and bias.

Expert-aggregation methods provide another relevant perspective for our decoupled algorithm. Hedge and multiplicative-weight algorithms aggregate expert predictions with regret guarantees against the best fixed expert in hindsight~\cite{freund1997decision}, while adaptive variants such as AdaHedge tune the learning rate online based on observed losses~\cite{erven2011adaptive}. Related master algorithms, such as Corral~\cite{agarwal2017corralling}, combine multiple bandit algorithms and compete with the best base learner under partial feedback. CABS-D follows this meta-algorithmic view, but the experts correspond to different feedback mechanisms: standard bandit feedback and correlation-aware surrogate graph feedback.

Table~\ref{tab:related_works} summarizes representative bandit settings,
feedback models, and regret rates. We report worst-case order bounds, hiding
polylogarithmic factors in $\widetilde O(\cdot)$, under standard assumptions;
some algorithms admit sharper gap-dependent bounds, which we omit for brevity.

\begin{table*}[ht]
\centering
\caption{Representative bandit settings, feedback models, and regret bounds.}
\label{tab:related_works}
\resizebox{\textwidth}{!}{%
\begin{tabular}{|l|l|l|l|}
\hline
\textbf{Algorithm(s)} & \textbf{Classification} & \textbf{Feedback} & \textbf{Regret Bound} \\
\hline
\multicolumn{4}{|c|}{\textbf{Non-Contextual Setting}} \\
\hline
\multirow{2}{*}{UCB1~\cite{auer2002finite}} 
  & \multirow{2}{*}{Stochastic Non-Contextual} 
  & \multirow{2}{*}{Partial} 
  &  \multirow{2}{*}{$O\!\left(\sqrt{KT\log T}\right)$}  \\
& & & \\
\hline
Thompson Sampling~\cite{agrawal2012analysis} & Stochastic Non-Contextual & Partial & $O\!\left( \left(\sum_{i:\mu_i<\mu^*}\frac{1}{\Delta_i^2}\right)^2 \log T\right)$ \\
& & & *\textit{problem-dependent bound} \\
\hline
Exp3~\cite{auer2002nonstochastic} & Adversarial Non-Contextual & Partial & $O(\sqrt{KT\log K})$ \\
\hline
E2D~\cite{foster2021statistical} & Stochastic Non-Contextual & Partial & $O(\sqrt{KT\log K})$ \\
 \hline
\multirow{2}{*}{ELP~\cite{mannor2011bandits}} 
  & \multirow{1}{*}{Graph Feedback} 
  & \multirow{2}{*}{Partial/Full} 
  & \multirow{2}{*}{$\widetilde{O}(\sqrt{\log(k) \sum_{t=1}^{T} \alpha_t })$} \\
&Stochastic Non-Contextual & &  \\
 \hline
\multirow{3}{*}{OSMD~\cite{chen2021understanding}} 
  & \multirow{1}{*}{Graph Feedback} 
  & \multirow{2}{*}{Partial/Full} 
  & \multirow{2}{*}{$O\!\left( \left(\delta^* \log(k) \right)^\frac{1}{3} T^{\frac{2}{3}}\right)$} \\
&Stochastic Non-Contextual & &  \\
& & & $\delta^*$ is packing number \\
\hline
\multicolumn{4}{|c|}{\textbf{Contextual Setting}} \\
\hline
LinUCB~\cite{li2010contextual} \& SquareCB~\cite{foster2020beyond}  & Stochastic Contextual (Linear) & Partial & $\widetilde O\!\left(\sqrt{ K d T }\right)$ \\
\hline
\multirow{3}{*}{KernelUCB~\cite{valko2013finite}} & \multirow{3}{*}{Stochastic Contextual (Non-Linear)} & \multirow{3}{*}{Partial} & \multirow{2}{*}{$\widetilde{O}(\widetilde{d}\sqrt{T} )$} \\
& & &  \\
& & & ($\tilde d$ be the effective dimension) \\
\hline
\multirow{3}{*}{NeuralUCB~\cite{zhou2020neural} } 
  & \multirow{3}{*}{Stochastic Contextual (Non-Linear)} 
  & \multirow{3}{*}{Partial} 
  & \multirow{2}{*}{$\widetilde{O}(\widetilde d \sqrt{T}) $} \\
& & &  \\
& & & ($\tilde d$ be the effective dimension) \\
\hline
\multirow{2}{*}{RealLinExp3~\cite{neu2020efficient}} & \multirow{2}{*}{Adversarial Contextual (Linear)} & \multirow{2}{*}{Partial} & \multirow{2}{*}{$\widetilde{O}(\sqrt{KdT})$} \\
& & &  \\
\hline
\multirow{3}{*}{Square-CB~\cite{zhang2023practical}} 
  & \multirow{1}{*}{Informed Graph Feedback } 
  & \multirow{3}{*}{Partial/Full} 
  & $\widetilde{O}(\sqrt{\alpha T Reg_{sq}})$ \\
& Stochastic Contextual (Non-Linear) & & ($Reg_{sq}$ = bound of regression oracle) \\
& & & ($\alpha$ = independence number) \\
\hline
\multirow{4}{*}{SquareCB-UG~\cite{zhang2024efficient}} 
  & \multirow{1}{*}{Uninformed Graph Feedback  } 
  & \multirow{4}{*}{Partial/Full} 
  & $\widetilde{O}(\sqrt{\alpha T \max\left(Reg_{sq}, Reg_{log}\right)})$ \\
& Stochastic Contextual (Non-Linear) & & ($Reg_{sq}$ = bound of regression oracle) \\
& & & ($Reg_{log}$ = bound of graph oracle) \\
& & & ($\alpha$ = independence number) \\
\hline
\end{tabular}
}%
\end{table*}

\subsection{LLM Routing}

Here we give a high level overview of selected LLM routing methods, 
\begin{itemize}[leftmargin=*, itemsep=2pt, topsep=2pt]
  \item \textbf{RouterBench}~\cite{hu2024routerbenchbenchmarkmultillmrouting} introduces a standardized benchmark/dataset and evaluation protocol for multi-LLM routers under explicit cost--performance trade-offs, enabling apples-to-apples comparison across routing strategies.
    \item \textbf{RouterDC}~\cite{chen2024routerdc} proposes a query-based router trained via dual contrastive learning, aligning query embeddings with top-performing LLM embeddings (and separating from poorly performing ones) using sample–LLM and sample–sample contrastive losses to improve routing stability and robustness.

  \item \textbf{CARROT / SPROUT}~\cite{somerstep2025carrot} propose a cost-aware router based on estimating model cost/performance, and introduce the SPROUT dataset to evaluate routing over a wide query distribution and modern model pools.
  \item \textbf{RouteLLM}~\cite{ong2024routellm} trains a router from preference supervision to decide between cheaper and stronger LLMs, targeting cost savings at fixed quality levels and demonstrating transfer across model pairs.
  \item \textbf{Eagle}~\cite{zhao2024eagle} proposes a training-free router that uses ELO-style ranking (global and local) to efficiently update routing decisions, aiming at scalable online serving without heavy retraining.
  \item \textbf{TensorOpera / TO-Router}~\cite{stripelis2024tensoropera} studies multi-model orchestration as a system problem and routes queries to expert LLMs to improve the throughput--cost--quality ``trilemma'' in practical deployments.
  \item \textbf{GraphRouter}~\cite{feng2024graphrouter} models query--task--LLM interactions as a heterogeneous graph and performs edge/attribute prediction to recommend an LLM, emphasizing generalization to new LLMs via relational inductive bias.
  \item \textbf{MetaLLM}~\cite{nguyen2024metallm} frames routing as a contextual bandit and learns an online selection policy from partial feedback to improve cost-efficiency on targeted tasks (e.g., classification / MCQ).
  \item \textbf{LLM Bandit}~\cite{li2025llmbanditcostefficientllm} formulates LLM selection as a bandit problem with preference-conditioned routing, allowing users to specify trade-offs at inference while learning under partial-feedback constraints.
  \item \textbf{MixLLM}~\cite{wang2025mixllmdynamicroutingmixed} extends contextual-bandit routing to dynamic objectives (quality/cost/latency) with continual learning and changing model pools, using lightweight predictors plus a meta decision module.
  \item \textbf{BaRP}~\cite{wei2025learning} trains routers to align with the deployment partial-feedback setting while conditioning on a user preference vector, enabling a single learned policy to realize multiple cost--quality trade-offs at test time.
  \item \textbf{LLM Routing with Dueling Feedback}~\cite{chiang2025llm} treats routing as contextual dueling bandits and learns from pairwise preferences between two candidate model outputs, using CCFT representations and FGTS.CDB for online learning.
\end{itemize}

\section{Baselines}
\label{app:baselines}
\begin{itemize}
    \item \textbf{LinUCB~\cite{chu2011contextual} / MetaLLM~\cite{nguyen2024metallm}}: the standard linear upper-confidence bound algorithm, which selects the arm with the highest optimistic estimate of the linear reward.
    \item \textbf{LinUCB~\cite{chu2011contextual} + Full Feedback}: an oracle variant of LinUCB that observes the full reward vector for all arms at every round, rather than only the reward of the chosen arm. This baseline is not implementable in practice but provides a strong upper bound on what is achievable with linear models.
    \item  \textbf{LinTS~\cite{agrawal2013thompson}}: Linear Thompson sampling with a Gaussian posterior over the regression parameters.
    \item \textbf{Epsilon Greedy}: a linear regression model updated online together with an $\epsilon$-greedy exploration policy.
    \item \textbf{NeuralUCB~\cite{zhou2020neural}}: a neural contextual bandit algorithm that uses a neural network reward predictor and an upper-confidence bonus based on the network's linearized features (e.g., via the last-layer/NTK approximation), selecting the arm with the highest optimistic predicted reward.
    
    \item \textbf{NeuralUCB~\cite{zhou2020neural} + Full-Feedback}: an oracle variant of NeuralUCB that observes the full reward vector for all arms at each round and updates the neural predictor using all arms' rewards, providing an upper bound on what is achievable with neural reward models under full information.
\end{itemize}

All online learning baselines use exactly the same feature representation as CABS-C/CABS-D and differ only in their exploration strategy and feedback model. We additionally compare against \emph{static} routers that do not adapt during test time:

\begin{itemize}
    \item \textbf{CARROT~\cite{somerstep2025carrot} (RoBERTa)}: we adapt the current SOTA CARROT router to our utility objective. Instead of predicting the probability of success, we fine-tune a RoBERTa encoder as a regressor to predict, for each (query, model) pair, both the correctness reward and the cost. At test time, the router selects the model with the largest predicted utility,
    $\hat{r}_{t, i_t} = \hat{y}_{t,i_t} - \rho  \cdot \hat{c}_{t, i_t}$.

    \item \textbf{KNN (RouterBench~\cite{hu2024routerbenchbenchmarkmultillmrouting})}: a non-parametric router operating on text
    embeddings. For each test query, we retrieve the $k=3$ nearest training queries (in embedding space) for each candidate model and estimate reward and cost by averaging over their observed outcomes. The chosen model maximizes the estimated utility $\hat{r}_{t, i_t}$.

    \item \textbf{Random Selection}: For each query, this method randomly selects an arm from the K possible arms. 
    
    \item Finally, we report the performance of individual LLMs used by the benchmarks as static baselines that always route to a single model. This highlights the benefit of routing compared to selecting a single fixed model.

\end{itemize}

\section{Dataset Details}
\label{app:datasets}

\begin{itemize}
    \item \textbf{RouterBench}~\cite{hu2024routerbenchbenchmarkmultillmrouting} contains roughly 35k text queries spanning 8 reasoning, coding, and commonsense datasets, with responses from 11 LLMs annotated using GPT-4 LLM-as-judge binary correctness and token-based cost. We randomly split the dataset into 70\% train, 10\% validation, and 20\% test.
    \item \textbf{SPROUT}~\cite{somerstep2025carrot} contains roughly 44k queries drawn from six reasoning datasets, with responses from 15 LLMs annotated with cost and numeric correctness scores using LLama 3.1 70B LLM-as-judge. We use the default train/val/test split.
    \item \textbf{Open LLM Leaderboard v2}~\cite{open-llm-leaderboard-v2} provides responses from 18 contemporary LLMs for routing evaluation. The scores are generated using exact match-based accuracy. We randomly split the dataset into 70\% train, 10\% validation, and 20\% test.
\end{itemize}

We also show the average performance of each LLMs in the three LLM Routing datasets (Table~\ref{tab:open_llm_lb_v2_avg_cost_perf}, Table~\ref{tab:sprout_avg_cost_perf} and Table~\ref{tab:routerbench_avg_cost_perf}). 

\begin{table}[h]
\centering
\caption{Average performance and cost for models in OpenLLM Leaderboard V2 dataset}
\label{tab:open_llm_lb_v2_avg_cost_perf}
\begin{tabular}{lcc}
\toprule
Model & Performance & Cost ($\times 10^{-4}$) \\
\midrule
Yi-34B-Chat & 0.421 & 6.375 \\
Nous-Hermes-2-Mixtral-8x7B-DPO & 0.401 & 4.926 \\
QwQ-32B-Preview & 0.552 & 8.904 \\
Qwen2-72B-Instruct & 0.559 & 6.678 \\
Qwen2.5-72B-Instruct & 0.557 & 8.904 \\
Qwen2.5-7B-Instruct & 0.423 & 2.226 \\
WizardLM-2-8x22B & 0.488 & 9.852 \\
deepseek-llm-67b-chat & 0.408 & 7.049 \\
gemma-2-27b-it & 0.466 & 6.132 \\
gemma-2-9b-it & 0.420 & 2.300 \\
gemma-2b-it & 0.191 & 0.767 \\
Llama-2-13b-chat-hf & 0.228 & 2.472 \\
Meta-Llama-3.1-70B-Instruct & 0.547 & 6.442 \\
Mistral-7B-Instruct-v0.1 & 0.258 & 1.642 \\
Mistral-7B-Instruct-v0.2 & 0.308 & 1.642 \\
Mistral-7B-Instruct-v0.3 & 0.335 & 1.642 \\
Mixtral-8x7B-Instruct-v0.1 & 0.383 & 4.927 \\
Llama-3.1-Nemotron-70B-Instruct-HF & 0.500 & 6.442 \\
\bottomrule
\end{tabular}
\end{table}

\begin{table}[h]
\centering
\caption{Average performance and cost for models in SPROUT dataset}
\label{tab:sprout_avg_cost_perf}
\begin{tabular}{lcc}
\toprule
Model & Performance & Cost ($\times 10^{-4}$) \\
\midrule
claude-3-5-sonnet-v1 & 0.829 & 76.614 \\
titan-text-premier-v1 & 0.581 & 5.659 \\
openai-gpt-4o & 0.845 & 49.355 \\
openai-gpt-4o-mini & 0.808 & 3.406 \\
granite-3-2b-instruct & 0.555 & 0.848 \\
granite-3-8b-instruct & 0.660 & 1.502 \\
llama-3-1-70b-instruct & 0.810 & 7.205 \\
llama-3-1-8b-instruct & 0.690 & 2.438 \\
llama-3-2-1b-instruct & 0.462 & 0.662 \\
llama-3-2-3b-instruct & 0.632 & 0.644 \\
llama-3-3-70b-instruct & 0.806 & 5.524 \\
llama-3-405b-instruct & 0.776 & 20.180 \\
mixtral-8x7b-instruct & 0.618 & 3.743 \\
\bottomrule
\end{tabular}
\end{table}

\begin{table}[h]
\centering
\caption{Average performance and cost for models in RouterBench dataset}
\label{tab:routerbench_avg_cost_perf}
\begin{tabular}{lcc}
\toprule
Model & Performance & Cost ($\times 10^{-4}$) \\
\midrule
gpt-3.5-turbo-1106 & 0.619 & 2.420 \\
claude-instant-v1 & 0.598 & 2.311 \\
claude-v1 & 0.630 & 21.233 \\
claude-v2 & 0.636 & 24.009 \\
gpt-4-1106-preview & 0.781 & 32.495 \\
meta\_llama-2-70b-chat & 0.328 & 2.016 \\
mistralai\_mixtral-8x7b-chat & 0.547 & 1.340 \\
zero-one-ai\_Yi-34B-Chat & 0.647 & 1.846 \\
WizardLM\_WizardLM-13B-V1.2 & 0.431 & 0.726 \\
meta\_code-llama-instruct-34b-chat & 0.202 & 1.714 \\
mistralai\_mistral-7b-chat & 0.306 & 0.456 \\
\bottomrule
\end{tabular}
\end{table}

\section{Implementation Details}
\label{appendix:implementation}

All experiments are conducted on a machine equipped with 1xNVIDIA H100 GPU (80~GB). For all linear online learning methods, we set the regularization parameter to $\lambda = 10$ and $\gamma = 4$ whenever applicable. For neural online learning methods, we set regularization parameter to $\lambda = 0.001$ and the hidden dimension to 64, with learning rate of $0.001$. Reported results for all online learning methods are averaged over 10 independent trials to account for randomness in data ordering.  For $\varepsilon$-greedy, we use the decay schedule, $\epsilon_t=\max\!\left(\epsilon_{\min}, \frac{\epsilon_0}{1+\text{decay}\cdot t}\right)$, tune $\epsilon_0 \in \{1.0, 0.5, 0.2, 0.1\}$ per dataset, and fix $\text{decay}=0.001$ and $\epsilon_{\min}=0.01$. For LinUCB/NeuralUCB with full feedback, we update using the ground-truth rewards of all arms at each round; since all rewards are observed, we set the exploration parameter to zero.

\paragraph{Affinity and Reward Oracles.}
We implement the \emph{Reward Oracle} (\textsf{RO}) and \emph{Affinity Oracle} (\textsf{AO}) using a lightweight multi-head network on top of query embeddings $\boldsymbol{x}_t$. Each query is encoded once, using mDeBERTaV3-base~\cite{he2021deberta} in our experiments, and passed to small MLP-style heads (using Tanh/GELU/ReLU nonlinearities with normalization/dropout). The framework is otherwise encoder-agnostic.

The \textsf{RO} predicts the accuracy and cost components separately and combines them into the cost-informed utility $r_{t,j}=y_{t,j}-\rho c_{t,j}$. For query embedding $\boldsymbol{x}_t$ and anchor arm $i_t$, we define the accuracy delta $\delta_{t,j}(i_t)=y_{t,j}-y_{t,i_t}$ for all $j\in[K]$. A correlation-aware propagation head predicts $\widehat{\delta}_{t,j}(i_t)=g(\boldsymbol{x}_t,i_t,y_{t,i_t})$, yielding surrogate accuracies $\tilde y_{t,j}=y_{t,i_t}+\widehat{\delta}_{t,j}(i_t)$ after observing $y_{t,i_t}$. In parallel, a cost head predicts $\widehat c_{t,j}=h(\boldsymbol{x}_t,j)$ (trained with MSE on $z$-scored costs), and the two are combined as $s_{t,j}=\tilde y_{t,j}-\rho \widehat c_{t,j}$. These surrogate utilities are then filtered to generate additional feedback for correlation-aware updates.

The \textsf{AO} is constructed from a context-only accuracy prediction head, which outputs per-arm scores $\bar y_{t,j}$ for all $j\in[K]$. Since estimating a true context-conditional arm correlation matrix would require repeated joint observations for the same query, we instead use the proxy $R_{t,(i,j)}=1-2|\bar y_{t,i}-\bar y_{t,j}|$, which is high for arms predicted to behave similarly and low, possibly negative, for arms predicted to differ sharply. This proxy defines the feedback graph used by our correlation-aware methods, although our algorithms and theory allow any context-dependent choice of $\boldsymbol{R}_t$. To enforce self loops, we set $R_{t,(i,i)} = -\infty$. 

In addition to the cost, propagation, and context-only accuracy heads, the network includes a bandit feature head that maps $\boldsymbol{x}_t$ to a 128-dimensional representation for downstream bandit algorithms. The heads are trained offline using Adam with learning rate $10^{-4}$, MSE for cost prediction, and $L_1$/Huber-style losses for accuracy prediction and propagation. Since we sweep large cost sensitivities up to $\rho=1000$, we feed the learner a bounded utility signal $\bar r_{t,i_t}=\mathrm{clip}(r_{t,i_t},0,1)$ for numerical stability. Finally, although \textsf{RO} can produce surrogate utilities for all graph-neighbor arms, we only incorporate pseudo-observations for a pruned subset selected by an IQR rule on predicted delta magnitudes, retaining arm $j$ only if $|\widehat{\delta}_{t,j}|>Q_3+\kappa\,\mathrm{IQR}$ with $\kappa=1.5$. This acts as a conservative sparsification guard without changing the underlying feedback model. \vspace{-1em}

\paragraph{Efficient Computation of $\boldsymbol{p}_t$ (Line~8 of Algorithm~\ref{alg:CABS-C}):}
Line~8 of Algorithm~\ref{alg:CABS-C} requires solving for the sampling distribution $\boldsymbol{p}_t\in\Delta(K)$ that minimizes the objective $\phi(\boldsymbol{p};\hat{\boldsymbol{\theta}}_t,\boldsymbol{x}_t,\boldsymbol{A}_t)$. Rather than optimizing this objective in its original form, we follow the convex reformulation of~\cite{zhang2023practical}, which converts the problem into an equivalent convex program over $(\boldsymbol{p},z)$ that can be solved efficiently with standard solvers at each round, as stated in the following theorem.

\begin{theorem}[Theorem 3.6~\cite{zhang2023practical}]
\label{theorem:convex_method}
Solving $\arg\min_{\boldsymbol{p} \in \Delta(K)} \overline{\mathrm{dec}}_{\gamma}(\boldsymbol{p}; \hat{\boldsymbol{f}}, \boldsymbol{x}, \boldsymbol{G})$
is equivalent to solving the following convex optimization problem:
\begin{equation}
\begin{aligned}
\min_{\boldsymbol{p} \in \Delta(K),\, z} \quad & \boldsymbol{p}^\top \hat{\boldsymbol{f}} + z \\
\text{s.t.} \quad 
& \forall a \in [K] :
\frac{1}{\gamma}
\left\| \boldsymbol{p} - \boldsymbol{e}_a \right\|^2_{\mathrm{diag}(\boldsymbol{G}^\top \boldsymbol{p})^{-1}}
\le \hat{f}(\boldsymbol{x},a) + z, \\
& \boldsymbol{G}^\top \boldsymbol{p} \succ \boldsymbol{0},
\end{aligned}
\tag{5}
\end{equation}
where $\hat{\boldsymbol{f}}$ is shorthand for $\hat{\boldsymbol{f}}(\boldsymbol{x},\cdot)\in\mathbb{R}^K$,
$\boldsymbol{e}_a$ is the $a$-th standard basis vector, $\succ$ denotes element-wise strict positivity, and $z\in\mathbb{R}$ is an auxiliary slack variable.
\end{theorem}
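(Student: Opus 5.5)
The plan is to compute the inner supremum defining $\overline{\mathrm{dec}}_{\gamma}$ in closed form, and then rewrite the outer minimization over $\boldsymbol{p}$ in epigraph form. Following the convention of~\cite{zhang2023practical}, I use losses. The decision-estimation objective at $\boldsymbol{p}$ is then
\[
\sup_{a^\star\in[K]}\ \sup_{\boldsymbol{v}\in\mathbb{R}^K}\ \sum_{i}p_i v_i - v_{a^\star} - \frac{\gamma}{4}\sum_{i}p_i\sum_{j:G_{i,j}=1}(\hat f(\boldsymbol{x},j)-v_j)^2 .
\]
Here $\boldsymbol{v}=f^\star(\boldsymbol{x},\cdot)$ ranges over all of $\mathbb{R}^K$; in the paper's linear parametrization $\boldsymbol{x}^\top\boldsymbol{\theta}^*_k$ is unconstrained whenever $\boldsymbol{x}\neq 0$. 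Exchanging the two sums in the penalty shows that the weight multiplying $(\hat f(\boldsymbol{x},j)-v_j)^2$ is $W_j:=(\boldsymbol{G}^\top\boldsymbol{p})_j$, the probability that arm $j$ is observed.

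\textbf{Step 1: the objective decouples over coordinates.} For fixed $a^\star=a$ it becomes a sum over $j$ of
\[
(p_j-e_{a,j})v_j-\frac{\gamma}{4}W_j(v_j-\hat f_j)^2 .
\]
If $W_j>0$, the maximizer is $v_j=\hat f_j+2(p_j-e_{a,j})/(\gamma W_j)$, and the maximal value is
\[
(p_j-e_{a,j})\hat f_j+\frac{(p_j-e_{a,j})^2}{\gamma W_j}.
\]
If $W_j=0$, the term is linear in $v_j$. It is then unbounded unless $p_j=e_{a,j}$. Choosing $a=j$ gives $p_j-1\neq 0$, because $W_j=0$ and self-loops force $p_j=0$. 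Hence any $\boldsymbol{p}$ with finite objective must satisfy $\boldsymbol{G}^\top\boldsymbol{p}\succ\boldsymbol{0}$, and this is exactly the second constraint.

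\textbf{Step 2: the outer problem becomes an epigraph.} Summing the coordinatewise maxima gives
\[
\overline{\mathrm{dec}}_\gamma(\boldsymbol{p})=\boldsymbol{p}^\top\hat{\boldsymbol{f}}+\max_{a}\Big[\tfrac{1}{\gamma}\|\boldsymbol{p}-\boldsymbol{e}_a\|^2_{\mathrm{diag}(\boldsymbol{G}^\top\boldsymbol{p})^{-1}}-\hat f(\boldsymbol{x},a)\Big].
\]
Introducing a slack variable $z$ for the max yields the stated program. Its minimizers in $\boldsymbol{p}$ coincide with the minimizers of $\overline{\mathrm{dec}}_\gamma$, since at the optimum $z$ equals the max.

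\textbf{Step 3: convexity.} The objective is linear in $(\boldsymbol{p},z)$. Each constraint function is
\[
\sum_j \frac{(p_j-e_{a,j})^2}{(\boldsymbol{G}^\top\boldsymbol{p})_j},
\]
a sum of quadratic-over-linear functions $(u,w)\mapsto u^2/w$. Each such function is jointly convex on $w>0$, and here it is composed with an affine map of $\boldsymbol{p}$, so the constraint function is convex. The feasible set is therefore convex. The open constraint $\boldsymbol{G}^\top\boldsymbol{p}\succ 0$ is harmless, because the barrier term blows up at its boundary.

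\textbf{Main obstacle.} The delicate point is the case analysis at $W_j=0$ in Step 1, together with matching sign conventions between the paper's reward-form $\phi$ and the loss-form objective. The reward form differs by the affine map $\ell=1-r$; this flips the sign of $\hat{\boldsymbol{f}}$ but leaves the argument unchanged.
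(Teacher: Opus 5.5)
Your proposal is correct and takes essentially the same route as the proof the paper defers to in \cite{zhang2023practical}, whose key step the paper also reproduces in its non-bias lemma: maximize coordinatewise over $f^\star(\boldsymbol{x},\cdot)$ to get $\boldsymbol{p}^\top\hat{\boldsymbol{f}}+\max_a\bigl[\frac{1}{\gamma}\|\boldsymbol{p}-\boldsymbol{e}_a\|^2_{\mathrm{diag}(\boldsymbol{G}^\top\boldsymbol{p})^{-1}}-\hat f(\boldsymbol{x},a)\bigr]$, rewrite this in epigraph form with the slack $z$, and obtain convexity from the joint convexity of $u^2/w$. Your explicit handling of the $(\boldsymbol{G}^\top\boldsymbol{p})_j=0$ case, which justifies the strict-positivity constraint, is a detail the paper leaves implicit.
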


We refer readers to~\cite{zhang2023practical} for the proof. Although Theorem~\ref{theorem:convex_method} is stated in terms of loss scores $\hat f(\boldsymbol{x}_t,j)$, our bandit models produce reward predictions $\hat r_{t,j}\in[0,1]$. We therefore apply the theorem with the transformation, $\hat f(\boldsymbol{x}_t,j)\;=\;1-\hat r_{t,j}$. In particular, in the linear case $\hat r_{t,j}=\boldsymbol{x}_t^\top \hat{\boldsymbol{\theta}}_{t,j}$, while in the neural case $\hat r_{t,j}=f_{\hat{\boldsymbol{\theta}}_t}(\boldsymbol{x}_t,j)$ denotes the prediction of the corresponding neural reward model. Thus, $\hat{\boldsymbol f}(\boldsymbol{x}_t,\cdot)\in\mathbb{R}^K$ is obtained component-wise from $\hat{\boldsymbol r}_t(\boldsymbol{x}_t,\cdot)\in\mathbb{R}^K$ via $\hat{\boldsymbol f}_t=\mathbf{1}-\hat{\boldsymbol r}_t$. Since $\boldsymbol{p}\in\Delta(K)$ implies $\boldsymbol{p}^\top \mathbf{1}=1$, this substitution only changes the objective by an additive constant (absorbed into the slack variable $z$) and leaves the optimizer $\boldsymbol{p}_t$ unchanged. We use Theorem~\ref{theorem:convex_method} to compute $\boldsymbol{p}_t=\arg\min_{\boldsymbol{p}\in\Delta(K)}\phi(\boldsymbol{p};\hat{\boldsymbol{\theta}}_{t},\boldsymbol{x}_t,\boldsymbol{A}_t)$ in Algorithm~\ref{alg:CABS-C}.

\FloatBarrier
\section{Additional Graphs and Results}
\label{appendix:experiments}
\subsection{Reward Mixing vs Prediction Mixing (CABS-C vs CABS-D)}

Table~\ref{tab:cabs_ablation1} presents the same ablation study on Open LLM LB V2~\cite{open-llm-leaderboard-v2}. The overall pattern is consistent with the findings on RouterBench~\cite{hu2024routerbenchbenchmarkmultillmrouting} and SPROUT~\cite{somerstep2025carrot} as shown in Section~\ref{sec:reward_mixing_vs_prediction_mixing}. Across most cost regimes, the decoupled variant (CABS-D) achieves comparable or improved utility relative to the coupled variant (CABS-C), and generally attains lower regret in the linear setting. On Open LLM LB V2, the differences are smaller but remain directionally similar, with CABS-D typically matching or slightly outperforming CABS-C. Overall, these results support the conclusion drawn in the main text that decoupling reward learning from surrogate prediction provides a stable and effective alternative to direct reward mixing.

\begin{table}[h]
\centering
\caption{ Average utility (higher the better) and Average Regret (lower the better) across cost regimes on Open LLM LB V2 dataset}
\label{tab:cabs_ablation1}
\renewcommand{\arraystretch}{1.12}
\setlength{\tabcolsep}{6pt}
\footnotesize
\begin{tabular}{c  c c  c c}
\toprule
\multirow{2}{*}{\textbf{Regime (Metric) }}
& \multicolumn{2}{c}{\textbf{CABS-C}}
& \multicolumn{2}{c}{\textbf{CABS-D}} \\
\cmidrule(lr){2-3} \cmidrule(lr){4-5}
&  Linear
&  Neural
&  Linear
& Neural \\
\midrule

Low (Utility)
& 0.5047 & 0.5147 & \textbf{0.5328} & 0.5284 \\
Medium (Utility)
& 0.2886 & 0.3197 & 0.3229 & \textbf{0.3289} \\
High (Utility)
& 0.1753 & 0.1776 & 0.1951 & \textbf{0.1956} \\
\midrule

Low (Regret)
& 0.3846 & 0.4080 & \textbf{0.3620} & 0.3924 \\
Medium (Regret)
& 0.3844 & 0.3940 & 0.3718 & \textbf{0.3696} \\
High (Regret)
& 0.4056 & 0.4162 & 0.3930 & \textbf{0.3893} \\
\bottomrule
\end{tabular}

\end{table}

\FloatBarrier

\begin{table*}[ht!]
\centering
\caption{Utilities across cost regimes and datasets using all-MiniLM-L6-v2~\cite{allMiniLM_L6_v2_hf} embeddings.
}
\label{tab:utility_results_all_minilm_l6_v2}
\renewcommand{\arraystretch}{1.12}
\setlength{\tabcolsep}{6pt}

\resizebox{\textwidth}{!}{%
\begin{tabular}{ l | c c c | c c c | c c c }
\hline
& \multicolumn{3}{c|}{\textbf{Open LLM LB v2}} 
& \multicolumn{3}{c|}{\textbf{RouterBench}} 
& \multicolumn{3}{c}{\textbf{SPROUT}} \\
\cline{2-4} \cline{5-7} \cline{8-10}
\textbf{Method}
& Low & Medium & High
& Low & Medium & High
& Low & Medium & High \\

\hline
Epsilon-Greedy
& 0.4331 & 0.2403 & 0.1189 
& 0.5795 & 0.4803 & 0.3431
& 0.7235 & 0.6230 & 0.5654 \\

LinTS~\cite{agrawal2013thompson}
&  0.4662 & 0.2718 &  0.1908
& 0.6280 & 0.4994 & 0.3653
& 0.7641 & 0.6574 & 0.6167 \\

LinUCB~\cite{chu2011contextual}
& 0.4522 & 0.2510 & 0.1903 
& 0.6104 & 0.4991 & 0.3560
& 0.7612 & 0.6580 & 0.6031 \\

NeuralUCB~\cite{zhou2020neural}
& 0.4323 & 0.2417 & 0.1759
& 0.6011 & 0.5030 & 0.3763
& 0.7131 & 0.5031 & 0.4693 \\

\rowcolor{lightgray}
CABS-D (Linear)
& \textbf{0.4990} & 0.2823 & 0.1925 
& \textbf{0.6539} & \textbf{0.5266} & 0.3981
& \textbf{0.7738} & \textbf{0.6730} & \textbf{0.6226} \\

\rowcolor{lightgray}
CABS-D (Neural)
& 0.4920 & \textbf{0.2899} &  \textbf{0.1937}
& 0.6421 & 0.5127 & \textbf{0.4125}
& 0.7494 & 0.5932 & 0.4925 \\

\hline
LinUCB + Full-Feedback
& 0.5293 & 0.3280 & 0.2293
& 0.6701 & 0.5412 & 0.4244
& 0.8024 & 0.7105 & 0.6672 \\

NeuralUCB + Full-Feedback
& 0.5213 & 0.3421 & 0.2374
& 0.6613 & 0.5527 & 0.4634 
& 0.7789 & 0.6282 & 0.5467 \\

\hline
\end{tabular}%
} \vspace{-1em}
\end{table*}

\subsection{Encoder Robustness}
We repeat our experiments with a second, widely used text embedding model (all-MiniLM-L6-v2~\cite{allMiniLM_L6_v2_hf}) in addition to mDeBERTaV3-base~\cite{he2021deberta}. Across datasets, method rankings and improvements over bandit baselines with respect to utility is consistent (see Table~\ref{tab:utility_results_all_minilm_l6_v2}), indicating that our conclusions do not depend on a specific encoder choice. Absolute utilities shift slightly, with the largest change on RouterBench in the high-cost regime; notably, the full-feedback upper bound decreases similarly, suggesting the shift is driven by representation-limited separability rather than algorithmic instability.

\subsection{Learning Dynamics \& Accuracy vs Cost Trade-offs}

Figure~\ref{fig:regret_curves1} extends the learning-dynamics analysis of Section~\ref{sec:learning_dynamics} to RouterBench and SPROUT, reporting relative cumulative regret normalized by LinUCB across low ($\rho \le 300$), mid ($300<\rho<800$), and high ($\rho \ge 800$) cost regimes. Consistent with the Open LLM Leaderboard v2 results in the main text, CABS-D achieves uniformly lower relative cumulative regret than standard bandit baselines in most regimes, and the separation typically appears early in the horizon, indicating faster learning from surrogate feedback rather than solely an asymptotic gain. The trends are especially stable in the mid-cost regime, where the objective most strongly couples accuracy and cost and thus benefits from exploiting inter-arm correlations. As expected, full-feedback variants remain an optimistic upper bound and provide a reference for the achievable regret reduction under complete feedback. Further, Figure~\ref{fig:regret_curves3} shows relative cumulative regret graphs for Neural variants where we observe similar trends as that of linear variants. 

Figure~\ref{fig:acc_vs_cost_neural_results} provides the analogous accuracy--cost Pareto frontiers for neural online-learning variants. The qualitative behavior mirrors the linear case: correlation-aware surrogate methods shift the frontier outward, achieving higher accuracy at comparable cost or reducing cost for a fixed accuracy, with the largest advantages occurring in regimes where cost sensitivity meaningfully constrains the policy. 

\begin{figure*}[!htb]
    \centering
    (a) RouterBench
     \includegraphics[width=\linewidth]{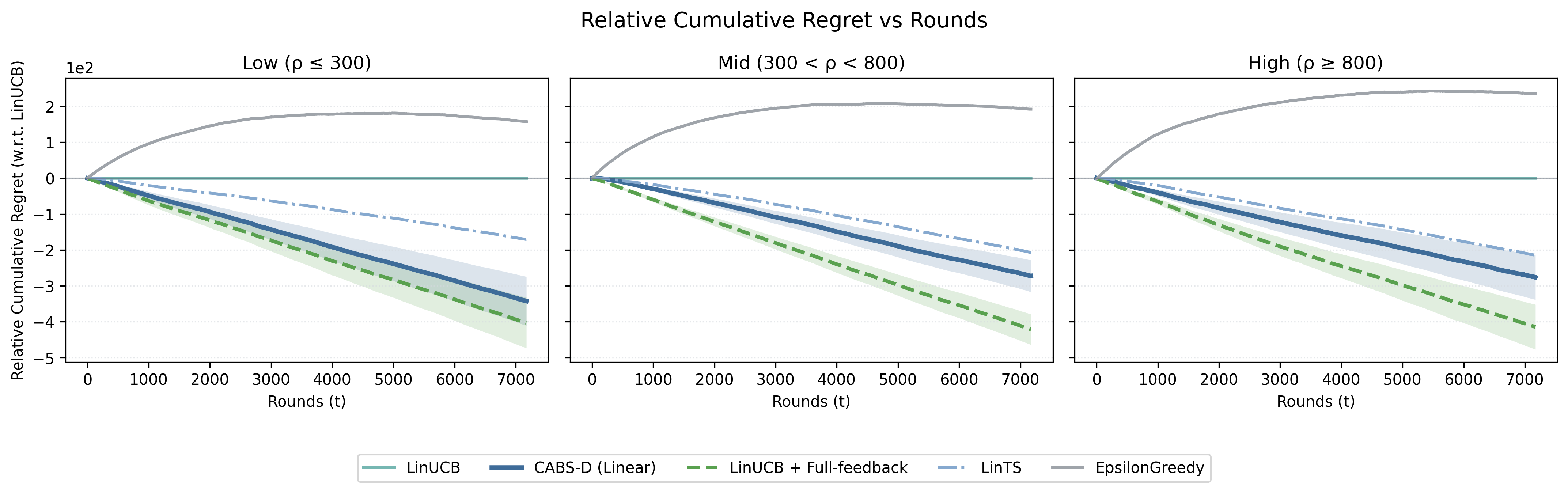}
    
        (b) SPROUT
    \includegraphics[width=\linewidth]{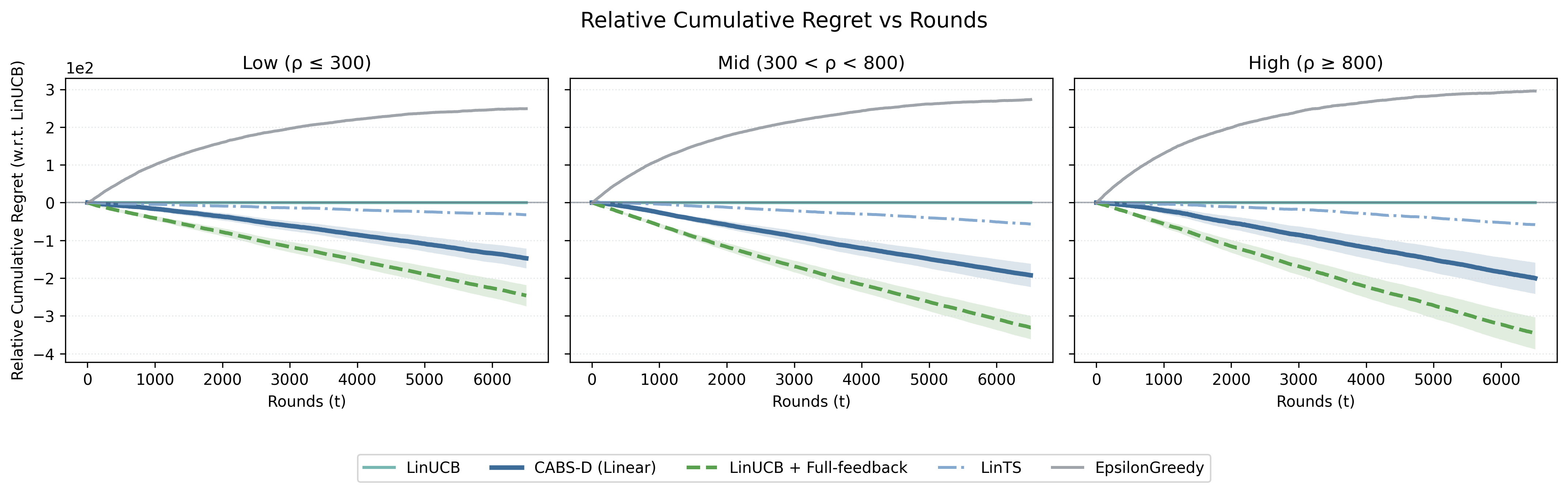}
    
    \caption{
        Relative Cumulative regrets with respect to LinUCB for other linear online learning methods on RouterBench (top) and SPROUT (bottom) dataset.
    }
    \label{fig:regret_curves1}
\end{figure*}

\begin{figure}[h]
    \centering

    % -------- Row 2: Neural --------
    \subfloat[\centering Open LLM Leaderboard v2]{
        \includegraphics[width=0.31\linewidth]{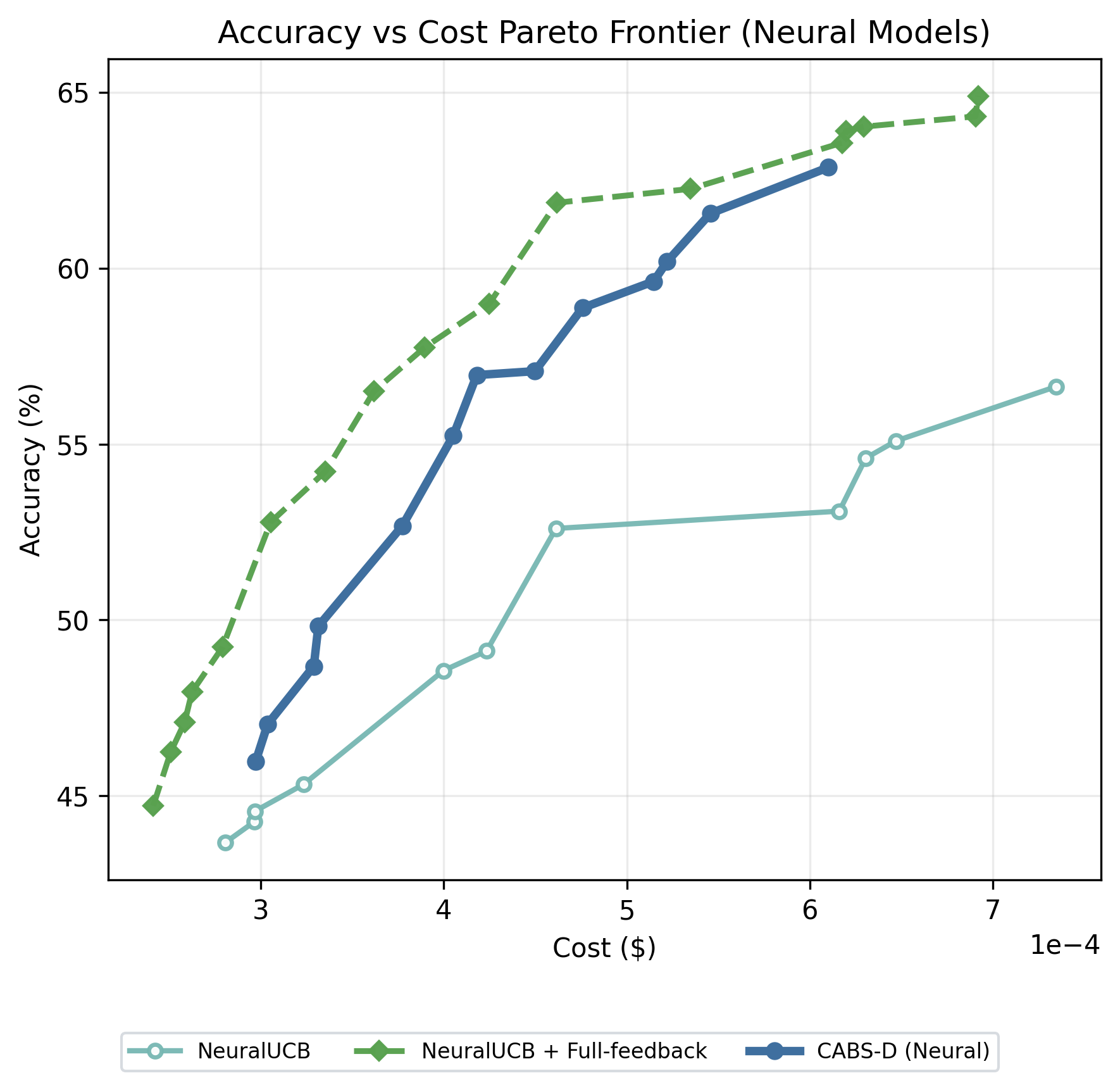}
    }\hfill
    \subfloat[\centering RouterBench]{
        \includegraphics[width=0.31\linewidth]{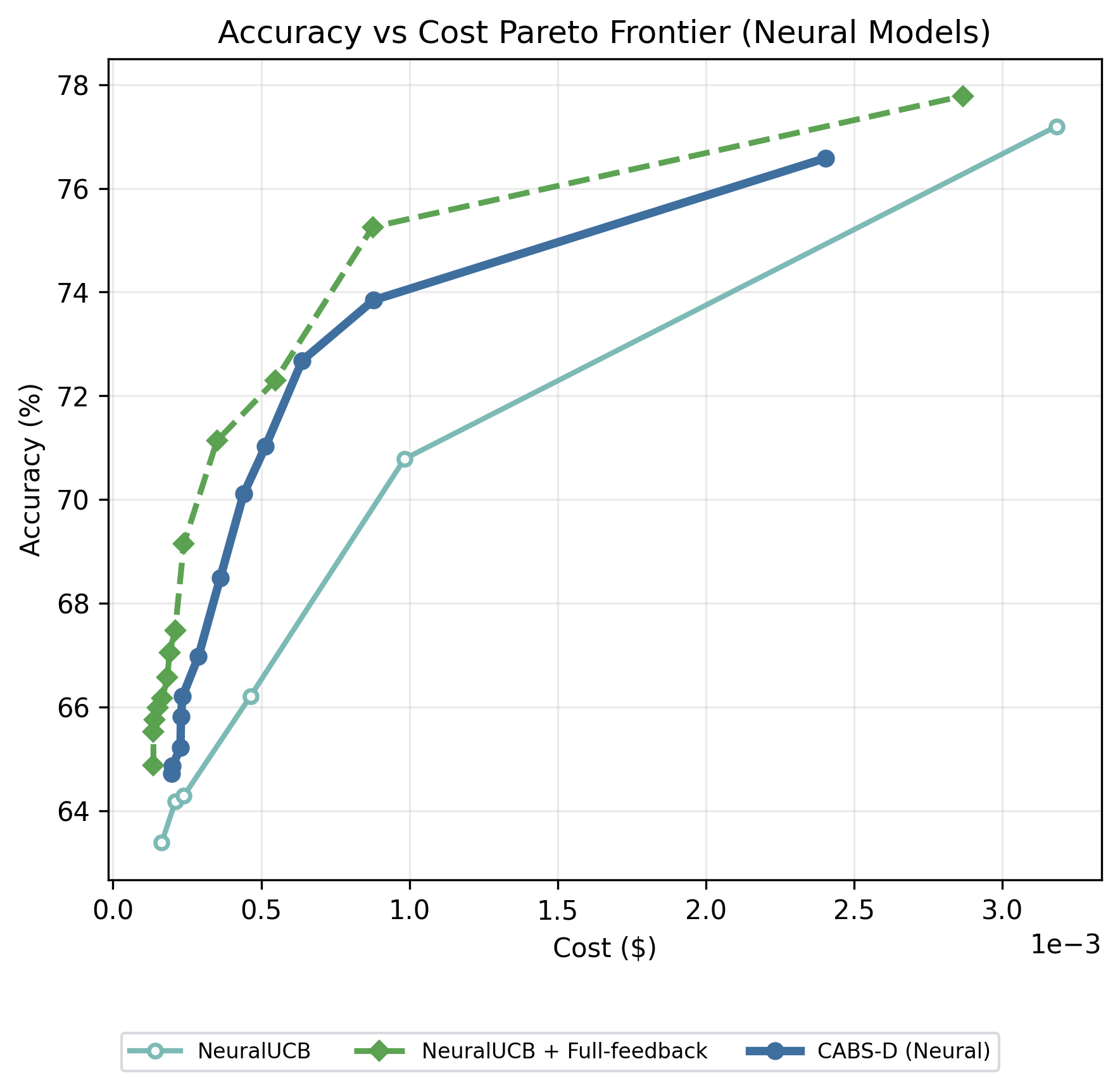}
    }\hfill
    \subfloat[\centering SPROUT]{
        \includegraphics[width=0.31\linewidth]{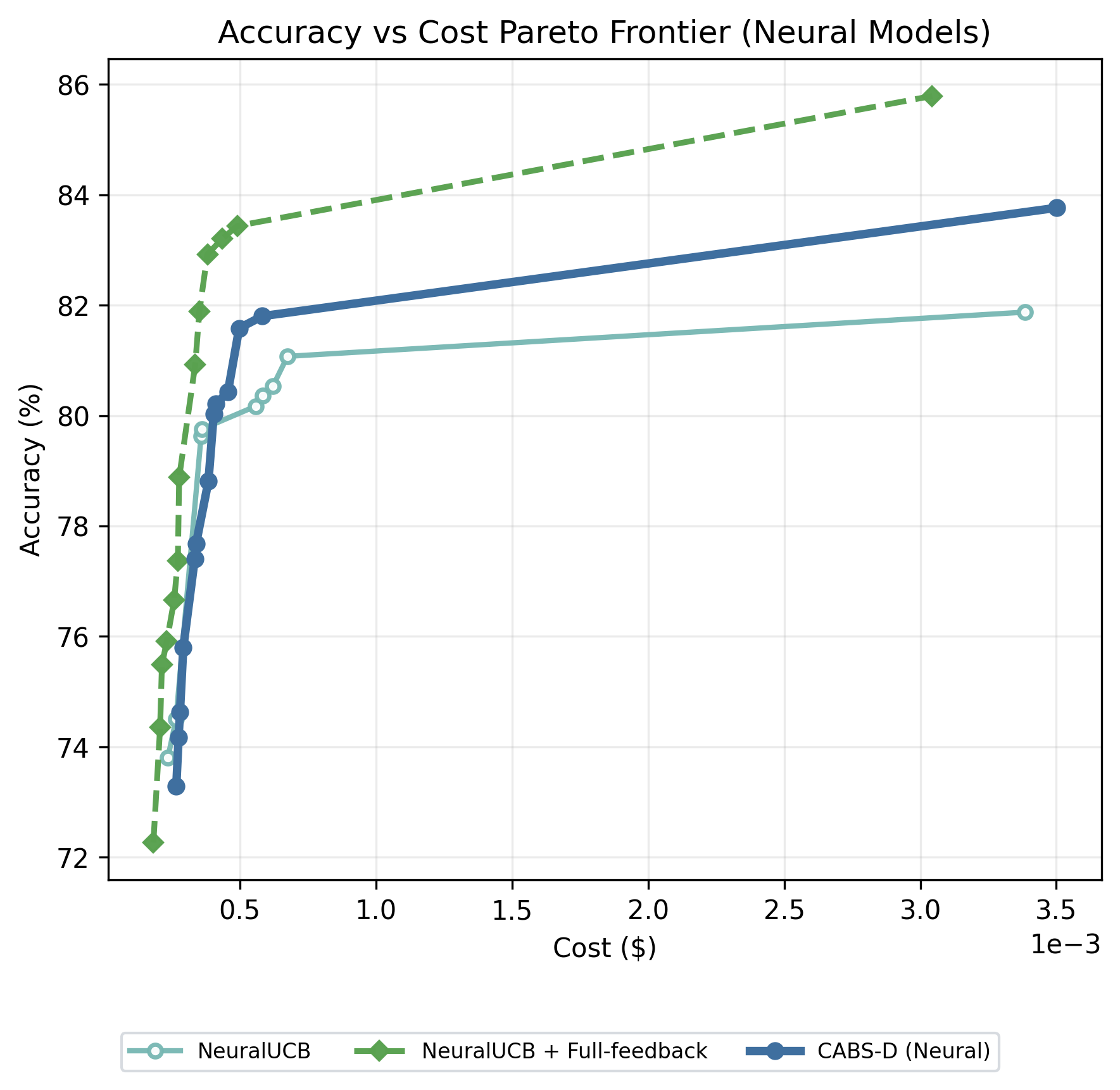}
    }

    \caption{
    Accuracy vs cost Pareto Frontier curves across datasets on Neural Online Learning models.
    }
    \label{fig:acc_vs_cost_neural_results}
\end{figure}

\begin{figure}[h]
    \centering
    % % -------- Row 1: Linear --------
        (a) Open LLM Leaderboard v2
     \includegraphics[width=\linewidth]{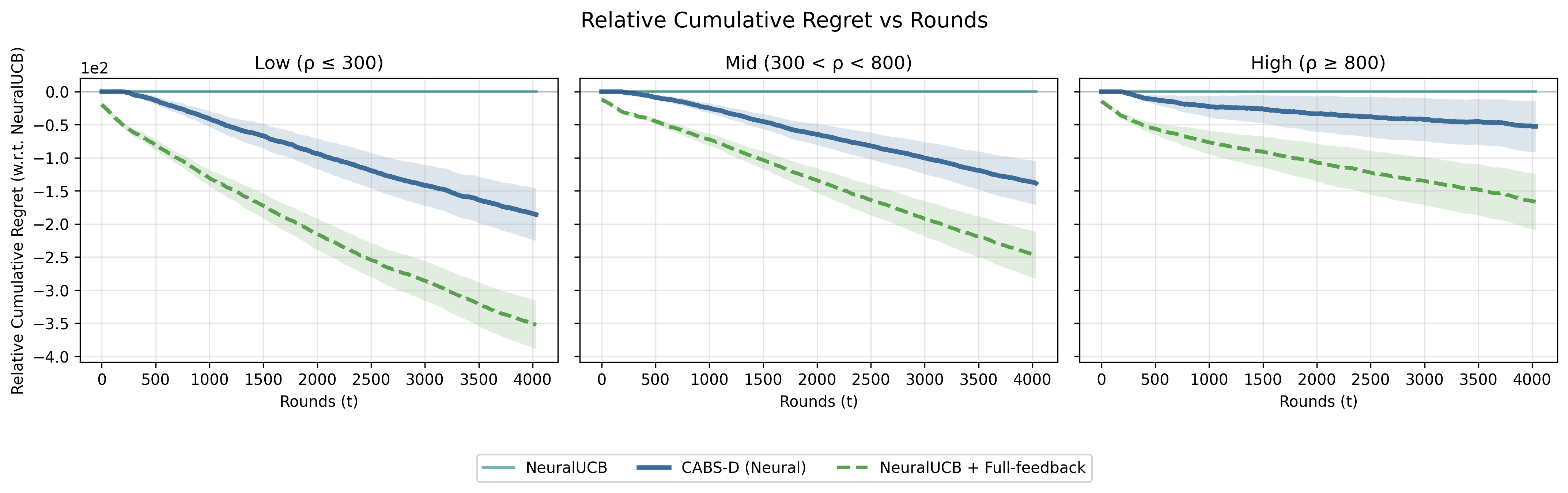}
        (b) RouterBench
     \includegraphics[width=\linewidth]{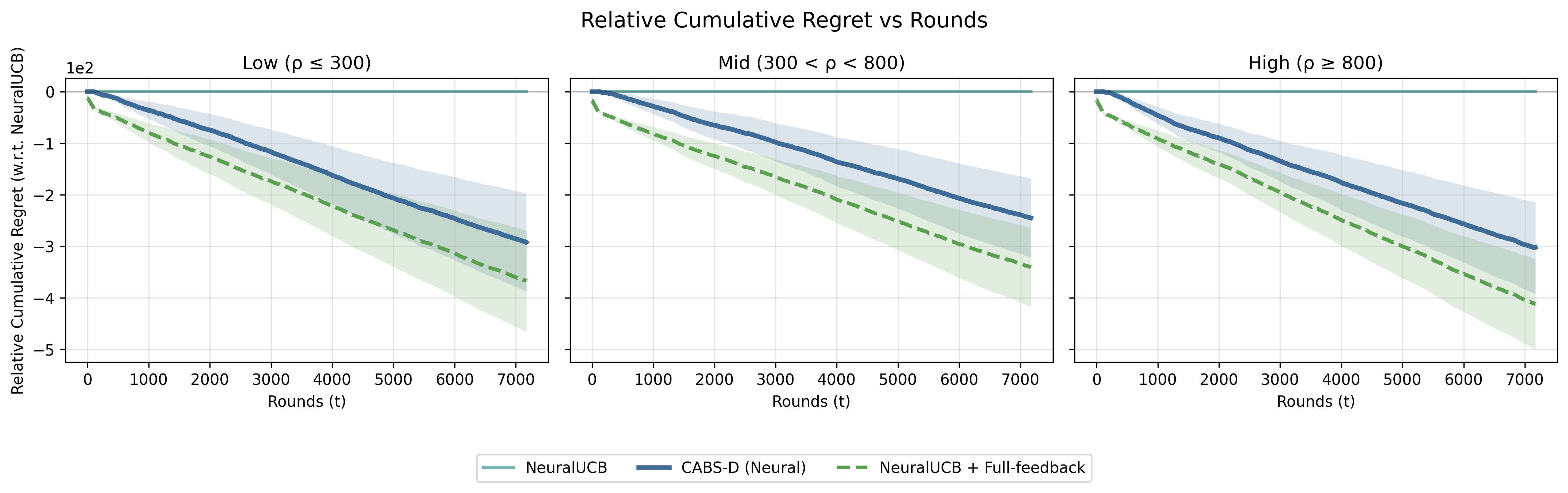} 
        (c) SPROUT
     \includegraphics[width=\linewidth]{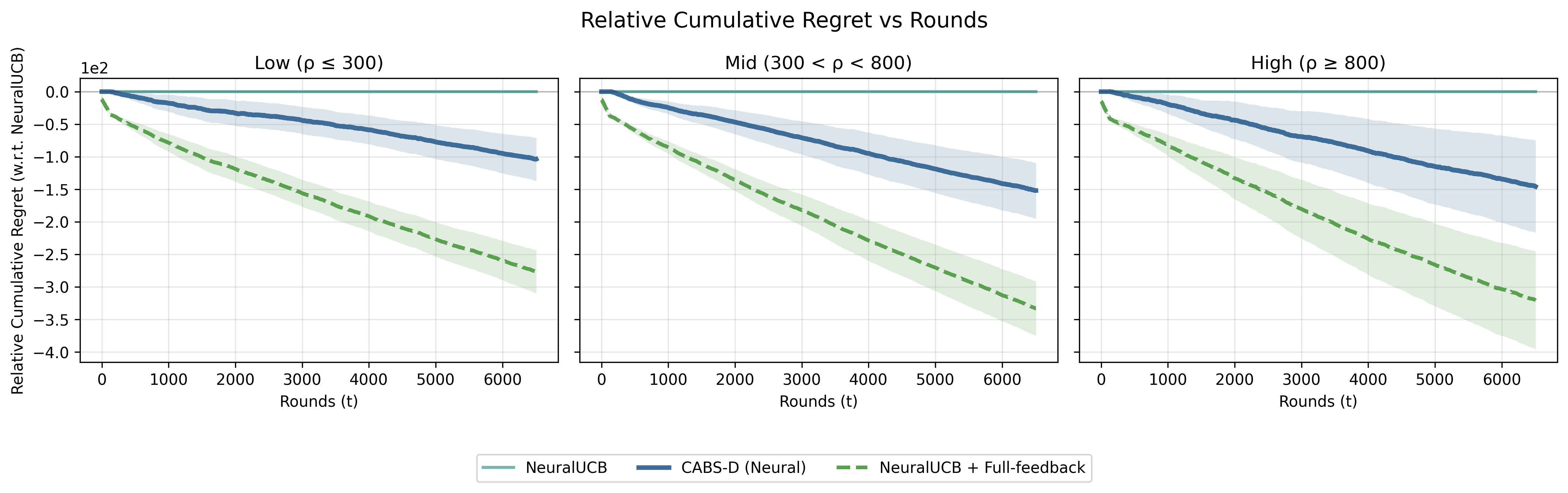} 
    \caption{
        Relative Cumulative regrets with respect to NeuralUCB for other neural online learning methods on Open LLM Leaderboard V2 dataset (top), RouterBench (middle) and SPROUT (bottom).
    }
    \label{fig:regret_curves3}
\end{figure}

\FloatBarrier
\section{The Technical Proofs}
\label{appendix:proofs}

For clarity, we decompose the analysis of regret of proposed algorithms into three components, which progressively build toward our final guarantee. In particular, for the ease of reading, we split our algorithmic analysis into three parts as follows:
\begin{enumerate}
    \item \textbf{Baseline: Contextual Bandit} In Subsection~\ref{appendix:standard-contextual-bandit}, we first present a classical contextual bandit with regret bound $\widetilde O\!\left(\sqrt{KT\mathrm{Reg}_{\mathrm{Sq}}(T)}\right)$. In particular for the linear case, we get $\widetilde O\!\left(\sqrt{dKT \log \left(\frac{T}{d}\right)}\right)$.
    \item \textbf{Correlation-Aware Bandits with Surrogates — Coupled (CABS-C).}  
    We next analyze our proposed graph-feedback algorithm with noisy surrogate rewards in Subsection~\ref{appendix:CABS-C}. When feedback graph $G_t$ has independence number no greater than $\alpha$, the regret satisfies
    \[
    \widetilde O\!\left(
    \sqrt{\alpha T\, \mathrm{Reg}_{\mathrm{Sq}}(T)}
    +
    \varepsilon_{n} \sqrt{\alpha K T}
    \right).
    \]
    In particular, for the linear case, when exactly $m$ surrogate rewards are observed each round, this becomes
    \[
    \widetilde O\!\left(
    \sqrt{d\frac{K}{m+1}T\log\left(\frac{T}{d}\right)}
    +
    \varepsilon_{n} \sqrt{\frac{K^2}{m+1}T}
    \right).
    \]
    demonstrating a significant improvement by scale of roughly $1/m$ as long as the noise from ML-predicted rewards $\varepsilon_n$ is well-bounded. 
    \item \textbf{Correlation-Aware Bandits with Surrogates — Decoupled (CABS-D)} Finally, in Subsection~\ref{appendix:CABS-D} we establish the regret bound for the proposed prediction-mixing algorithm, a meta-bandit algorithm with two arms ((1) and (2) above) with regret bounded by min of them, 
    \[
        \widetilde O\!\left( \min \left\{\sqrt{K T \mathrm{Reg}_{\mathrm{Sq}}(T)}, \sqrt{\alpha T\, \mathrm{Reg}_{\mathrm{Sq}}(T)}
    +
    \varepsilon_{n} \sqrt{\alpha K T} \right\}\right)
    \]
    For the linear case, when exactly $m$ surrogate rewards are observed each round, this becomes
    \[
    \widetilde O\!\left( \min \left\{\sqrt{dKT \log \left(\frac{T}{d}\right) }, \sqrt{d\frac{K}{m+1}T\log \left(\frac{T}{d}\right)}+ \varepsilon_{n}\sqrt{\frac{K^2}{m+1}T}\right\}\right)
    \]
\end{enumerate}

\subsection{Contextual bandits Regret Bounds}\label{appendix:standard-contextual-bandit}
\begin{algorithm}[h]
\caption{SquareCB (Reward Form; adapted from \cite{foster2020beyond})}
\label{alg:squarecb-reward}
\begin{algorithmic}[1]
\STATE \textbf{Parameters:} learning rate $\gamma>0$, exploration parameter $\mu>0$
\STATE \textbf{Input:} online regression oracle \rsolver\, producing predictors $\hat f_t$
\FOR{$t=1,\dots,T$}
    \STATE Receive context $\boldsymbol{x}_t$ \label{line:squarecb-predict-start}
    \STATE \textit{// Compute oracle's reward predictions}
    \FOR{each arm $i \in [K]$}
        \STATE $\hat r_{t,i} \gets \hat f_t(\boldsymbol{x}_t,i)$
    \ENDFOR
    \STATE $b_t \gets \arg\max_{i\in[K]} \hat r_{t,i}$
    \FOR{each arm $i \in [K]\setminus\{b_t\}$}
        \STATE $p_{t,i} \gets \frac{1}{\mu + \gamma\big(\hat r_{t,b_t}-\hat r_{t,i}\big)}$
    \ENDFOR
    \STATE $p_{t,b_t} \gets 1 - \sum_{i \neq b_t} p_{t,i}$ \label{line:squarecb-predict-end}
    \STATE Sample $i_t \sim \boldsymbol{p}_t$ and observe reward $r_{t,i_t}$
    \STATE Update \rsolver\ with example $\big\{(\boldsymbol{x}_t,r_{t,i_t})\big\}$
\ENDFOR
\end{algorithmic}
\end{algorithm}

We begin with the SquareCB~\cite{foster2020beyond} algorithm, which reduces contextual bandits to online regression. We state the modified algorithm that works on rewards instead of losses in Algorithm~\ref{alg:squarecb-reward}. We provide its regret guarantee for completeness and refer the reader to \cite{foster2020beyond} for the proof.

\begin{theorem}[Theorem 1~\cite{foster2020beyond}]
\label{theorem:squarecb}
Suppose Assumption~\ref{assumption:regression_oracle} holds. Then for any $\delta > 0$, by setting $\mu = K$ and $\gamma = \sqrt{KT / \left(\mathrm{Reg}_{\mathrm{Sq}}(T) + \log(2\delta^{-1})\right)}$, 
\textsc{SquareCB} guarantees that with probability at least $1 - \delta$,
\begin{equation}
\mathrm{Reg}_{\mathrm{CB}}(T)
\le
4 \sqrt{KT \cdot \mathrm{Reg}_{\mathrm{Sq}}(T)}
+
8 \sqrt{KT \log(2\delta^{-1})}
= \widetilde O\!\left(\sqrt{KT\mathrm{Reg}_{\mathrm{Sq}}(T)}\right).
\tag{8}
\end{equation}
\end{theorem}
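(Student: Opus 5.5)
The plan is to follow the SquareCB reduction of Foster and Rakhlin, rewritten for rewards. The first step is a per-round decomposition. Let $f^\star$ be the realizable regressor (Assumption~\ref{assumption:realizability}) and $\pi^\star(\boldsymbol{x}_t)=\arg\max_i f^\star(\boldsymbol{x}_t,i)$. The expected instantaneous regret, conditional on $\mathcal{F}_{t-1}$ and $\boldsymbol{x}_t$, is
\[
\sum_i p_{t,i}\bigl(f^\star(\boldsymbol{x}_t,\pi^\star(\boldsymbol{x}_t))-f^\star(\boldsymbol{x}_t,i)\bigr).
\]
We add and subtract $\frac{\gamma}{4}\sum_i p_{t,i}\bigl(\hat f_t(\boldsymbol{x}_t,i)-f^\star(\boldsymbol{x}_t,i)\bigr)^2$. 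This reduces the problem to two tasks:
\begin{itemize}
\item bounding the decision-estimation quantity $\sup_{f,i^\star}\mathbb{E}_{i\sim\boldsymbol{p}_t}\bigl[f(i^\star)-f(i)-\frac{\gamma}{4}(f(i)-\hat f_t(i))^2\bigr]$;
\item bounding the cumulative expected squared error $\sum_t\sum_i p_{t,i}(\hat f_t(\boldsymbol{x}_t,i)-f^\star(\boldsymbol{x}_t,i))^2$.
\end{itemize}

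The second step is the key per-round lemma for the inverse-gap-weighting distribution with $\mu=K$: for every $f$ and $i^\star$, the decision-estimation quantity is at most $2K/\gamma$. To prove it, we split the regret into three pieces:
\[
\bigl(f(i^\star)-\hat f_t(i^\star)\bigr)+\bigl(\hat f_t(i^\star)-\hat f_t(b_t)\bigr)+\mathbb{E}_{i\sim\boldsymbol{p}_t}\bigl[\hat f_t(b_t)-\hat f_t(i)\bigr]+\mathbb{E}_{i\sim\boldsymbol{p}_t}\bigl[\hat f_t(i)-f(i)\bigr].
\]
\begin{itemize}
\item \emph{Estimation error terms.} The terms $f(i^\star)-\hat f_t(i^\star)$ and $\mathbb{E}[\hat f_t(i)-f(i)]$ are handled by AM--GM: $|a|\le \frac{\gamma}{4}p_{t,i}a^2/p_{t,i}\cdot\ldots$, which trades each against the squared penalty at the cost of $1/(\gamma p_{t,i^\star})$.
\item \emph{Gap to the greedy arm.} The exploitation term satisfies $\sum_i p_{t,i}\bigl(\hat f_t(b_t)-\hat f_t(i)\bigr)\le (K-1)/\gamma$, using $p_{t,i}\,\gamma\,\mathrm{gap}_i\le 1$.
\item \emph{Cancellation.} The term $\hat f_t(i^\star)-\hat f_t(b_t)=-\mathrm{gap}_{i^\star}$ offsets $1/(\gamma p_{t,i^\star})=(K+\gamma\,\mathrm{gap}_{i^\star})/\gamma$, leaving $O(K/\gamma)$.
\end{itemize}
Summed over rounds, this gives $2KT/\gamma$.

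The third step converts the squared-error sum into $\mathrm{Reg}_{\mathrm{Sq}}(T)$. Under bandit feedback the oracle of Assumption~\ref{assumption:regression_oracle} controls the realized errors $\sum_t(\hat f_t(\boldsymbol{x}_t,i_t)-f^\star(\boldsymbol{x}_t,i_t))^2$, with $\mathcal{O}_t=\{i_t\}$. The expected version $\sum_t\mathbb{E}_{i\sim\boldsymbol{p}_t}[\cdot]$ differs from the realized one by a martingale difference sequence with bounded, nonnegative increments in $[0,1]$. A Freedman or multiplicative martingale inequality then gives expected sum $\le 2\,\mathrm{Reg}_{\mathrm{Sq}}(T)+O(\log(1/\delta))$ with probability $1-\delta/2$. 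A second Azuma step converts expected instantaneous regret into realized regret, adding $O(\sqrt{T\log(1/\delta)})$. This concentration step is where I expect the main care to be needed: one must check that the oracle bound is stated for realized outcomes with stochastic rewards (square-loss regret against $f^\star$), and then transfer it to excess squared error via realizability, which is itself another martingale argument.

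Finally, combining the pieces gives
\[
\mathrm{Reg}_{\mathrm{CB}}(T)\le \frac{2KT}{\gamma}+\frac{\gamma}{4}\cdot 2\bigl(\mathrm{Reg}_{\mathrm{Sq}}(T)+O(\log(2/\delta))\bigr)+O\bigl(\sqrt{T\log(2/\delta)}\bigr).
\]
Plugging in $\gamma=\sqrt{KT/(\mathrm{Reg}_{\mathrm{Sq}}(T)+\log(2\delta^{-1}))}$ balances the first two terms and yields $4\sqrt{KT\,\mathrm{Reg}_{\mathrm{Sq}}(T)}+8\sqrt{KT\log(2\delta^{-1})}$, after using $\sqrt{a+b}\le\sqrt a+\sqrt b$ and absorbing the Azuma term.
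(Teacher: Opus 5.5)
Your outline is a faithful reconstruction of Foster and Rakhlin's argument~\cite{foster2020beyond}: an inverse-gap-weighting lemma giving $2K/\gamma$ per round, a Freedman step that turns $\sum_t \mathbb{E}_{i\sim\boldsymbol{p}_t}[(\hat f_t-f^\star)^2]$ into $2\,\mathrm{Reg}_{\mathrm{Sq}}(T)+O(\log(2/\delta))$, an Azuma step, and tuning of $\gamma$; the paper does not reprove this and simply defers to that argument, and under this paper's Assumption~\ref{assumption:regression_oracle} the oracle already bounds realized squared error against $f^\star$, so only that single Freedman step is needed. The one place to tighten is the per-round lemma: merge the $i^\star$-component of $\mathbb{E}_{i\sim\boldsymbol{p}_t}[\hat f_t(i)-f(i)]$ with $f(i^\star)-\hat f_t(i^\star)$ before applying AM--GM (equivalently, maximize the concave quadratic in $f$ exactly), so that the total cost is $(1-p_{t,i^\star})/(\gamma p_{t,i^\star})\le 1/(\gamma p_{t,i^\star})$ with coefficient one, because applying AM--GM to the two terms separately either double-charges arm $i^\star$'s penalty or leaves an uncancelled $+\mathrm{gap}_{i^\star}$.
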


\begin{corollary}[Linear Regression Oracle]
\label{cor:squarecb_linear}
Suppose the regression oracle is linear, i.e.,
$\hat{f}(\boldsymbol{x}_t,i_t)=\boldsymbol{x}_t^\top\boldsymbol{\theta}_{i_t}$, 
where $\boldsymbol{x}_t\in\mathbb{R}^d$ with $\|\boldsymbol{x}\|_2\le1$ and 
$\boldsymbol{\theta}_{i_t}\in\mathbb{R}^d$ with $\|\boldsymbol{\theta}_{i_t}\|_2\le1$. 
Choosing $\textsc{SqAlg}$ as the Vovk--Azoury--Warmuth forecaster~\cite{vovk1997competitive, azoury2001relative}, which satisfies
$\mathrm{Reg}_{\mathrm{Sq}}(T)\le d\log \left(\frac{T}{d}\right)$,
Theorem~\ref{theorem:squarecb} implies
\[
\mathrm{Reg}_{\mathrm{CB}}(T)
\le
4\sqrt{dKT\log\left(\frac{T}{d}\right)}
+
8\sqrt{KT\log(2\delta^{-1})}
=
\widetilde O\!\left(\sqrt{dKT\log\left(\frac{T}{d}\right)}\right).
\]
\end{corollary}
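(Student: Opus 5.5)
This corollary follows directly from Theorem~\ref{theorem:squarecb}, so the plan is simply to instantiate its oracle term $\mathrm{Reg}_{\mathrm{Sq}}(T)$ with a concrete online linear regression bound. Nothing about the exploration scheme changes. The inverse-gap-weighting argument of \cite{foster2020beyond} uses the oracle only through Assumption~\ref{assumption:regression_oracle}. The oracle enters only through the cumulative squared prediction error $\sum_t (f^\star(\boldsymbol{x}_t,i_t)-\hat f_t(\boldsymbol{x}_t,i_t))^2$ on the played arms, which must be at most $\mathrm{Reg}_{\mathrm{Sq}}(T)$. So it suffices to show that Vovk--Azoury--Warmuth (VAW), run separately for each arm on the rounds where that arm is played, meets this assumption with $\mathrm{Reg}_{\mathrm{Sq}}(T)=O(d\log(T/d))$.

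\textbf{Step 1.} Recall the VAW guarantee under $\|\boldsymbol{x}\|_2\le 1$, $\|\boldsymbol{\theta}\|_2\le 1$ and bounded responses. Its online square-loss regret against any fixed $\boldsymbol{\theta}$ is $O(d\log(1+T/d))$, which comes from the log-determinant potential $\log\det(\lambda I+\sum_\tau \boldsymbol{x}_\tau\boldsymbol{x}_\tau^\top)$.

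\textbf{Step 2.} Under realizability (Assumption~\ref{assumption:realizability}), convert square-loss regret into estimation error. Write $r_{t,i_t}=\boldsymbol{x}_t^\top\boldsymbol{\theta}^*_{i_t}+\eta_t$ with $\eta_t$ a conditionally mean-zero martingale difference. Expanding the squares shows that the regret against $\boldsymbol{\theta}^*$ equals the prediction error minus the cross term $2\sum_t\eta_t(\hat f_t-f^\star)$. That cross term is controlled by Freedman's inequality, at the cost of an additive $\log(1/\delta)$ that already has the same form as the $\log(2\delta^{-1})$ term in Theorem~\ref{theorem:squarecb}. This is standard and is exactly how \cite{foster2020beyond} handles the high-probability version, so I would cite it rather than redo it.

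\textbf{Step 3.} Handle the per-arm split. Arm $i$ is updated on $T_i$ rounds with $\sum_i T_i=T$, giving a total error of $\sum_i d\log(T_i/d)$. By concavity this is at most $Kd\log(T/(Kd))$. Equivalently, running a single VAW on the $Kd$-dimensional block feature $\boldsymbol{e}_{i}\otimes\boldsymbol{x}$ gives the same quantity. The statement writes $d\log(T/d)$, so I would either absorb the factor as the paper's convention for $d$ (a joint parameter dimension) or note that the $\widetilde O$ form is unaffected up to this dimension bookkeeping. This is the one step needing care; I would state the corollary with $d$ denoting the total parameter dimension of the linear class.

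\textbf{Step 4.} Substitute $\mathrm{Reg}_{\mathrm{Sq}}(T)\le d\log(T/d)$ into Theorem~\ref{theorem:squarecb} with $\mu=K$ and the stated $\gamma$. This gives $4\sqrt{dKT\log(T/d)}+8\sqrt{KT\log(2\delta^{-1})}$, and hiding the $\log(1/\delta)$ term yields the $\widetilde O$ form.
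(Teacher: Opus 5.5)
Your proposal is correct and matches the paper's argument: the corollary is just the substitution of the VAW guarantee $\mathrm{Reg}_{\mathrm{Sq}}(T)\le d\log(T/d)$ into Theorem~\ref{theorem:squarecb} with $\mu=K$ (your Step 4), and the paper takes that VAW bound as a cited hypothesis rather than re-deriving it as you do in Steps 1--2. Your Step 3 caveat is well taken. With a separate $\boldsymbol{\theta}_i$ per arm, the natural VAW guarantee is $Kd\log(T/(Kd))$ rather than $d\log(T/d)$, so the stated bound is accurate only when $d$ denotes the total parameter dimension, or when the VAW bound is simply assumed as the corollary's premise, which is what the paper does without comment.
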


\subsection{Correlation Aware Bandits with Surrogates Coupled (CABS-C)}\label{appendix:CABS-C}

In this section, we present algorithmic analysis for Contextual Bandits with Noisy Graph Feedback with regret bound depending on a noise factor, $\widetilde O\!\left(\sqrt{\alpha T\, \mathrm{Reg}_{\mathrm{Sq}}(T)}+\varepsilon_{n} \sqrt{\alpha K T}\right)$. In our main paper, we present this as Correlation Aware Bandits with Surrogates Coupled (CABS-C), Algorithm~\ref{alg:CABS-C}. We include here all intermediate lemmas and derivations to ensure completeness and reproducibility of the analysis.

\subsubsection{High Level Proof Sketch}
\label{app:CABS-C-sketch}
The proof of Theorem~\ref{theorem:CABS-C} proceeds in three main steps. First, we decompose the total expected regret into an algorithmic optimization term (sometimes referred to as the \textit{non-bias term}), the cumulative regret of the regression oracle $\mathrm{Reg_{Sq}}(T)$, and an expected bias term $\mathbb{E}[\mathcal{B}_T]$ that accounts for the misspecification of the surrogate rewards. Next, we further decompose this expected bias term into two distinct components: a regression-propagation error and an observation noise term. We continue by systematically bounding each piece of this decomposition: the optimization term is bounded using a minimax argument over the action probabilities, the regression-propagation error is bounded using the Cauchy-Schwarz inequality and properties of harmonic sums, and the observation noise is bounded using the Azuma-Hoeffding inequality applied to the bounded martingale difference sequence. Combining these individual bounds and tuning the learning rate parameter $\gamma=\sqrt{\alpha T/Reg_{Sq}(T)}$ to balance the components yields the final stated regret bound of $\tilde{O}(\sqrt{\alpha T~Reg_{Sq}(T)}+\epsilon_n\sqrt{\alpha KT})$.

\textbf{Regret Bound for Algorithm~\ref{alg:CABS-C} ({\sffamily{CABS-C}})}

\paragraph{Notation}
Let \(n_{t,j}\) be the number of times arm \(j\) has been observed up to (and including) round \(t\), and \(\mathcal T_t(j)=\{\tau\le t: \mathcal A_{\tau,(i_\tau, j)}=1\}\). Finally denote \(\Delta_{\tau,j}:=
|\boldsymbol{x}_\tau^{\top} \hat{\boldsymbol{\theta}}_{\tau,k} - \boldsymbol{x}_\tau^{\top}\boldsymbol{\theta}^*_{k} |
% f^\star(x_\tau,j)-\hat f_\tau(x_\tau,j)
\).\\\\
To prove Theorem~\ref{theorem:CABS-C}, we first introduce the following auxiliary lemmas regarding the regret and bias decomposition. We continue by introducing lemmas to individually bound each term of the decomposition. Finally, we combine the results of all these lemmas in the final proof of Theorem~\ref{theorem:CABS-C}.

\begin{lemma}[Regret Decomposition]\label{lem:cabsc-regret-decomp}
For any $\gamma > 0$, we may decompose the regret of Algorithm~\ref{alg:CABS-C} as the following:
\begin{equation}\label{eq:dec-decomp}
\mathbb{E}[\mathrm{Reg}_T] \;\le\;
T \cdot \phi(\boldsymbol{p}; \hat{\boldsymbol{\theta}}_{t}, \boldsymbol{x}_t, \boldsymbol{A}_t)
\;+\; \frac{\gamma}{4}\,\mathrm{Reg}_{\mathrm{Sq}}(T)
\;+\; \mathbb{E}[\mathcal B_T ].
\end{equation}
\begin{proof}
This follows from the decomposition in Theorem 3.3 of \cite{foster2021statistical}, plus an additive bias term to account for potential misspecification of surrogate rewards
\end{proof}
\end{lemma}

\begin{lemma}[Bias Decomposition]\label{lem:cabsc-bias-decomp}
Given the regret decomposition provided in Equation~\ref{eq:dec-decomp},  we may decompose the expected bias term $\mathbb{E}[\mathcal{B}_T]$ of Algorithm~\ref{alg:CABS-C} as
\begin{equation*}
\mathbb{E}[\mathcal{B}_T]
=
\sum_{t=1}^T \sum_{j=1}^K \left[
\frac{w_{t,j}}{n_{t,j}}
\sum_{\tau \in \mathcal T_t(j)} \Delta_{\tau,j}
\right]
+
\sum_{t=1}^T \sum_{j=1}^K
\left|
\frac{w_{t,j}}{n_{t,j}}
\sum_{\tau \in \mathcal T_t(j)} \xi_{\tau,j}
\right|,
\end{equation*}
where $w_{t,j} = \frac{p_{t,j}}{\boldsymbol{A}_{t,(:,j)}^\top \boldsymbol{p}_t}.$ The first term corresponds to what we term the regression-propagation error, and the second term captures the contribution of observation noise.
\begin{proof}
The term $\mathbb{E}[\mathcal B_T]$ captures the expected cumulative regret induced by bias in the surrogate reward estimates. Let us begin by defining $e_{t,j} := |b_j - \hat b_{t,j}|$, where $b_j$ is the true bias of arm $j$, and $\hat b_{t,j}$ is the estimated bias of arm $j$ at round $t$. We may state the cumulative regret as
\begin{equation*}
\mathbb{E}[\mathcal{B}_T]
= \sum_{t=1}^T \, \mathbb{E} \left[ \sum_{j: A_{t,(i_t,j)}=1} e_{t,j}\right]
\end{equation*}
At round $t$, the instantaneous contribution of bias to the regret is
\begin{equation*}
\left[ \sum_{j: A_{t,(i_t,j)}=1} e_{t,j}\right] = \sum_{j=1}^K \frac{p_{t,j}}{\boldsymbol{A}_{t,(:,j)}^\top \boldsymbol{p}_t}\, e_{t,j}
\end{equation*}
We use the definition of $w_{t,j}$ to rewrite this as
\begin{equation*}
\mathbb{E}[\mathcal B_T]
\;:=\;
\sum_{t=1}^T \sum_{j=1}^K w_{t,j}\, e_{t,j}.
\end{equation*}
We now make explicit the structure of $e_{t,j}$.
Recall that the bias estimate $\hat b_{t,j}$ is maintained as a running average over past observations of arm $j$.
By definition of the running average from Algorithm~\ref{alg:CABS-C},
\[
\hat b_{t,j}
\;=\;
\frac{1}{n_{t,j}}
\sum_{\tau \in \mathcal T_t(j)}
\left( s_{\tau,j} - \boldsymbol{x}_\tau^{\top} \hat{\boldsymbol{\theta}}_{\tau,j} 
% - \hat f_\tau(x_\tau,j)
\right).
\]
Substituting the reward model
\[
s_{\tau,j}
\;=\;
% f^\star(x_\tau,j)
\boldsymbol{x}_\tau^{\top}\boldsymbol{\theta}^*_{j} + b_j + \xi_{\tau,j},
\]
we obtain
\begin{equation*}
\hat b_{t,j}
=
b_j
+
\frac{1}{n_{t,j}}
\sum_{\tau \in \mathcal T_t(j)}
\left(
\boldsymbol{x}_t^{\top} \hat{\boldsymbol{\theta}}_{\tau,j} - \boldsymbol{x}_\tau^{\top}\boldsymbol{\theta}^*_{j}
\right)
+
\frac{1}{n_{t,j}}
\sum_{\tau \in \mathcal T_t(j)} \xi_{\tau,j}.
\end{equation*}
Plugging this into the definition of $e_{t,j}$ yields
\begin{align*}
e_{t,j}
&=
\left|
\frac{1}{n_{t,j}}
\sum_{\tau \in \mathcal T_t(j)}
\left(
\boldsymbol{x}_t^{\top} \hat{\boldsymbol{\theta}}_{\tau,j} - \boldsymbol{x}_\tau^{\top}\boldsymbol{\theta}^*_{j}
\right)
+\frac{1}{n_{t,j}}
\sum_{\tau \in \mathcal T_t(j)} \xi_{\tau,j}
\right|\\
&\le
\left|
\frac{1}{n_{t,j}}
\sum_{\tau \in \mathcal T_t(j)}
\left(
\boldsymbol{x}_t^{\top} \hat{\boldsymbol{\theta}}_{\tau,j} - \boldsymbol{x}_\tau^{\top}\boldsymbol{\theta}^*_{j}
\right)\right|
+ \left|\frac{1}{n_{t,j}}
\sum_{\tau \in \mathcal T_t(j)} \xi_{\tau,j}
\right|
\end{align*}
Recall,
\[
\Delta_{\tau,j}
:= 
|\boldsymbol{x}_t^{\top} \hat{\boldsymbol{\theta}}_{\tau,j} - \boldsymbol{x}_\tau^{\top}\boldsymbol{\theta}^*_{j}|,
\]
After plugging this into our previous equation and summing over all rounds $T$, this can be written compactly as
\begin{equation*}
\mathbb{E}[\mathcal{B}_T]
\le
\sum_{t=1}^T \sum_{j=1}^K \left[
\frac{w_{t,j}}{n_{t,j}}
\sum_{\tau \in \mathcal T_t(j)} \Delta_{\tau,j}
\right]
+
\sum_{t=1}^T \sum_{j=1}^K
\left|
\frac{w_{t,j}}{n_{t,j}}
\sum_{\tau \in \mathcal T_t(j)} \xi_{\tau,j}
\right|,
\end{equation*}

\end{proof}
\end{lemma}

\begin{lemma}[Bounding the Non-Bias Terms]\label{lem:cabsc-dec-bound}
Suppose the feedback graph $G_t$ is deterministic with independence number no more than $\alpha$. Then Algorithm~\ref{alg:CABS-C} guarantees that
\begin{equation*}
\phi(\boldsymbol{p}; \hat{\boldsymbol{\theta}}_{t}, \boldsymbol{x}_t, \boldsymbol{A}_t) 
\le
O\!\left(\frac{\alpha \log(K\gamma)}{\gamma}\right)
\end{equation*}
\begin{proof}
The following proof primarily follows from \cite{zhang2023practical} (Theorem 3.2). We include a summarized version below for completeness. Let us first rewrite $\phi(\boldsymbol{p}; \hat{\boldsymbol{\theta}}_{t}, \boldsymbol{x}_t, \boldsymbol{A}_t)$ in terms of losses:

\[
\phi(\boldsymbol{p}; \hat{\boldsymbol{\theta}}_{t}, \boldsymbol{x}_t, \boldsymbol{A}_t)
             := \sup_{\substack{ i^* \in [K] \\ \boldsymbol{\theta}^* \in \mathbb{R}_+^{K \times d}}} \hspace{0.5em} \mathbb{E}_{i \sim p} \left[ \boldsymbol{x}_t^{\top} \boldsymbol{\theta}^*_{i} - \boldsymbol{x}_t^{\top} \boldsymbol{\theta}^*_{i^*} - \frac{\gamma}{4} \sum_{k: A_{t,(i,k)} = 1} \mathbb{E}[A_{t,(i,k)}](\boldsymbol{x}_t^{\top} \hat{\boldsymbol{\theta}}_{t,k} - \boldsymbol{x}_t^{\top}\boldsymbol{\theta}^*_{k})^2 \right]
\]
Direct calculation shows that for all $i^* \in [K]$,
\newcommand{\decW}{\sum_{k: A_{t,(k,i)} = 1}\,p_{t,k}\,}
\begin{align*}
& \mathbb{E}_{i \sim p} \left[ \boldsymbol{x}_t^{\top} \boldsymbol{\theta}^*_{i} - \boldsymbol{x}_t^{\top} \boldsymbol{\theta}^*_{i^*} - \frac{\gamma}{4} \sum_{k: A_{t,(i,k)} = 1} \mathbb{E}[A_{t,(i,k)}](\boldsymbol{x}_t^{\top} \hat{\boldsymbol{\theta}}_{t,k} - \boldsymbol{x}_t^{\top}\boldsymbol{\theta}^*_{k})^2 \right]\\
&= \sum_{i=1}^K p_{t,i}\boldsymbol{x}_t^{\top} \boldsymbol{\theta}^*_{i} - \boldsymbol{x}_t^{\top} \boldsymbol{\theta}^*_{i^*} - \frac{\gamma}{4} \sum_{i=1}^K \decW (\boldsymbol{x}_t^{\top} \hat{\boldsymbol{\theta}}_{t,i} - \boldsymbol{x}_t^{\top}\boldsymbol{\theta}^*_{i})^2
\end{align*}

We can take the gradient over $f^*(\boldsymbol{x}_t,\cdot)= \boldsymbol{x}_t^{\top}\boldsymbol{\theta}^*_{\cdot}$ to get
\begin{align*}
& \sup_{f^* \in (\mathcal{X} \times [K] \rightarrow \mathbb{R})} \left[
\sum_{i=1}^K p_{t,i}\boldsymbol{x}_t^{\top} \boldsymbol{\theta}^*_{i} - \boldsymbol{x}_t^{\top} \boldsymbol{\theta}^*_{i^*} - \frac{\gamma}{4} \sum_{i=1}^K \decW (\boldsymbol{x}_t^{\top} \hat{\boldsymbol{\theta}}_{t,i} - \boldsymbol{x}_t^{\top}\boldsymbol{\theta}^*_{i})^2\right]\\
& = \sum_{i=1}^K p_{t,i} \boldsymbol{x}_t^{\top} \hat{\boldsymbol{\theta}}_{t,i} - \boldsymbol{x}_t^{\top} \hat{\boldsymbol{\theta}}_{t,i^*} + \frac{1}{\gamma}\|\boldsymbol{p}_t-\boldsymbol{e}_{i^*}\|_{\mathrm{diag}(\boldsymbol{W}_t)^{-1}}^2
\end{align*}
Where $W_{t,i} = \sum_{k: A_{t,(k,i)} = 1}\,p_{t,k}$. Next, let us consider the following minimax form over $\boldsymbol{p}_t \in \Delta([K])$ and $i^* \in [K]$:
\begin{align}
& \inf_{\boldsymbol{p}_t \in \Delta([K])} \, \sup_{i^* \in [K]} \, \left\{ \sum_{i=1}^K p_{t,i} \boldsymbol{x}_t^{\top} \hat{\boldsymbol{\theta}}_{t,i} - \boldsymbol{x}_t^{\top} \hat{\boldsymbol{\theta}}_{t,i^*} + \frac{1}{\gamma}\|\boldsymbol{p}_t-\boldsymbol{e}_{i^*}\|_{\mathrm{diag}(\boldsymbol{W}_t)^{-1}}^2 \right\}\notag \\
& = \min_{\boldsymbol{p}_t \in \Delta([K])} \, \max_{i^* \in [K]} \, \left\{ \sum_{i=1}^K p_{t,i} \boldsymbol{x}_t^{\top} \hat{\boldsymbol{\theta}}_{t,i} - \boldsymbol{x}_t^{\top} \hat{\boldsymbol{\theta}}_{t,i^*} + \frac{1}{\gamma}\sum_{i \neq i^*} \frac{p_{t,i}^2}{W_{t,i}} + \frac{1}{\gamma}\frac{(1-p_{t,i^*})^2}{\gamma \, W_{t,i}} \right\} \label{eq:dec-zhang-convex}\\
& = \max_{\boldsymbol{q}_t \in \Delta_K} \, \min_{\boldsymbol{p}_t \in \Delta_K} \, \left\{ \sum_{i=1}^K p_{t,i} \boldsymbol{x}_t^{\top} \hat{\boldsymbol{\theta}}_{t,i} - \boldsymbol{x}_t^{\top} \hat{\boldsymbol{\theta}}_{t,i^*} + \frac{1}{\gamma}\sum_{i=1}^K \frac{p_{t,i}^2(1-q_{t,i})}{W_{t,i}} + \sum_{i=1}^K \frac{q_{t,i}(1-p_{t,i})^2}{\gamma \, W_{t,i}} \right\} \notag
\end{align}
where the last inequality is due to the fact that Equation \eqref{eq:dec-zhang-convex} is convex in $p \in \Delta([K])$ and Sion's minimax theorem. Choose $p_{t,i} = (1-\frac{1}{\gamma})q_{t,i} + \frac{1}{\gamma K}$ for all $i \in [K]$. Due to Assumption~\ref{asm:strong-observability}, all nodes in the feedback graph $G_t$ have a self-loop. This allows us to upper bound the quantity as
\begin{align*}
& = \max_{\boldsymbol{q}_t \in \Delta([K])} \, \min_{\boldsymbol{p}_t \in \Delta_K} \, \left\{ \sum_{i=1}^K p_{t,i} \boldsymbol{x}_t^{\top} \hat{\boldsymbol{\theta}}_{t,i} - \boldsymbol{x}_t^{\top} \hat{\boldsymbol{\theta}}_{t,i^*} + \frac{1}{\gamma}\sum_{i=1}^K \frac{p_{t,i}^2(1-q_{t,i})}{W_{t,i}} + \sum_{i=1}^K \frac{q_{t,i}(1-p_{t,i})^2}{\gamma \, W_{t,i}} \right\}\\
& \le \max_{\boldsymbol{q}_t \in \Delta([K])} \, \left\{  \frac{2}{\gamma}+\frac{1}{\gamma} \sum_{i=1}^K \frac{\left((1-\frac{1}{\gamma}q_{t,i} + \frac{1}{\gamma K}\right)^2 (1-q_{t,i})+q_{t,i}\left(1-(1-\frac{1}{\gamma})q_{t,i} - \frac{1}{\gamma K}\right)^2}{W_{t,i}} \right\}\\
& \le \max_{\boldsymbol{q}_t \in \Delta([K])} \, \left\{  \frac{2}{\gamma}+\frac{1}{\gamma} \sum_{i=1}^K \frac{2\left((1-\frac{1}{\gamma})^2 q_{t,i}^2 + \frac{1}{\gamma^2 K^2}\right) (1-q_{t,i})+q_{t,i}\left(1-(1-\frac{1}{\gamma})q_{t,i}\right)^2}{W_{t,i}} \right\}\\
& \le \max_{\boldsymbol{q}_t \in \Delta([K])} \, \left\{  \frac{2}{\gamma}+\frac{2}{\gamma^2} + \frac{1}{\gamma} \sum_{i=1}^K \frac{2q_{t,i}^2(1-q_{t,i})+2q_{t,i}(1-q_{t,i})^2 + \frac{2q_{t,i}^3}{\gamma^2}}{W_{t,i}} \right\}
\end{align*}
Since $W_{t,i} = \decW \ge \frac{1}{\gamma K}$ for all $i \in [K]$ We can further bound this by saying
\begin{align*}
& \le \max_{\boldsymbol{q}_t \in \Delta([K])} \, \left\{  \frac{2}{\gamma}+\frac{2}{\gamma^2} + \frac{1}{\gamma} \sum_{i=1}^K \frac{2q_{t,i}^2(1-q_{t,i})+2q_{t,i}(1-q_{t,i})^2 + \frac{2q_{t,i}^3}{\gamma^2}}{W_{t,i}} \right\}\\
& \le \max_{\boldsymbol{q}_t \in \Delta([K])} \, \left\{  \frac{2}{\gamma}+\frac{2}{\gamma^2} + \frac{2}{\gamma} \sum_{i=1}^K \frac{q_{t,i}(1-q_{t,i})}{W_{t,i}} + \frac{2}{\gamma^3} \sum_{i=1}^K \frac{q_{t,i}^3}{W_{t,i}} \right\}\\
& \le \max_{\boldsymbol{q}_t \in \Delta([K])} \, \left\{  \frac{2}{\gamma}+\frac{2}{\gamma^2} + \frac{2}{\gamma} \sum_{i=1}^K \frac{q_{t,i}(1-q_{t,i})}{W_{t,i}} + \frac{2}{\gamma^3} \sum_{i=1}^K q_{t,i}^2 \right\}\\
& \le \max_{\boldsymbol{q}_t \in \Delta_K} \, \left\{  \frac{8}{\gamma} + \frac{2}{\gamma} \sum_{i=1}^K \frac{q_{t,i}(1-q_{t,i})}{W_{t,i}} \right\}\\
\end{align*}
Next, we will bound $\frac{2q_{t,i}(1-q_{t,i})}{W_{t,i}}$ for each $i \in [K]$. Since we know that all nodes have self-loops in our feedback graph,we know that
\begin{align*}
\sum_{i=1}^K \frac{2q_{t,i}(1-q_{t,i})}{W_{t,i}} &\le \sum_{i=1}^K \frac{2q_{t,i}(1-q_{t,i})}{\sum_{j:A_{t,(j,i)}}((1-\frac{1}{\gamma})q_{t,j} + \frac{1}{\gamma K})}\\
&\le \frac{\gamma}{\gamma-1}\sum_{i=1}^K \frac{2((1-\frac{1}{\gamma})q_{t,i}+\frac{1}{\gamma K})(1-q_{t,i})}{\sum_{j:A_{t,(j,i)}}((1-\frac{1}{\gamma})q_{t,j} + \frac{1}{\gamma K})}\\
&\le 4 \sum_{i=1}^K \frac{((1-\frac{1}{\gamma})q_{t,i}+\frac{1}{\gamma K})}{\sum_{j:A_{t,(j,i)}}((1-\frac{1}{\gamma})q_{t,j} + \frac{1}{\gamma K})} \le O\!\left(\alpha \log(K\gamma)\right)
\end{align*}
The last inequality is due to Lemma 5 from~\cite{alon2015online}. Combining all of the above inequalities, we get
\begin{align*}
& \inf_{\boldsymbol{p}_t \in \Delta([K])} \, \sup_{i^* \in [K]} \, \left\{ \sum_{i=1}^K p_{t,i} \boldsymbol{x}_t^{\top} \hat{\boldsymbol{\theta}}_{t,i} - \boldsymbol{x}_t^{\top} \hat{\boldsymbol{\theta}}_{t,i^*} + \frac{1}{\gamma}\|\boldsymbol{p}_t-\boldsymbol{e}_{i^*}\|_{\mathrm{diag}(\boldsymbol{W}_t)^{-1}}^2 \right\}\\
& \le \max_{\boldsymbol{q}_t \in \Delta_K} \, \left\{  \frac{8}{\gamma} + \frac{2}{\gamma} \sum_{i=1}^K \frac{q_{t,i}(1-q_{t,i})}{W_{t,i}} \right\} \le O\!\left(\frac{\alpha \log(K\gamma)}{\gamma}\right)
\end{align*}
\end{proof}
\end{lemma}

Before introducing the lemmas to bound the noise terms of our regret decomposition, we introduce the following useful graph-theoretic result, which follows from ~\cite{alon2015online}.

\begin{lemma}[A Useful Graph-Theoretic Property]\label{lem:alon-graph-alpha} 
Let $G_t$ be a directed graph with vertices $i \in [K]$, in which $A_{t,(i,i)}=1$ for all vertices. Assign each $i \in [K]$ a positive weight $p_{t,i}$ such that $\sum_{i \in [K]} p_{t,i} \le 1$ and $p_{t,i} \ge \epsilon$ for all $i \in [K]$. Then
\[
\sum_{i=1}^K \frac{p_{t,i}}{
\boldsymbol{A}_{t,(:,i)}^\top\boldsymbol p_t}
\;\le\; \alpha \cdot \ln\!\left(\frac{4K}{\alpha\epsilon}\right),
\]
where $\alpha$ is the independence number of $G_t$.
\begin{proof}
This follows from Lemma 5 of ~\cite{alon2015online}.
\end{proof}
\end{lemma}

\begin{lemma}[Bounding the Regression-Propagation Error]\label{lem:cabsc-reg-prop-bound}
Suppose the feedback graph $G_t$ is deterministic with independence number no more than $\alpha$. Then Algorithm~\ref{alg:CABS-C} guarantees that the regression-propagation error term of the bias may be bounded as
\begin{equation*}
\sum_{t=1}^T \sum_{j=1}^K \left[
\frac{w_{t,j}}{n_{t,j}}
\sum_{\tau \in \mathcal T_t(j)} \Delta_{\tau,j}
\right]
\le
\sqrt{\alpha \,\ln(4K/\alpha\epsilon)\,(1+\ln T)\, T\,\mathrm{Reg}_{\mathrm{Sq}}(T)}
\end{equation*}
\begin{proof}
We begin by applying the Cauchy--Schwarz inequality to the regression-propagation error term and performing algebraic manipulation:
\begin{align}
    \sum_{t=1}^T \sum_{j=1}^K \left[
\frac{w_{t,j}}{n_{t,j}}
\sum_{\tau \in \mathcal T_t(j)} \Delta_{\tau,j}
\right]
&\le
\sqrt{\sum_{t=1}^T\sum_{j=1}^K w_{t,j}}
\cdot
\sqrt{\sum_{t=1}^T\sum_{j=1}^K\frac{w_{t,j}}{n_{t,j}}\sum_{\tau\in\mathcal T_t(j)}\Delta_{\tau,j}^2} \notag\\
&\le
\sqrt{\alpha\,T\;\ln\!\left(\frac{4K}{\alpha\epsilon}\right)}
\cdot
\sqrt{\sum_{t=1}^T\sum_{j=1}^K\frac{w_{t,j}}{n_{t,j}}\sum_{\tau\in\mathcal T_t(j)}\Delta_{\tau,j}^2} \label{eq:reg-prop-err-cs1}
\end{align}
where Equation~\ref{eq:reg-prop-err-cs1} is due to applying Lemma~\ref{lem:alon-graph-alpha} to the first term and assuming \(p_{t,i}\ge \epsilon\) for all \(i\) (Assumption~\ref{asm:min-probability}).\\\\
Next we swap sums in the second term to group by the regression error 
\(\Delta_{\tau,j}^2\). To do so, we must re-index the summations. We temporarily define $t_{j,m}$ as the round in which the learner observes arm $j$ for the $m$-th time. After re-indexing we get
\begin{align}
&\sqrt{\alpha\,T\;\ln\!\left(\frac{4K}{\alpha\epsilon}\right)}
\cdot
\sqrt{\sum_{j=1}^K\sum_{m=2}^{n_{T,j}}\frac{w_{t_{j,m},j}}{m-1}\sum_{u=1}^{m-1}\Delta_{t_{j,u},j}^2} \notag\\
&=
\sqrt{\alpha\,T\;\ln\!\left(\frac{4K}{\alpha\epsilon}\right)}
\cdot
\sqrt{\sum_{j=1}^K \sum_{u=1}^{n_{T,j}-1}\Delta_{t_{j,u},j}^2 \sum_{m=u+1}^{n_{T,j}}\frac{w_{t_{j,m},j}}{m-1}} \label{eq:reg-prop-err-swapsum}\\
&\le
\sqrt{\alpha\,T\;\ln\!\left(\frac{4K}{\alpha\epsilon}\right)}
\cdot
\sqrt{\sum_{j=1}^K \sum_{u=1}^{n_{T,j}-1}\Delta_{t_{j,u},j}^2 \sum_{m=1}^{n_{T,j}}\frac{1}{m}} \notag\\
&\le
\sqrt{\alpha\,T\;\ln\!\left(\frac{4K}{\alpha\epsilon}\right)}
\cdot
\sqrt{\sum_{j=1}^K \sum_{u=1}^{n_{T,j}-1}\Delta_{t_{j,u},j}^2 \cdot (1 + \ln T)} \label{eq:reg-prop-err-harmonic}\\
&\le
\sqrt{\alpha\,T\;\ln\!\left(\frac{4K}{\alpha\epsilon}\right)}
\cdot
\sqrt{\sum_{j=1}^K \sum_{t=1}^{T}\Delta_{t,j}^2 \cdot (1 + \ln T)} \notag\\
&\le
\sqrt{\alpha\,T\;\ln\!\left(\frac{4K}{\alpha\epsilon}\right)}
\cdot
\sqrt{\mathrm{Reg_{Sq}}(T) \cdot (1 + \ln T)}, \label{eq:reg-prop-err-mult}
\end{align}
where Equation~\ref{eq:reg-prop-err-swapsum} is due to swapping sums, and Equation~\ref{eq:reg-prop-err-harmonic} is due to the following fact about harmonic sums: For any arm $j,$ the sum, over its observation indices $m=1,\dots,n_{T,j}$, of $1/m$ satisfies
\begin{equation*}
\sum_{m=1}^{n_{T,j}} \frac{1}{m} \le 1 + \ln n_{T,j} \le 1 + \ln T.
\end{equation*}
We can combine the terms in Equation~\ref{eq:reg-prop-err-mult} to finally get
\begin{equation*}
\sum_{t=1}^T \sum_{j=1}^K \left[
\frac{w_{t,j}}{n_{t,j}}
\sum_{\tau \in \mathcal T_t(j)} \Delta_{\tau,j}
\right]
\le
\sqrt{\alpha T\,\ln(4K/\alpha\epsilon)\,(1+\ln T)\,\mathrm{Reg}_{\mathrm{Sq}}(T)}
\end{equation*}
\end{proof}
\end{lemma}

\begin{lemma}[Bounding the Observation Noise]\label{lem:cabsc-obs-noise-bound}
Suppose the feedback graph $G_t$ is deterministic with independence number no more than $\alpha$. Then, for all $\delta > 0$, with probability $1-\delta$ Algorithm~\ref{alg:CABS-C} guarantees that the observation noise term of the bias may be bounded as
\begin{equation*}
\sum_{t=1}^T \sum_{j=1}^K
\left|
\frac{w_{t,j}}{n_{t,j}}
\sum_{\tau \in \mathcal T_t(j)} \xi_{\tau,j}
\right|
\le
\varepsilon_n \sqrt{{2 \alpha KT \ln\!\left(\frac{2KT}{\delta}\right)}
\ln\!\left(\frac{4K}{\alpha\epsilon}\right)
(1+\ln T)}
\end{equation*}
\begin{proof}
We begin by bounding the term 
\begin{equation*}
\left|\frac{1}{n_{t,j}}\sum_{\tau\in\mathcal T_t(j)}\xi_{\tau,j}\right|
\end{equation*}
We do so by fixing a specific arm $j$ and considering the sequence of noise terms observed for this arm. Let $k$ index the observations of arm $j$, such that $\zeta_k$ is the noise of the $k$-th observation. The sequence of partial sums $S_m = \sum_{k=1}^m \zeta_k$ forms a martingale with respect to the filtration of observation events, satisfying the bounded difference property:
\[
|S_m - S_{m-1}| = |\zeta_m| \le \varepsilon_n.
\]
We apply the {Azuma-Hoeffding inequality}. For any $m \ge 1$ and any $\lambda > 0$:
\[
\mathbb{P}\left( |S_m| \ge \lambda \right) \le 2 \exp\!\left( -\frac{\lambda^2}{2 \sum_{k=1}^m (\varepsilon_n)^2} \right) = 2 \exp\!\left( -\frac{\lambda^2}{2 m \varepsilon_n^2} \right).
\]
We seek a bound that holds with high probability $1 - \delta'$. Setting the right-hand side to $\delta'$ and solving for $\lambda$:
\begin{align*}
    2 \exp\!\left( -\frac{\lambda^2}{2 m \varepsilon_n^2} \right) &= \delta' \\
    -\frac{\lambda^2}{2 m \varepsilon_n^2} &= \ln(\delta'/2) \\
    \lambda &= \sqrt{2 m \varepsilon_n^2 \ln(2/\delta')}.
\end{align*}
Substituting $S_m = n_{t,j} \cdot \frac{1}{n_{t,j}}\sum_{\tau\in\mathcal T_t(j)}\xi_{\tau,j}$ (where $m=n_{t,j}$) and dividing by $n_{t,j}$:
\[
\mathbb{P}\left( \left|\frac{1}{n_{t,j}} \sum_{\tau \in \mathcal{T}_t(j)} \xi_{\tau,j}\right| \ge \varepsilon_n \sqrt{\frac{2 \ln(2/\delta')}{n_{t,j}}} \right) \le \delta'.
\]
To ensure this bound holds simultaneously for all rounds $t \in [T]$ and all arms $j \in [K]$, we apply a {Union Bound} over the $KT$ possible events. We set the individual failure probability to $\delta' = \delta / (KT)$.
Substituting $\delta'$ into the bound:
\[
\left|\frac{1}{n_{t,j}} \sum_{\tau \in \mathcal{T}_t(j)} \xi_{\tau,j}\right| \le \varepsilon_n \sqrt{\frac{2 \ln(2KT/\delta)}{n_{t,j}}}.
\]
We may plug this quantity back into the observation noise term to get
\begin{align}
\sum_{t=1}^T \sum_{j=1}^K
\left|
\frac{w_{t,j}}{n_{t,j}}
\sum_{\tau \in \mathcal T_t(j)} \xi_{\tau,j}
\right|
&\le
\varepsilon_n \sqrt{{2 \ln\!\left(\frac{2KT}{\delta}\right)}}
\sum_{t=1}^T\sum_{j=1}^K\frac{w_{t,j}}{\sqrt{n_{t,j}}}\notag\\
&\le
\varepsilon_n \sqrt{{2 \ln\!\left(\frac{2KT}{\delta}\right)}}
\sqrt{\left(\sum_{t=1}^T\sum_{j=1}^K w_{t,j}\right)}
\sqrt{\left(\sum_{t=1}^T\sum_{j=1}^K\frac{w_{t,j}}{n_{t,j}}\right)}\label{eq:obs-noise-cs}\\
&\le
\varepsilon_n \sqrt{{2 \ln\!\left(\frac{2KT}{\delta}\right)}}
\sqrt{\alpha T \ln\!\left(\frac{4K}{\alpha\epsilon}\right)}
\sqrt{\left(\sum_{t=1}^T\sum_{j=1}^K\frac{w_{t,j}}{n_{t,j}}\right)}\label{eq:obs-noise-wbound}\\
&\le
\varepsilon_n \sqrt{{2 \ln\!\left(\frac{2KT}{\delta}\right)}}
\sqrt{\alpha T \ln\!\left(\frac{4K}{\alpha\epsilon}\right)}
\sqrt{\sum_{j=1}^K(1+\ln T)}\label{eq:obs-noise-harmonic}\\
&\le
\varepsilon_n \sqrt{{2 \ln\!\left(\frac{2KT}{\delta}\right)}}
\sqrt{\alpha T \ln\!\left(\frac{4K}{\alpha\epsilon}\right)}
\sqrt{K(1+\ln T)}\notag,
\end{align}
where Equation~\ref{eq:obs-noise-cs} is due to applying the Cauchy-Shwarz inequality, Equation~\ref{eq:obs-noise-wbound} is due to the same application of Lemma~\ref{lem:alon-graph-alpha} as Equation~\ref{eq:reg-prop-err-cs1}, and Equation~\ref{eq:obs-noise-harmonic} is due to applying the same fact about harmonic sums used in Equation~\ref{eq:reg-prop-err-harmonic}. Combining these terms yields (with probability at least $1-\delta$):
\begin{equation*}
\sum_{t=1}^T \sum_{j=1}^K
\left|
\frac{w_{t,j}}{n_{t,j}}
\sum_{\tau \in \mathcal T_t(j)} \xi_{\tau,j}
\right|
\le
\varepsilon_n \sqrt{{2 \alpha KT \ln\!\left(\frac{2KT}{\delta}\right)}
\ln\!\left(\frac{4K}{\alpha\epsilon}\right)
(1+\ln T)}.
\end{equation*}
\end{proof}
\end{lemma}

\paragraph{Proof of Theorem~\ref{theorem:CABS-C}.}
We now utilize the preceding lemmas to prove that under the assumptions outlined in Section~\ref{sec:coupled-reg}, if the feedback graph $G_t$ is deterministic with independence number no more than $\alpha$. Then, with probability $1-\delta$, Algorithm~\ref{alg:CABS-C} with choice $\gamma = \max\left\{4, \sqrt{\alpha T / \mathrm{Reg_{Sq}}(T)}\right\}$ guarantees that
\begin{equation*}
\mathbb{E}[\mathrm{Reg}_T]
\le
\widetilde O\!\left(
\sqrt{\alpha T\,\mathrm{Reg}_{\mathrm{Sq}}(T)}
+
\varepsilon_n\,\sqrt{\alpha K T}\
\right).
\end{equation*}
\begin{proof}
Combining the results of Lemmas~\ref{lem:cabsc-regret-decomp} and \ref{lem:cabsc-bias-decomp}, we get the full regret decomposition of Algorithm~\ref{alg:CABS-C}:
\begin{align*}
\mathbb{E}[\mathrm{Reg}_T]
\le
& \; T \cdot \phi(\boldsymbol{p}; \hat{\boldsymbol{\theta}}_{t}, \boldsymbol{x}_t, \boldsymbol{A}_t)
\;+\; 
\frac{\gamma}{4}\,\mathrm{Reg}_{\mathrm{Sq}}(T)\\
&\;+\; 
\sum_{t=1}^T \sum_{j=1}^K \left[
\frac{w_{t,j}}{n_{t,j}}
\sum_{\tau \in \mathcal T_t(j)} \Delta_{\tau,j}
\right]
\;+\; 
\sum_{t=1}^T \sum_{j=1}^K
\left|
\frac{w_{t,j}}{n_{t,j}}
\sum_{\tau \in \mathcal T_t(j)} \xi_{\tau,j}
\right|
\end{align*}
We may plug in the result of Lemma~\ref{lem:cabsc-dec-bound} into $\phi(\boldsymbol{p}; \hat{\boldsymbol{\theta}}_{t}, \boldsymbol{x}_t, \boldsymbol{A}_t)$ to get
\begin{align*}
\mathbb{E}[\mathrm{Reg}_T]
\le
&\;O\!\left(\frac{\alpha T \log(K\gamma)}{\gamma}\right)
\;+\; 
\frac{\gamma}{4}\,\mathrm{Reg}_{\mathrm{Sq}}(T)\\
&\;+\; 
\sum_{t=1}^T \sum_{j=1}^K \left[
\frac{w_{t,j}}{n_{t,j}}
\sum_{\tau \in \mathcal T_t(j)} \Delta_{\tau,j}
\right]
\;+\; 
\sum_{t=1}^T \sum_{j=1}^K
\left|
\frac{w_{t,j}}{n_{t,j}}
\sum_{\tau \in \mathcal T_t(j)} \xi_{\tau,j}
\right|
\end{align*}

We select $\gamma = \sqrt{\alpha T / \mathrm{Reg_{Sq}}(T)}$ (for stability we must ensure that $\gamma \ge 4$) and combine the first two terms of the decomposition to get (absorbing polylog factors)
\begin{align*}
\mathbb{E}[\mathrm{Reg}_T]
\le
&\;\widetilde O\!\left(
\sqrt{\alpha T\,\mathrm{Reg}_{\mathrm{Sq}}(T)}
\right)
\;+\; 
\sum_{t=1}^T \sum_{j=1}^K \left[
\frac{w_{t,j}}{n_{t,j}}
\sum_{\tau \in \mathcal T_t(j)} \Delta_{\tau,j}
\right]
\;+\; 
\sum_{t=1}^T \sum_{j=1}^K
\left|
\frac{w_{t,j}}{n_{t,j}}
\sum_{\tau \in \mathcal T_t(j)} \xi_{\tau,j}
\right|
\end{align*}
Applying the results of Lemmas~\ref{lem:cabsc-reg-prop-bound} and \ref{lem:cabsc-obs-noise-bound} we get
\begin{align*}
\mathbb{E}[\mathrm{Reg}_T]
\le
&\;\widetilde O\!\left(
\sqrt{\alpha T\,\mathrm{Reg}_{\mathrm{Sq}}(T)}
\right)
\;+\; 
\sqrt{\alpha T \,\ln(4K/\alpha\epsilon)\,(1+\ln T)\,\mathrm{Reg}_{\mathrm{Sq}}(T)}\\
&\;+\; 
\varepsilon_n \sqrt{{2 \alpha KT \ln\!\left(\frac{2KT}{\delta}\right)}
\ln\!\left(\frac{4K}{\alpha\epsilon}\right)
(1+\ln T)}.
\end{align*}
combining the final two terms and once again absorbing polylog factors, we get
\begin{equation*}
\mathbb{E}[\mathrm{Reg}_T]
\le
\widetilde O\!\left(
\sqrt{\alpha T\,\mathrm{Reg}_{\mathrm{Sq}}(T)}
+
\varepsilon_n\,\sqrt{\alpha K T}\
\right).
\end{equation*}
\end{proof}

\subsubsection{Bounds for $m$-Transitive Feedback Graphs}
\label{app:m-trans-feedback}
For completeness, we now state a theorem relating the independence number $\alpha$ of the feedback graph $G_t$ with a constant number of additional arms $m$ observed at each round. This result enables us to tighten existing regret bounds for both standard and contextual bandits by a factor of roughly $1/\sqrt{m}$, quantifying the direct benefit of the auxiliary feedback.

\begin{theorem}[Independence Number for Transitive $m$-Regular Feedback]\label{thm:m-trans-feedback}
Consider a bandit setting with $K$ arms where the feedback graph $G_t$ is transitive and $m$-regular (i.e., each vertex has out-degree $m$). Under these conditions, $G_t$ decomposes into $K/(m+1)$ disjoint cliques of size $m+1$, and its independence number is:
\begin{equation*}
\alpha(G_t) = \frac{K}{m+1}
\end{equation*}
\end{theorem}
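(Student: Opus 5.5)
The plan is to argue directly from transitivity and regularity. I read ``out-degree $m$'' as counting out-neighbors other than the vertex itself, since the self-loops $A_{t,(i,i)}=1$ are imposed by default. I also read ``independent'' in the usual sense for feedback graphs: a set with no edge in either direction between two of its distinct members. For a vertex $v$, write $N(v)=\{w\neq v: A_{t,(v,w)}=1\}$ and $R(v)=N(v)\cup\{v\}$, so $|N(v)|=m$ and $|R(v)|=m+1$. The statement assumes transitivity: $u\to v$ and $v\to w$ imply $u\to w$.

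\textbf{Step 1: each $R(v)$ is a bidirected clique.} Fix $v$ and take $u\in N(v)$.
\begin{itemize}
\item If $u\to w$ with $w\neq u$, then $v\to u\to w$ gives $v\to w$ by transitivity. So either $w=v$ or $w\in N(v)$.
\item Hence $N(u)\subseteq R(v)\setminus\{u\}$. The right-hand set has exactly $m$ elements, and $|N(u)|=m$ by regularity.
\item So $N(u)=R(v)\setminus\{u\}$. In particular $u\to v$, and $u$ points to every other member of $R(v)$.
\end{itemize}
Since $v$ itself points to all of $N(v)$, every ordered pair of distinct vertices of $R(v)$ is an edge. Also $R(u)=R(v)$ for every $u\in R(v)$.

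\textbf{Step 2: the sets $R(v)$ partition $[K]$.} Define $u\sim v$ iff $u\in R(v)$.
\begin{itemize}
\item Reflexivity is immediate.
\item Symmetry and transitivity follow from Step 1, because $u\in R(v)$ forces $R(u)=R(v)$.
\end{itemize}
So the distinct sets $R(v)$ are equivalence classes. Each has size $m+1$, which forces $(m+1)\mid K$; I will note that this divisibility is implicit in the statement. This gives $K/(m+1)$ disjoint cliques of size $m+1$.

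Moreover, no edge leaves a class: every out-edge of $u$ goes to $N(u)\subseteq R(u)$. Since every edge starts in some class, there are no edges between distinct classes in either direction.

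\textbf{Step 3: the independence number equals $K/(m+1)$.}
\begin{itemize}
\item Lower bound: choosing one representative per class gives an independent set of size $K/(m+1)$, since there are no inter-class edges.
\item Upper bound: any two distinct vertices in the same class are adjacent by Step 1. So an independent set contains at most one vertex per class.
\end{itemize}
Together these give $\alpha(G_t)=K/(m+1)$.

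The only delicate point is Step 1, specifically the counting that upgrades $N(u)\subseteq R(v)\setminus\{u\}$ to equality, together with consistent bookkeeping of self-loops. The rest is routine. Combined with Theorem~\ref{theorem:CABS-C}, this value of $\alpha$ yields Corollary~\ref{cor:CABS-C_linear_mfeedback}.
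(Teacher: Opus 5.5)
Your proof is correct. It differs from the paper's mainly in how you obtain the clique decomposition.

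The paper simply asserts that ``in the context of an undirected or bi-directed feedback graph, transitivity implies that the graph is a disjoint union of cliques,'' and lets $m$-regularity fix the clique size at $m+1$. It then gets $\alpha(G_t)\ge K/(m+1)$ from the Caro--Wei bound $\alpha\ge\sum_k 1/(d(k)+1)$, and the matching upper bound from ``at most one vertex per clique,'' as in your Step~3.

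You instead \emph{derive} symmetry from transitivity plus out-regularity. The counting step $N(u)\subseteq R(v)\setminus\{u\}$, with both sets of size $m$, forces $u\to v$. This fits the statement as written better. The theorem concerns a directed graph with prescribed out-degree, and for directed graphs transitivity alone does not give a union of cliques (a transitive tournament is transitive yet has no bidirected edges). So your Step~1 supplies exactly what the paper takes for granted, with the self-loop convention handled consistently.

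Your lower bound, picking one representative per class, is also more elementary than Caro--Wei. It makes explicit that there are no inter-class edges, which is precisely what Caro--Wei needs for the undirected degrees to equal $m$. The paper's route is shorter; yours is self-contained and valid in the directed setting. Your note that $(m+1)\mid K$ is implicitly assumed applies equally to the paper's statement.
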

\begin{proof}
Let $G_t = (V, E)$ be the feedback graph. By the assumption of transitivity, if $(u, v) \in E$ and $(v, w) \in E$, then $(u, w) \in E$. In the context of an undirected or bi-directed feedback graph, transitivity implies that the graph is a disjoint union of cliques. Since $G_t$ is $m$-regular, each vertex belongs to a clique of size exactly $m+1$. Consequently, the vertex set $V$ is partitioned into $N$ disjoint cliques $\{C_1, C_2, \dots, C_N\}$, where:
\begin{equation*}
    N = \frac{|V|}{m+1} = \frac{K}{m+1}
\end{equation*}
To determine the independence number $\alpha(G_t)$, we observe the following:
\begin{enumerate}
    \item \textbf{Lower Bound:} 
        By the Caro-Wei Theorem \cite{caro1979new, wei1981lower}, for any graph with vertex degrees $d(k)$, the independence number is bounded by:
        \begin{equation}\alpha(G_t) \geq \sum_{k \in V} \frac{1}{d(k) + 1} = \sum_{k=1}^K \frac{1}{m+1} = \frac{K}{m+1} \label{eq:m-trans-alpha-lower}
        \end{equation}
    \item \textbf{Upper Bound:} 
        An independent set can contain at most one vertex from any clique. Since $V$ is partitioned into exactly $\frac{K}{m+1}$ disjoint cliques, the size of the maximum independent set is bounded by the number of cliques:
        \begin{equation}
            \alpha(G_t) \leq \frac{K}{m+1} \label{eq:m-trans-alpha-upper}
        \end{equation}
\end{enumerate}
Combining inequalities in Equations~\ref{eq:m-trans-alpha-lower} and \ref{eq:m-trans-alpha-upper}, we conclude that $\alpha(G_t) = \frac{K}{m+1}$.
\end{proof}

\paragraph{Deriving Corollary~\ref{cor:CABS-C_linear_mfeedback}.}
Corollary~\ref{cor:CABS-C_linear_mfeedback} follows by substituting $\alpha=\frac{K}{m+1}$ from Theorem~\ref{thm:m-trans-feedback} into the general regret bound of Theorem~\ref{theorem:CABS-C}, and applying the standard square-loss regret bound for linear regression oracles~\cite{foster2020beyond}, $\mathrm{Reg}_{\mathrm{Sq}}(T) = O(d\log \left(\frac{T}{d}\right))$.

\subsection{Correlation Aware Bandits with Surrogates Decoupled  (CABS-D)}
\label{appendix:CABS-D}

We now present a meta bandit algorithm that achieves the minimum regret of the above two sections, i.e, $\widetilde O\Big( \min \{\sqrt{dKT \log \left(\frac{T}{d}\right)}, \sqrt{d\frac{K}{m+1}T\log \left(\frac{T}{d}\right)}+ \varepsilon_n\sqrt{\frac{K^2}{m+1}T}\Big)$. In our main section, we present this as CABS-D (Algorithm~\ref{alg:CABS-D}). We start by presenting the Algorithm~\ref{alg:CABS-D} for Correlation-Aware Bandits with Surrogates Decoupled (CABS-D) and then prove the regret bound of it. 

\subsubsection{High Level Proof Sketch}
\label{app:CABS-D-sketch}
The proof of Theorem~\ref{alg:CABS-D} establishes a best-of-both-worlds guarantee by analyzing CABS-D as an aggregation of two base experts: a bandit-feedback expert, SquareCB (Algorithm~\ref{alg:squarecb-reward}) and a graph-feedback expert, CABS-C (Algorithm~\ref{alg:CABS-C}), each instantiated over a geometric grid of learning rates to form $M = 2L$ meta-copies. The analysis proceeds by first establishing the optimistic bias, uniform boundedness, and second-moment properties of the Implicit eXploration (IX) loss estimators utilized by both experts. Next, we decompose the total expected regret into the regret of the best-performing base expert ($R_m$), an additive bias term, and a meta-learning overhead term. By bounding the second-order variance terms independently for the regimes where either the bandit or graph expert dominates, and demonstrating that the geometric grid contains an optimal scaling parameter $\gamma^{(g)}$ for each respective regime, we show that the aggregation overhead is tightly controlled. Consequently, the total expected regret asymptotically matches the minimum regret of either the pure bandit or correlation-aware surrogate expert, yielding $\tilde{O}\left(\min\{\sqrt{KT \text{Reg}_{Sq}(T)}, \sqrt{\alpha T \text{Reg}_{Sq}(T)} + \epsilon_n\sqrt{\alpha KT}\}\right)$. \\

\noindent \textbf{Regret Bound for Algorithm~\ref{alg:CABS-D} ({\sffamily{CABS-D}})}\\

\noindent\textbf{Overview and setup.}  
We follow the same algorithmic structure as in Algorithm~\ref{alg:CABS-D} (geometric gridded \(\gamma\) per base expert). For clarity we restate the essential components and set the notation used throughout the proof. Note that we utilize losses in the following analysis. There are 2 \emph{base} experts (bandit-style ($m=1$) and graph-style ($m=2$)). Each base expert \(m\in\{1,2\}\) is instantiated with a geometric grid of $L$ parameter values $\Gamma=\{\gamma^{(1)},\dots,\gamma^{(L)}\}$, with dyadic spacing $\gamma^{(g+1)}=2\gamma^{(g)}$ (or any constant ratio \(>1\)). Each pair $(m,g)$ is a distinct \emph{meta-copy}. The total number of meta-copies is $M \;=\; 2L$.\\\\
To prove Theorem~\ref{theorem:CABS-D}, we first introduce the following auxiliary lemma regarding the properties of the Implicit eXploration (IX) estimators. We continue by introducing auxiliary lemmas regarding the regret decomposition and bounding second-order terms. We combine and utilize the results of these lemmas in the final proof of Theorem~\ref{theorem:CABS-D}.

\begin{lemma}[Properties of IX Estimators] \label{lem:ix-properties}
Let $\hat\ell_{t,(1,g)}$ and $\hat\ell_{t,(2,g)}$ be the IX estimators defined in Algorithm~\ref{alg:CABS-D}. Then, the following properties are true:
\begin{enumerate}
    \item (Optimistic Bias) For every meta expert $(m,g)$ in round $t$ and conditioned on $\mathcal{F}_{t-1}$, we have
    \begin{equation*}
        \mathbb{E}\left[\hat\ell_{t,(m,g)} \mid \mathcal{F}_{t-1}\right]
        \le \sum_i p_{t,m,i}\ell_{t,i}
    \end{equation*}
    \begin{proof} We begin by showing this is true for $\hat\ell_{t,(1,g)}$ (bandit experts):
        \begin{equation*}
        \mathbb{E}\left[\hat\ell_{t,(1,g)} \mid \mathcal{F}_{t-1}\right]
          \;=\;
          \sum_{i}\frac{q_{t,i}}{q_{t,i}+\gamma^{(g)}}\, p_{t,m,i}\,\ell_{t,i}
          \le \sum_i p_{t,1,i}\ell_{t,i}.
        \end{equation*}
        We proceed similarly for $\hat\ell_{t,(2,g)}$ (graph experts). We begin by rewriting the estimator as:
        \begin{align*}
            \mathbb{E}\left[\hat\ell_{t,(2,g)} \mid \mathcal{F}_{t-1}\right]
            & \;=\;
            \sum_{k=1}^K q_{t,k}\left[\sum_{i=1}^K \frac{\ell_{t,i}\, p_{t,2,i}}{\sum_{j=1}^K q_{t,j}\, \mathbb{I}\{{A}_{t,(j,i)}=1\} + \gamma^{(g)}} \cdot \mathbb{I}\!\left\{{A}_{t,(k,i)}=1\right\}\right]\\
            & \;=\;
            \sum_{i=1}^K \frac{\ell_{t,i}\, p_{t,2,i}}{\sum_{j=1}^K q_{t,j}\, \mathbb{I}\{{A}_{t,(j,i)}=1\} + \gamma^{(g)}} \cdot \sum_{k=1}^K q_{t,k}\, \mathbb{I}\!\left\{{A}_{t,(k,i)}=1\right\}\\
            & \;\le\;
            \sum_{i=1}^K \ell_{t,i}\, p_{t,2,i},
        \end{align*}
        where the last step is due to $\frac{\sum_{k=1}^K q_{t,k}\, \mathbb{I}\{{A}_{t,(k,i)}=1\}}{\sum_{j=1}^K q_{t,j}\, \mathbb{I}\{{A}_{t,(j,i)}=1\} + \gamma^{(g)}}\le1$.
    \end{proof}
    \item (Uniform Bound) For every meta expert $(m,g)$ in round $t$, we may bound the value of $\hat\ell_{t,(m,g)}$ as
    \begin{equation*}
        \hat\ell_{t,(m,g)}
        \le \frac{1}{\gamma^{(g)}}
    \end{equation*}
    \begin{proof} For both $\hat\ell_{t,(1,g)}$ and $\hat\ell_{t,(2,g)}$, the numerators are at most $1$ and denominators are at least $\gamma^{(g)}$.
    \end{proof}
    \item (Second Moment Bounds) For every bandit meta expert $(1,g)$ in round $t$ and conditioned on $\mathcal{F}_{t-1}$, we may say
    \begin{equation*}
      \mathbb{E}\left[\hat \ell_{t,(1,g)}^2 \mid \mathcal{F}_{t-1}\right]
      \le \sum_{i=1}^K \frac{p_{t,1,i}}{q_{t,i}+\gamma^{(g)}}.
    \end{equation*}
    Likewise, for every graph meta expert $(2,g)$ in round $t$ and conditioned on $\mathcal{F}_{t-1}$, we may say
    \begin{equation*}
        \mathbb{E}\left[\hat \ell_{t,(2,g)}^2 \mid \mathcal{F}_{t-1}\right]
        \le \sum_{i=1}^K \frac{p_{t,2,i}}{\sum_{j:{A}_{t,(j,i)}=1} q_{t,j} + \gamma^{(g)}}
    \end{equation*}
    \begin{proof}
    We begin by showing this is true for $\hat\ell_{t,(1,g)}$ (bandit experts):
    \begin{align*}
      \mathbb{E}\left[\hat \ell_{t,(1,g)}^2 \mid \mathcal{F}_{t-1}\right]
      = \sum_{i=1}^K q_{t,i}\left(\frac{\ell_{t,i\,}p_{t,1,i}}{q(a)+\gamma^{(g)}}\right)^2
      \le \sum_{i=1}^K \frac{(p_{t,1,i})^2}{q_{t,i}+\gamma^{(g)}}
      \le \sum_{i=1}^K \frac{p_{t,1,i}}{q_{t,i}+\gamma^{(g)}}
    \end{align*}
    We proceed similarly for $\hat\ell_{t,(2,g)}$ (graph experts). We once again rewrite the estimator as:
    \begin{align*}
      \mathbb{E}\left[\hat \ell_{t,(2,g)}^2 \mid \mathcal{F}_{t-1}\right] 
      &= \sum_{k=1}^K q_{t,k}\left(\sum_{i=1}^K \frac{\ell_{t,i}\, p_{t,2,i}}{\sum_{j=1}^K q_{t,j}\, \mathbb{I}\{{A}_{t,(j,i)}=1\} + \gamma^{(g)}} \cdot \mathbb{I}\{{A}_{t,(k,i)}=1\}\right)^2\\
      &\le \sum_{k=1}^K q_{t,k} \sum_{i=1}^K(p_{t,2,i})^2\left(\frac{\ell_{t,i}\, \mathbb{I}\{{A}_{t,(k,i)}=1\}}{\sum_{j=1}^K q_{t,j}\, \mathbb{I}\{{A}_{t,(j,i)}=1\} + \gamma^{(g)}}\right)^2\\
      % &= \sum_{i=1}^K q_{t,i}\left(\sum_{i=1}^K \frac{\ell_{t,i}\, p_{t,2,i}}{\sum_{j:{A}_{t,(i_t,j)}=1} q_{t,j} + \gamma^{(g)}} \cdot \mathbb{I}\{{A}_{t,(i_t,i)}=1\}\right)^2\\
      &\le \sum_{i=1}^K \frac{\ell_{t,i}^2 \, p_{t,2,i}^2}{(\sum_{j=1}^K q_{t,j}\, \mathbb{I}\{{A}_{t,(j,i)}=1\} + \gamma^{(g)})^2} \sum_{k=1}^K q_{t,k}\, \mathbb{I}\!\left\{{A}_{t,(k,i)}=1\right\}\\
      &\le \sum_{i=1}^K \frac{\ell_{t,i}^2 \, p_{t,2,i}^2}{\sum_{j=1}^K q_{t,j}\, \mathbb{I}\!\left\{{A}_{t,(j,i)}=1\right\} + \gamma^{(g)}} \\
      &\le \sum_{i=1}^K \frac{p_{t,2,i}}{\sum_{j:{A}_{t,(j,i)}=1} q_{t,j} + \gamma^{(g)}}
    \end{align*}
    \end{proof}
\end{enumerate}
\end{lemma}

\begin{lemma}[Regret Decomposition]\label{lem:cabsd-regret-decomp}
We may decompose the regret of Algorithm~\ref{alg:CABS-D} as the following:
\begin{align}
\mathbb{E}\left[\mathrm{Reg}_T\right]
&\le \min_{\substack{m \in \{1,2\}}}R_{m} + \mathbb{E}\left[\frac{\ln M}{\eta_T}\right] + \mathbb{E}\left[\frac{1}{2\eta_T}\sum_{t=1}^T \eta_{t}^2 V_t\right]
+ \mathbb{E}[\mathcal{B}_T], \notag
\end{align}
where $R_m$ indicates the expected regret of expert $m$, $\mathcal{B}_T$ is an additive term to account for estimation bias.
\begin{proof}
We begin by defining the weights $w_{t,(m,g)}$ for each meta expert $(m,g)$. We additionally recall:
\begin{align*}
w_{t+1,(m,g)} = w_{t,(m,g)}\, e^{-\eta_t \hat\ell_{t,(m,g)}},\qquad
  W_{t+1} = \sum_{m=1}^{2} \sum_{g=1}^{L} w_{t,(m,g)} e^{-\eta_t \hat\ell_{t,(m,g)}}.
\end{align*}
Thus
\begin{align*}
    \frac{W_{t+1}}{W_t} = \sum_{m=1}^2\sum_{g=1}^L \mu_{t,(m,g)}\, e^{-\eta_t \hat\ell_{t,(m,g)}}
\end{align*}
We take logs on both sides of the inequality and sum over all $t=1,\dots,T$. Simplifying the resulting telescoping series we get
\begin{align}
    \ln\frac{W_{T+1}}{W_1} &= \sum_{t=1}^T \ln\!\left(\sum_{m=1}^2\sum_{g=1}^L \mu_{t,(m,g)}e^{-\eta_t \hat\ell_{t,(m,g)}}\right)\notag\\
    \ln\frac{W_{1}}{W_{T+1}} &= \sum_{t=1}^T - \ln\!\left(\sum_{m=1}^2\sum_{g=1}^L \mu_{t,(m,g)}e^{-\eta_t \hat\ell_{t,(m,g)}}\right)\label{eq:cabsd-telescope-inv}
\end{align}
Applying $e^{-x} \le 1-x+{x^2}/{2}$, we may say
\begin{align*}
    \sum_{m=1}^2\sum_{g=1}^L \mu_{t,(m,g)}e^{-\eta_t \hat\ell_{t,(m,g)}}
    &\le
    1-\eta_{t} \sum_{m=1}^2\sum_{g=1}^L \mu_{t,(m,g)}\hat\ell_{t,(m,g)} + 
    \frac{\eta_{t}^2}{2} \sum_{m=1}^2\sum_{g=1}^L \mu_{t,(m,g)}\hat\ell_{t,(m,g)}^2.
\end{align*}
Applying the inequality $-\ln(1+x) \ge -x$, we get
\begin{align}
    - \ln\!\left(\sum_{m=1}^2\sum_{g=1}^L \mu_{t,(m,g)}e^{-\eta_t \hat\ell_{t,(m,g)}}\right)
    &\ge
    \eta_{t} \sum_{m=1}^2\sum_{g=1}^L \mu_{t,(m,g)}\hat\ell_{t,(m,g)} + 
    \frac{\eta_{t}^2}{2} V_t\label{eq:cabsd-W-lbound}.
\end{align}
We may upper bound the quantity $\ln \frac{W_1}{W_{T+1}}$ using the knowledge that $W_1 = M$ and \\$W_{T+1} \ge \exp\!\left(-\min_{m,g}\sum_{s=1}^T \eta_{s}\hat\ell_{s,(m,g)}\right)$:
\begin{align}
    \ln \frac{W_1}{W_{T+1}}
    &\le \ln M + \min_{\substack{m \in \{1,2\} \\ g \in [L]}} \sum_{s=1}^T \eta_{s}\hat\ell_{s,(m,g)}\label{eq:cabsd-W-ubound}
\end{align}
Plugging Equations~\ref{eq:cabsd-W-lbound} and \ref{eq:cabsd-W-ubound} into Equation~\ref{eq:cabsd-telescope-inv} yields
\begin{align}
    \sum_{t=1}^T\eta_{t} \sum_{m=1}^2\sum_{g=1}^L \mu_{t,(m,g)}\hat\ell_{t,(m,g)} + 
    \sum_{t=1}^T\frac{\eta_{t}^2}{2} V_t
    &\le \ln M + \min_{\substack{m \in \{1,2\} \\ g \in [L]}} \sum_{s=1}^T \eta_{s}\hat\ell_{s,(m,g)}\notag\\
    \sum_{t=1}^T\eta_{t} \sum_{m=1}^2\sum_{g=1}^L \mu_{t,(m,g)}\hat\ell_{t,(m,g)} -
    \min_{\substack{m \in \{1,2\} \\ g \in [L]}} \sum_{s=1}^T \eta_{s}\hat\ell_{s,(m,g)}
    &\le \ln M + \sum_{t=1}^T\frac{\eta_{t}^2}{2} V_t \label{eq:cabsd-init-decomp}
\end{align}
Since we choose $n_t$ to be non-increasing in $t$ (since $\Delta_t$ from Algorithm~\ref{alg:CABS-D} is non-decreasing), we have $\eta_t \ge \eta_T$. Therefore, we can say
\begin{align}
    &\sum_{t=1}^T \eta_t \sum_{m=1}^2\sum_{g=1}^L \mu_{t,(m,g)}\hat\ell_{t,(m,g)}  - \min_{\substack{m \in \{1,2\} \\ g \in [L]}}\sum_{s=1}^T \eta_s \hat\ell_{t,(m,g)}\notag \\
    &\le
    \eta_T\left(\sum_{t=1}^T\sum_{m=1}^2\sum_{g=1}^L \mu_{t,(m,g)}\hat\ell_{t,(m,g)}  - \min_{\substack{m \in \{1,2\} \\ g \in [L]}}\sum_{s=1}^T \hat\ell_{t,(m,g)}\right)\label{eq:cabsd-left-times-eta_T}
\end{align}
Combining Equation~\ref{eq:cabsd-left-times-eta_T} with Equation~\ref{eq:cabsd-init-decomp} and dividing by $\eta_T$ gives us the \textit{master inequality}:
\begin{align}
\sum_{t=1}^T\sum_{m=1}^2\sum_{g=1}^L \mu_{t,(m,g)}\hat\ell_{t,(m,g)}  - \min_{\substack{m \in \{1,2\} \\ g \in [L]}}\sum_{s=1}^T\hat\ell_{t,(m,g)}
&\le \frac{\ln M}{\eta_T} + \frac{1}{2\eta_T}\sum_{t=1}^T \eta_{t}^2 V_t
\end{align}
We may take expectation over the master inequality to obtain
\begin{align}
\mathbb{E}\left[\sum_{t=1}^T\sum_{m=1}^2\sum_{g=1}^L \mu_{t,(m,g)}\hat\ell_{t,(m,g)}\right]  - \mathbb{E}\left[\min_{\substack{m \in \{1,2\} \\ g \in [L]}}\sum_{s=1}^T\hat\ell_{t,(m,g)}\right]
&\le \mathbb{E}\left[\frac{\ln M}{\eta_T}\right] + \mathbb{E}\left[\frac{1}{2\eta_T}\sum_{t=1}^T \eta_{t}^2 V_t\right].\label{eq:cabsd-expectation-master-ineq}
\end{align}
The first term on the left-hand side of the inequality is the \textit{weighted sum of loss estimators}. We relate this weighted sum to the \textit{actual observed loss} by decomposing the expected cumulative observed loss $\mathbb{E}\left[\sum_{t=1}^T\ell_{t,i_t}\right]$ into $\mathbb{E}\left[\sum_{t=1}^T\sum_{m=1}^2\sum_{g=1}^L \mu_{t,(m,g)}\hat\ell_{t,(m,g)}\right]$ and an additive expected bias term $\mathbb{E}[\mathcal{B}_T]$:
\begin{align}
    \mathbb{E}\left[\sum_{t=1}^T\ell_{t,i_t}\right] = \mathbb{E}\left[\sum_{t=1}^T\sum_{m=1}^2\sum_{g=1}^L \mu_{t,(m,g)}\hat\ell_{t,(m,g)}\right] + \mathbb{E}[\mathcal{B}_T] \label{eq:cabsd-mixbias-decomp}
\end{align}
Substituting Equation~\ref{eq:cabsd-mixbias-decomp} into Equation~\ref{eq:cabsd-expectation-master-ineq},
\begin{align}
\mathbb{E}\left[\sum_{t=1}^T\ell_{t,i_t}\right] 
- \mathbb{E}[\mathcal{B}_T]  
- \mathbb{E}\left[\min_{\substack{m \in \{1,2\} \\ g \in [L]}}\sum_{s=1}^T\hat\ell_{t,(m,g)}\right]
&\le \mathbb{E}\left[\frac{\ln M}{\eta_T}\right] + \mathbb{E}\left[\frac{1}{2\eta_T}\sum_{t=1}^T \eta_{t}^2 V_t\right]\notag\\
\mathbb{E}\left[\sum_{t=1}^T\ell_{t,i_t}\right] - L^*
- \mathbb{E}[\mathcal{B}_T]  
- \left(\mathbb{E}\left[\min_{\substack{m \in \{1,2\} \\ g \in [L]}}\sum_{s=1}^T\hat\ell_{t,(m,g)}\right] - L^*\right)
&\le \mathbb{E}\left[\frac{\ln M}{\eta_T}\right] + \mathbb{E}\left[\frac{1}{2\eta_T}\sum_{t=1}^T \eta_{t}^2 V_t\right]\notag\\
\mathbb{E}\left[\sum_{t=1}^T\ell_{t,i_t}\right] - L^*
- \mathbb{E}[\mathcal{B}_T]  
- \left(\mathbb{E}\left[\min_{\substack{m \in \{1,2\}}}\sum_{s=1}^T\sum_{i=1}^K p_{t,m,i}\ell_{t,i}\right] - L^*\right)
&\le \mathbb{E}\left[\frac{\ln M}{\eta_T}\right] + \mathbb{E}\left[\frac{1}{2\eta_T}\sum_{t=1}^T \eta_{t}^2 V_t\right] \label{eq:cabsd-regret-decomp-prop1}\\
\mathbb{E}\left[\mathrm{Reg}_T\right]
- \mathbb{E}[\mathcal{B}_T]  
- \min_{\substack{m \in \{1,2\}}}R_{m}
&\le \mathbb{E}\left[\frac{\ln M}{\eta_T}\right] + \mathbb{E}\left[\frac{1}{2\eta_T}\sum_{t=1}^T \eta_{t}^2 V_t\right].\notag
\end{align}
Where $L^*$ is the loss from following the optimal policy. Equation~\ref{eq:cabsd-regret-decomp-prop1} is due to applying the \textit{optimistic bias} property from Lemma~\ref{lem:ix-properties}. Rearranging terms gives us the final regret decomposition
\begin{align}
\mathbb{E}\left[\mathrm{Reg}_T\right]
&\le \min_{\substack{m \in \{1,2\}}}R_{m} + \mathbb{E}\left[\frac{\ln M}{\eta_T}\right] + \mathbb{E}\left[\frac{1}{2\eta_T}\sum_{t=1}^T \eta_{t}^2 V_t\right]
+ \mathbb{E}[\mathcal{B}_T]. \notag
\end{align}
\end{proof}
\end{lemma}

\begin{lemma}[Bounding Second-Order Terms]\label{lem:cabsd-second-order}
Suppose that the feedback graph $G_t$ associated with the graph feedback expert ($m=1$) is strongly observable, and has independence number no greater than $\alpha$. If we assume the geometric grid of $\gamma$-values is of size $L=\ln T$ and that $q_{t,i} \ge \epsilon$, we may bound $V_t$ independently for each expert such that 
\begin{align}
    \mathbb{E}\left[V_t\right] \le \sum_{m=1}^2
    \sum_{g=1}^L
\sum_{i=1}^K \left(\frac{q_{t,i}}{Q_{t,m,i} + \gamma^{(g)}}\right) \le
\begin{cases}
    2K \ln T, & \mathrm{if}\; m=1\; \mathrm{dominates} \\
    2\alpha \ln\!\left(\frac{4T}{\alpha \epsilon} \right)\ln\!\left(T \right),  & \mathrm{if}\; m=2\; \mathrm{dominates}
\end{cases},
\end{align}
where 
\begin{align}
Q_{t,m,i} = 
\begin{cases}
q_{t,i},  &\textrm{if} \; m=1\\
\sum_{j=1}^K q_{t,j} \cdot \mathbb{I}\{A_{t,(j,i)}=1\}, & \textrm{if} \; m=2
\end{cases}\notag.
\end{align}
\begin{proof}
We begin by proving the statement for when $m=1$ (bandit feedback expert) dominates. Specifically, in this case we assume the regret of expert $m=1$ consistently out-performs that of expert $m=2$, thus the $\mathbb{E}[V_t]$ term associated with $m=1$ dominates. We begin with the definition of $V_t$ with fixed $m=1$
\begin{align}
\mathbb{E}\left[V_t\right] &= \sum_{m=1}^2\sum_{g=1}^L \mu_{t,(1,g)} \mathbb{E}\left[\hat\ell_{t,(1,g)}^2\right]\notag\\
&\le  \sum_{m=1}^2\sum_{g=1}^L \mu_{t,(1,g)} \sum_{i=1}^K \frac{p_{t,1,i}}{q_{t,i}+\gamma^{(g)}} \notag\\
&\le  \sum_{m=1}^2\sum_{g=1}^L \sum_{i=1}^K \frac{q_{t,i}}{q_{t,i}+\gamma^{(g)}}\label{cabsd-m1-p2q}\\
&\le  \sum_{m=1}^2\sum_{g=1}^L \sum_{i=1}^K 1\notag\\
&\le  2K\ln T\notag,
\end{align}
where Equation~\ref{cabsd-m1-p2q} is a result of applying the definition of $q_{t,i}$. We continue by proving the statement for when $m=2$ (graph feedback dominates). Using a similar line of reasoning to the case when $m=1$ dominates, we begin with the definition of $V_t$ with fixed $m=2$
\begin{align}
\mathbb{E}\left[V_t\right] &= \sum_{m=1}^2\sum_{g=1}^L \mu_{t,(2,g)} \mathbb{E}\left[\hat\ell_{t,(2,g)}^2\right]\notag\\
&\le  \sum_{m=1}^2\sum_{g=1}^L \mu_{t,(2,g)} \sum_{i=1}^K \frac{p_{t,2,i}}{\sum_{j:{A}_{t,(j,i)}=1} q_{t,j}+\gamma^{(g)}} \notag\\
&\le  \sum_{m=1}^2\sum_{g=1}^L \sum_{i=1}^K \frac{q_{t,i}}{\sum_{j:{A}_{t,(j,i)}=1} q_{t,j}+\gamma^{(g)}}\label{cabsd-m2-p2q}\\
&\le  \sum_{m=1}^2\sum_{g=1}^L \sum_{i=1}^K \frac{q_{t,i}}{\sum_{j:{A}_{t,(j,i)}=1} q_{t,j}}\notag\\
&\le  \sum_{m=1}^2\sum_{g=1}^L \sum_{i=1}^K \alpha \ln\!\left(\frac{4T}{\alpha \epsilon}\right)\label{cabsd-m2-alpha}\\
&\le  2\alpha \ln\!\left(\frac{4T}{\alpha \epsilon} \right) \ln\!\left(T \right)\notag,
\end{align}
where Equation~\ref{cabsd-m2-p2q} is a result of applying the definition of $q_{t,i}$, and Equation~\ref{cabsd-m2-alpha} is due to Lemma~\ref{lem:alon-graph-alpha}.
\end{proof}
\end{lemma}
\paragraph{Proof of Theorem~\ref{theorem:CABS-D}.}
We now use the preceding lemmas to prove that if there are $M=2$ experts: one bandit-feedback expert and one graph-feedback expert (tuned for known independence number $\alpha$), then Algorithm~\ref{alg:CABS-D} satisfies
\begin{align*}
&\mathbb{E}\left[\mathrm{Reg_{Sq}}(T)\right] \\
&\le O\!\left(\min\left\{R_1 + \ln\!\left(KT\ln T\right)\sqrt{KT\ln M \ln T}, \right.\right.
\left.\left. R_2 + \ln\!\left( \alpha T \ln(T/\alpha\epsilon)\ln T\right)\sqrt{\alpha T \ln(T/\alpha\epsilon) \ln M \ln T}\right\}\right)
\end{align*}

Using $R_1 = \widetilde O(\sqrt{KT \, \mathrm{Reg_{Sq}}(T)})$ from Theorem~\ref{theorem:squarecb} and $R_2 = \widetilde O(\sqrt{\alpha T \, \mathrm{Reg_{Sq}}(T)} + \sqrt{\alpha KT}\,\varepsilon_n)$ from Theorem~\ref{theorem:CABS-C}, we obtain the regret bound
\[
\widetilde O\!\left( \min \{\sqrt{KT \, \mathrm{Reg_{Sq}}(T)}, \sqrt{\alpha T \, \mathrm{Reg_{Sq}}(T)} + \sqrt{\alpha KT}\,\varepsilon_n\}\right).
\]
\begin{proof}
We restate the decomposition derived in Lemma~\ref{lem:cabsd-regret-decomp},
\begin{align}
\mathbb{E}\left[\mathrm{Reg}_T\right]
&\le \min_{\substack{m \in \{1,2\}}}R_{m} + \mathbb{E}\left[\frac{\ln M}{\eta_T}\right] + \mathbb{E}\left[\frac{1}{2\eta_T}\sum_{t=1}^T \eta_{t}^2 V_t\right]
+ \mathbb{E}[\mathcal{B}_T]. \label{eq:cabsd-regret-decomp}
\end{align}
We begin by bounding $\mathbb{E}[\mathcal{B}_T]$. Rearranging Equation~\ref{eq:cabsd-mixbias-decomp} we get
\begin{align}
\mathbb{E}[\mathcal{B}_T]
&\;=\; \mathbb{E}\left[\sum_{t=1}^T\ell_{t,i_t}\right] - \mathbb{E}\left[\sum_{t=1}^T\sum_{m=1}^2\sum_{g=1}^L \mu_{t,(m,g)}\hat\ell_{t,(m,g)}\right]\notag\\
&\;=\; \sum_{t=1}^T\sum_{i=1}^K q_{t,i}\ell_{t,i} - \sum_{t=1}^T\sum_{m=1}^2\sum_{g=1}^L \mu_{t,(m,g)}\sum_{i=1}^K p_{t,m,i}\ell_{t,i} \frac{Q_{t,m,i}}{Q_{t,m,i} + \gamma^{(g)}} \label{eq:cabsd-expanded-bias}
\end{align}
We may further expand Equation~\ref{eq:cabsd-expanded-bias}:
\begin{align}
&\sum_{t=1}^T\sum_{i=1}^K q_{t,i}\ell_{t,i} - \sum_{t=1}^T\sum_{m=1}^2\sum_{g=1}^L \mu_{t,(m,g)}\sum_{i=1}^K p_{t,m,i}\ell_{t,i} \frac{Q_{t,m,i}}{Q_{t,m,i} + \gamma^{(g)}} \notag\\
&\;=\; \sum_{t=1}^T\sum_{m=1}^2\sum_{g=1}^L \mu_{t,(m,g)}\sum_{i=1}^K p_{t,m,i}\ell_{t,i}  - \sum_{t=1}^T\sum_{m=1}^2\sum_{g=1}^L \mu_{t,(m,g)}\sum_{i=1}^K p_{t,m,i}\ell_{t,i} \frac{Q_{t,m,i}}{Q_{t,m,i} + \gamma^{(g)}} \notag\\
&\;=\; \sum_{t=1}^T\sum_{m=1}^2\sum_{g=1}^L \mu_{t,(m,g)}\sum_{i=1}^K p_{t,m,i}\ell_{t,i} \left(1-\frac{Q_{t,m,i}}{Q_{t,m,i} + \gamma^{(g)}}\right) \notag\\
&\;=\; \sum_{t=1}^T\sum_{m=1}^2\sum_{g=1}^L \mu_{t,(m,g)}\sum_{i=1}^K p_{t,m,i}\ell_{t,i} \left(\frac{\gamma^{(g)}}{Q_{t,m,i} + \gamma^{(g)}}\right) \notag\\
&\;\le\; \sum_{t=1}^T\sum_{m=1}^2\sum_{g=1}^L \mu_{t,(m,g)}\, \gamma^{(g)}
\sum_{i=1}^K \left(\frac{p_{t,m,i}}{Q_{t,m,i} + \gamma^{(g)}}\right)\notag\\
&\;\le\; \sum_{t=1}^T\sum_{m=1}^2\sum_{g=1}^L \, \gamma^{(g)}
\sum_{i=1}^K \left(\frac{q_{t,i}}{Q_{t,m,i} + \gamma^{(g)}}\right) \label{eq:cabsd-bias-decomp}
\end{align}
Next, we bound $\mathbb{E}\left[\frac{\ln M}{\eta_T}\right] + \mathbb{E}\left[\frac{1}{2\eta_T}\sum_{t=1}^T \eta_{t}^2 V_t\right]$.
\begin{align}
    \frac{\ln M}{\eta_T} + \frac{1}{2\eta_T}\sum_{t=1}^T \eta_{t}^2 V_t
    &= \frac{\ln M}{\eta_T} + \frac{1}{2\eta_T}\sum_{t=1}^T \frac{\ln M}{1+\Delta_{t-1}} V_t\notag\\
    &= \frac{\ln M}{\eta_T} \left(1 + \frac{1}{2}\sum_{t=1}^T \frac{V_t}{1+\Delta_{t-1}}\right)\notag\\
    &= \sqrt{\ln M}\sqrt{1+\Delta_{T-1}} \left(1 + \frac{1}{2}\sum_{t=1}^T \frac{V_t}{1+\Delta_{t-1}}\right)\notag\\
    &\le \sqrt{\ln M}\sqrt{1+\Delta_{T-1}} \left(1 + \sum_{t=1}^T \frac{V_t}{\Delta_{t}}\right)\label{eq:cabsd-Delta_t-bound}\\
    &\le \sqrt{\ln M}\sqrt{1+\Delta_{T-1}} \left(1 + \sum_{t=1}^T \frac{V_t}{\sum_{s=1}^t V_s}\right)\notag\\
    &\le \sqrt{\ln M}\sqrt{1+\Delta_{T-1}} \left(2 + \ln\!\left(\sum_{t=1}^T V_t\right) \right)\label{eq:cabsd-ln-bound}\\
    &\le \sqrt{\ln M}\sqrt{1+\Delta_{T-1}} \left(2 + \ln\!\left(\Delta_{T}\right) \right)\notag\\
    &\le \ln(\Delta_T)\sqrt{\ln M} + 2\sqrt{\ln M} + \ln(\Delta_T)\sqrt{\Delta_{T-1}\ln M} + 2\sqrt{\Delta_{T-1}\ln M}\notag\\
    &\le \ln(\Delta_T)\sqrt{\Delta_{T-1}\ln M} + c_1 \label{eq:cabsd-nonbias-bound},
\end{align}
where Equation~\ref{eq:cabsd-Delta_t-bound} is due to $\Delta_t = V_t + \Delta_{t-1} \le 1 + \Delta_{t-1}$, and Equation~\ref{eq:cabsd-ln-bound} is due to the following fact: $\sum_{t=1}^T \frac{x_t}{\sum_{s=1}^t x_s} \le 1 + \ln\!\left(\sum_{t=1}^T x_t\right)$. For cleaner notation, in the Equation~\ref{eq:cabsd-nonbias-bound} we absorb all dominated terms into the constant $c_1$. We continue by plugging Equation~\ref{eq:cabsd-nonbias-bound} and Equation~\ref{eq:cabsd-bias-decomp} into Equation~\ref{eq:cabsd-regret-decomp} to get
\begin{align}
&\mathbb{E}\left[\mathrm{Reg}_T\right]\notag\\
&\le \min_{\substack{m \in \{1,2\}}}R_{m} + 
\mathbb{E}[\ln(\Delta_T)]\sqrt{\mathbb{E}[\Delta_{T-1}]\ln M}
+ \sum_{t=1}^T\sum_{m=1}^2\sum_{g=1}^L \, \gamma^{(g)}
\sum_{i=1}^K \left(\frac{q_{t,i}}{Q_{t,m,i} + \gamma^{(g)}}\right) + c_1\notag\\
&\le \min_{\substack{m \in \{1,2\}}}R_{m} + \underbrace{\ln\!\left(\sum_{t=1}^T V_t\right)\sqrt{\left(\sum_{t=1}^{T-1} V_t\right)\ln M}
+ \sum_{t=1}^T\sum_{m=1}^2\sum_{g=1}^L\, \gamma^{(g)}
\sum_{i=1}^K \left(\frac{q_{t,i}}{Q_{t,m,i} + \gamma^{(g)}}\right) + c_1}_{\text{CABS-D overhead}}\label{eq:cabsd-bias-decomp-plugged}
\end{align}
Since the regret contributions of both experts cannot be simultaneously large, according to the first term if $R_1 \le R_2$ then the analysis ensures that terms associated with $R_2$ do not accumulate (the same applies if $R_1 \ge R_2$). This allows us to divide the analysis of the \textit{CABS-D overhead} into two cases: 1) where $m=1$ (bandit feedback) is the best expert, and 2) where $m=2$ (graph feedback) is the best expert.  We begin by examining the first case:\\\\
When $m=1$:
\begin{align}
&\ln\!\left(\sum_{t=1}^T V_t\right)\sqrt{\left(\sum_{t=1}^{T-1} V_t\right)\ln M}
+ \sum_{t=1}^T\sum_{m=1}^2\sum_{g=1}^L\, \gamma^{(g)}
\sum_{i=1}^K \left(\frac{q_{t,i}}{Q_{t,1,i} + \gamma^{(g)}}\right) + c_1\notag\\
&\le \ln\!\left(\sum_{t=1}^T 2K\ln T\right)\sqrt{\left(\sum_{t=1}^{T-1} 2K\ln T\right)\ln M}
+ \sum_{t=1}^T2K\sum_{g=1}^L\, \gamma^{(g)} + c_1\label{eq:cabsd-m1-v2k}\\
&\le \ln\!\left(2KT\ln T\right)\sqrt{2K(T-1)\ln M \ln T}
+ 2KT\sum_{g=1}^L\, \gamma^{(g)} + c_1,\notag
\end{align}
where Equation~\ref{eq:cabsd-m1-v2k} is due to Lemma~\ref{lem:cabsd-second-order}. Recall that the algorithm aggregates a set of experts parameterized by a geometric grid $\Gamma$. By the properties of CABS-D (Lemma~\ref{lem:cabsd-regret-decomp}), the total regret is bounded by the regret of the best fixed expert in the grid, plus an additive expert-dependent overhead. Therefore, we bound the regret by choosing the specific $\gamma^{(g)} \in \Gamma$ that minimizes the bound for the specific feedback structure. Because $\Gamma$ is a dyadic grid of size $L = \ln T$, for any optimal tuning parameter $\gamma^*$, there exists a grid point $\gamma^{(g)}$ such that $\gamma^{(g)} \approx \gamma^*$ up to a small constant factor. When $\gamma^{(g)} = \frac{1}{\sqrt{KT}}$, we get
\begin{align}
&\ln\!\left(\sum_{t=1}^T V_t\right)\sqrt{\left(\sum_{t=1}^{T-1} V_t\right)\ln M}
+ \sum_{t=1}^T\sum_{m=1}^2\sum_{g=1}^L\, \gamma^{(g)}
\sum_{i=1}^K \left(\frac{q_{t,i}}{Q_{t,1,i} + \gamma^{(g)}}\right) + c_1\notag\\
&\le \ln\!\left(2KT\ln T\right)\sqrt{2K(T-1)\ln M \ln T}
+ 2\ln T \sqrt{KT} + c_1\notag\\
&\le O\!\left(\ln\!\left(KT\ln T\right)\sqrt{KT\ln M \ln T}\right), \label{eq:cabsd-m1-overhead}
\end{align}
Similarly, when $m=2$:
\begin{align}
&\ln\!\left(\sum_{t=1}^T V_t\right)\sqrt{\left(\sum_{t=1}^{T-1} V_t\right)\ln M}
+ \sum_{t=1}^T\sum_{m=1}^2\sum_{g=1}^L\, \gamma^{(g)}
\sum_{i=1}^K \left(\frac{q_{t,i}}{Q_{t,2,i} + \gamma^{(g)}}\right) + c_1\notag\\
&\le \ln\!\left(\sum_{t=1}^T 2\alpha \ln\!\left(\frac{4T}{\alpha \epsilon} \right) \ln T\right)\sqrt{\left(\sum_{t=1}^{T-1} 2\alpha \ln\!\left(\frac{4T}{\alpha \epsilon} \right)\ln T\right)\ln M}
+ \sum_{t=1}^T2\alpha \ln\!\left(\frac{4T}{\alpha \epsilon} \right)\sum_{g=1}^L\, \gamma^{(g)} + c_1\notag\\
&\le \ln\!\left( 2\alpha T \ln\!\left(\frac{4T}{\alpha \epsilon} \right)\ln T\right)\sqrt{\left(2\alpha (T-1) \ln\!\left(\frac{4T}{\alpha \epsilon} \right)\ln T \ln M\right)}
+ 2\alpha T \ln\!\left(\frac{4T}{\alpha \epsilon} \right)\sum_{g=1}^L\, \gamma^{(g)} + c_1\notag.
\end{align}
By setting  $\gamma^{(g)} = \frac{1}{\sqrt{\alpha T}}$, we get
\begin{align}
&\ln\!\left(\sum_{t=1}^T V_t\right)\sqrt{\left(\sum_{t=1}^{T-1} V_t\right)\ln M}
+ \sum_{t=1}^T\sum_{m=1}^2\sum_{g=1}^L\, \gamma^{(g)}
\sum_{i=1}^K \left(\frac{q_{t,i}}{Q_{t,2,i} + \gamma^{(g)}}\right) + c_1\notag\\
&\le \ln\!\left( 2\alpha T \ln\!\left(\frac{4T}{\alpha \epsilon} \right)\ln T\right)\sqrt{\left(2\alpha (T-1) \ln\!\left(\frac{4T}{\alpha \epsilon} \right)\ln M \ln T\right)}
+ 2\sqrt{\alpha T} \ln\!\left(\frac{4T}{\alpha \epsilon} \right) \ln T + c_1\notag\\
&\le O\!\left(\ln\!\left( \alpha T \ln(T/\alpha\epsilon)\ln T\right)\sqrt{\alpha T \ln(T/\alpha\epsilon) \ln M \ln T}\right). \label{eq:cabsd-m2-overhead}
\end{align}
Combining the two cases (from Equations~\ref{eq:cabsd-m1-overhead} and \ref{eq:cabsd-m2-overhead}) and plugging them into Equation~\ref{eq:cabsd-bias-decomp-plugged} yields

\begin{align*}
&\mathbb{E}\left[\mathrm{Reg_{Sq}}(T)\right] \\
&\le O\!\left(\min\left\{R_1 + \ln\!\left(KT\ln T\right)\sqrt{KT\ln M \ln T}, \right.\right.
\\&\quad\left.\left. R_2 + \ln\!\left( \alpha T \ln(T/\alpha\epsilon)\ln T\right)\sqrt{\alpha T \ln(T/\alpha\epsilon) \ln M \ln T}\right\}\right)
\end{align*}
If we use Theorem~\ref{theorem:squarecb} to obtain $R_1$ (the regret of the bandit feedback expert) and we use Theorem~\ref{theorem:CABS-C} to obtain $R_2$ (the regret of the graph feedback expert), we may plug them in to obtain (after absorbing polylogs)
\begin{align}
\mathbb{E}[\mathrm{Reg}_T] &\le \widetilde O\!\left(\min\left\{\sqrt{KT \, \mathrm{Reg_{Sq}}(T)} + \sqrt{KT}, \sqrt{\alpha T \, \mathrm{Reg_{Sq}}(T)} + \sqrt{\alpha KT}\,\varepsilon_n + \sqrt{\alpha T}\right\}\right)\notag\\
&\le \widetilde O\!\left( \min \{\sqrt{KT \, \mathrm{Reg_{Sq}}(T)}, \sqrt{\alpha T \, \mathrm{Reg_{Sq}}(T)} + \sqrt{\alpha KT}\,\varepsilon_n\}\right).
\end{align}
\end{proof}

\paragraph{Deriving Corollary~\ref{cor:CABS-D_linear_mfeedback}.} \label{app:CABS-D_linear_mfeedback}
Corollary~\ref{cor:CABS-D_linear_mfeedback} follows by substituting $\alpha=\frac{K}{m+1}$ from Theorem~\ref{thm:m-trans-feedback} into the general regret bound of Theorem~\ref{theorem:CABS-D}, and applying the standard square-loss regret bound for linear regression oracles~\cite{foster2020beyond}, $\mathrm{Reg}_{\mathrm{Sq}}(T) = O(d\log \left(\frac{T}{d}\right))$.

\end{document}